\colorlet{DarkGreen}{green!50!Black}
\newcommand{\sign}{\mathrm{sign}}
\newcommand{\myspan}{\mathrm{span}}
\newcommand*{\eqdef}{\stackrel{\textup{def}}{=}}
\newcommand{\cN}{{\cal N}}
\newcommand{\R}{{\mathbb R}}
\newcommand{\tw}{\tilde{w}}
\newcommand{\diag}{\mathrm{diag}}
\newcommand{\E}{{\bf E}}
\renewcommand{\Pr}{{\bf Pr}}
\newcommand{\co}{\mathrm{co}}
\renewcommand{\Re}{{\R}}
\newcommand{\lv}{\lVert}
\newcommand{\rv}{\rVert}
\newtheorem{theorem}{Theorem}
\newtheorem{lemma}[theorem]{Lemma}
\newtheorem{definition}[theorem]{Definition}
\newtheorem{remark}[theorem]{Remark}
\DeclareMathOperator{\lambdamax}{\lambda_{\text max}}
\newcommand{\astrut}{\rule[-.3\baselineskip]{0pt}{\baselineskip}}
\begin{document}

%JMLR \title{The Dynamics of Sharpness-Aware Minimization: \\
%     Bouncing Across Ravines \\ 
% and Drifting Towards Wide Minima \\}

% \author{\name Peter L. Bartlett\thanks{Also affiliated with University of California, Berkeley.} \email peterbartlett@google.com \\
%        \name Philip M. Long \email plong@google.com \\
%        \name Olivier Bousquet \email obousquet@google.com \\
%        \addr Google \\
%        1600 Amphitheatre Parkway \\
%        Mountain View, CA 94040 \\
%        }

\title{\bf The Dynamics of Sharpness-Aware Minimization: \\
    Bouncing Across Ravines \\ 
and Drifting Towards Wide Minima \\}

\author{Peter L. Bartlett\thanks{Also affiliated with University of California, Berkeley.},\hspace{2.5pt}  Philip M. Long and Olivier Bousquet \\
       Google \\
       1600 Amphitheatre Parkway \\
       Mountain View, CA 94040 \\
       $\{$peterbartlett,plong,obousquet$\}$\@google.com \\
       }
       
%JMLR \editor{}

%arxiv-start
\date{}
%arxiv-end

\maketitle

\begin{abstract}%   <- trailing '%' for backward compatibility of .sty file
We consider Sharpness-Aware Minimization (SAM), a gradient-based optimization method for deep networks that has exhibited performance improvements on image and language prediction problems. 
We show that when SAM is applied with a
convex quadratic objective, for most random initializations it converges to a cycle that oscillates between either side of the minimum in the direction with the largest curvature, and we provide bounds on the rate of convergence.

In the non-quadratic case, we show that such oscillations effectively perform gradient descent, with a smaller step-size, on the spectral norm of the Hessian.
In such cases,
SAM's update 
may be regarded as
a third derivative---the derivative of the Hessian in the leading eigenvector direction---that encourages drift toward wider minima.
\end{abstract}

%JMLR
% \begin{keywords}
% Non-convex optimization, wide minima, sharpness-aware minimization.
% \end{keywords}

\section{Introduction}

The broad practical impact of deep learning has heightened interest in many of its surprising characteristics: simple gradient methods applied to deep neural networks seem to efficiently optimize nonconvex criteria, reliably giving a near-perfect fit to training data, but exhibiting good predictive accuracy nonetheless 
% (see, for example,~\cite{bmr-dlasp-21}).
\citep[see][]{bmr-dlasp-21}.
Optimization methodology is widely believed to affect statistical performance by imposing some kind of implicit regularization, and there has been considerable effort devoted to understanding the behavior of optimization methods and the nature of solutions that they find. For instance,~\citet{barrett2020implicit} and \citet{smith2021origin} show that discrete-time gradient descent and stochastic gradient descent can be viewed as gradient flow methods applied to penalized losses that encourage smoothness, and~\citet{soudry2018implicit} amd \citet{azulay2021implicit} identify the implicit regularization imposed by gradient flow in specific examples, including linear networks.
%[ADD OTHER IMPLICIT BIAS REFS? \cite{zwbgfk-birfslsq-21}? Earlier work on SGD and GD with early stopping as regularization?]

We consider {\em Sharpness-Aware Minimization} (SAM), a recently introduced~\citep{foret2021sharpnessaware} gradient optimization method that has exhibited substantial improvements in prediction performance for deep networks applied to image classification~\citep{foret2021sharpnessaware} and NLP~\citep{bahri-etal-2022-sharpness} problems.

In introducing SAM, Foret {\it et al\/} motivate it 
% as approximately solving a 
using a
minimax optimization problem
  \begin{equation}\label{eqn:SAM-idealized}
    \min_w \max_{\|\epsilon\|\le\rho} \ell(w+\epsilon),
  \end{equation}
where $\ell:\Re^d\to\Re$ is an empirical loss defined on the parameter space $\Re^d$, $\|\cdot\|$ is the Euclidean norm on the parameter space, and $\rho$ is a scale parameter. By viewing the difference
  \[
    \max_{\|\epsilon\|\le\rho}
       \ell(w+\epsilon) - \ell(w)
  \]
as a measure of the sharpness of the empirical loss $\ell$ at the parameter value $w$, the criterion in~\eqref{eqn:SAM-idealized} allows a trade-off between the empirical loss and the sharpness,
    \[
      \max_{\|\epsilon\|\le\rho} \ell(w+\epsilon)
       =  \ell(w) + \underbrace{\max_{\|\epsilon\|\le\rho}
       \ell(w+\epsilon) - \ell(w)}_{\text{sharpness}}.
    \]
In practice, SAM works with a simplification based on gradient measurements, starting with an initial parameter vector $w_0\in\Re^d$ and updating the parameters at iteration $t$ via
    \begin{equation}\label{e:sam}
      w_{t+1} = w_t-\eta\nabla \ell\left(w_t + \rho\frac{\nabla
        \ell(w_t)}{\|\nabla \ell(w_t)\|}\right),
    \end{equation}
where $\eta$ is a step-size parameter. Our goal in this paper is to understand the nature of the solutions that the SAM updates~\eqref{e:sam} lead to.

In Sections~\ref{s:quad} and~\ref{s:proof_quadratic}, we consider SAM with a convex quadratic criterion. 
The key insight is that it is equivalent to a gradient descent method for a certain non-convex criterion whose stationary points correspond to oscillations around the minimum in the directions of the eigenvectors of the Hessian of the loss. The only stable stationary point corresponds to the leading eigenvector direction: `bouncing across the ravine'. (Notice that this is not the solution to the motivating minimax optimization problem~\eqref{eqn:SAM-idealized}, which is the minimum of the quadratic criterion.)

In Section~\ref{s:nonquad}, we consider SAM near a smooth minimum of the loss function $\ell$ with a positive semidefinite Hessian. For parameters corresponding to the solutions for the quadratic case, we see that the SAM updates can be decomposed into two components. There is a large component in the direction of the oscillation (bouncing across the ravine), and there is a smaller component in the orthogonal subspace that corresponds to descending the gradient of the spectral norm of the Hessian. Thus, SAM is able to drift towards wide minima by exploiting a specific third derivative (the gradient of the second derivative in the leading eigenvalue direction) with only two gradient computations per iteration.
In Section~\ref{s:conclusions}, we present some open problems, the most important of which is elucidating the relationship between wide minima of empirical loss and statistical performance.

\section{Additional Related Work}

% Du, et al \cite{du2022efficient} 
\citet{du2022efficient} 
proposed a more computationally efficient variant of SAM.
\citet{beugnot2022benefits} studied the
effect of a large learning rate with early stopping on spectrum of the
Hessian in the case of quadratic loss.

\citet{cohen2020gradient} provided a variety of natural
settings where, empirically, when neural networks are trained with batch gradient
descent and a fixed learning rate $\eta$, the spectral norm of the Hessian
tends toward $2/\eta$, the ``edge of stability''.  Here, if the
gradient is aligned with the principal direction of the
Hessian, the solution ``bounces across the ravine'', as in the
analysis of this paper.  A number of theoretical treatments
of this phenomenon have since been proposed
\citep{pmlr-v162-ahn22a,arora2022understanding,damian2022self}.
The most closely related of those to this paper is
the work of \citet{damian2022self},
who also described conditions under which ``bouncing across the
ravine'' tends to decrease the spectral norm of the Hessian.

In independent work posted to arXiv
after the initial version of this paper, \citet{wen2022does}
performed a variety of analyses of SAM and some related algorithms.  Their results included 
% an asymptotic convergence analysis
% of SAM in the convex quadratic case, 
showing that SAM almost surely converges in the limit
in the convex quadratic case,
along with asymptotic analysis showing that,
once SAM gets close enough to the manifold
of loss minimizers,
% , 
% in an appropriate limit,
it approximately tracks the path on a loss-minimizing manifold of gradient flow with respect to
the spectral norm of the Hessian, under smoothness assumptions
on the loss.
They also showed that the stochastic version of SAM, in which both gradients at each step are estimated from a single training example, approximately tracks the path of gradient flow with respect to the trace of the Hessian.

%  There is some discussion of \cite{wen2022does} in the notes.tex
% file of this folder.
%

\section{SAM with Quadratic Loss: Bouncing Across Ravines}
\label{s:quad}

We first consider the application of SAM to minimize a convex quadratic
objective $\ell$.  Without loss of generality, we assume that the minimum of $\ell$
is at zero, the eigenvectors of $\ell$'s Hessian are the coordinate axes, 
and the eigenvalues
are sorted by the indices of the eigenvectors.  Accordingly,
for $\Lambda=\diag(\lambda_1,\ldots,\lambda_d)$ with $\lambda_1\ge\cdots\ge\lambda_d > 0$, we consider loss $\ell(w)=\frac{1}{2}w^\top\Lambda w$. Then $\nabla\ell(w) = \Lambda w$ and SAM sets
  \begin{align}
      w_{t+1}
      & = w_t - \eta\nabla\ell\left(w_t+\rho\frac{\nabla\ell(w_t)}{\|\nabla\ell(w_t)\|}\right) \notag\\
      &= \left(I - \eta\Lambda - \frac{\eta \rho}{\|\Lambda w_t\|}\Lambda^2\right)w_t. \label{e:SAM-quadratic}
  \end{align}

The following is our main result.
%\footnote{How about this version for the main result, with the detailed version in the proof section?}
%
% I like it!
%
% Note that, to get a cleaner, we assume that $\lambda_1 > \lambda_2$.

\begin{theorem}
\label{t:medium}
There are polynomials $p$ and $p'$ and an absolute constant $c$ such that
the following holds.
For any eigenvalues $\lambda_1 > \lambda_2 \geq ... \geq \lambda_d > 0$, 
loss $\ell(w)=\frac{1}{2}w^\top\Lambda w$ with $\Lambda=\diag(\lambda_1,\ldots,\lambda_d)$,
any  neighborhood size $\rho > 0$,
any step size 
% used to bound beta_1
$0 < \eta < \frac{1}{2 \lambda_1}$,
and any $\delta > 0$,
if $w_0$ is sampled from a continuous probability
distribution over $\R^d$ 
\begin{itemize}
    \item whose density is bounded
above by $A \in \R$, and
    \item for $R > \eta \rho \lambda_1$ and $q>0$, with probability at least $1-\delta$, $\|w_0\|\le R$ and 
      $w_{0,1}^2 \ge q$,
\end{itemize}
and $w_1, w_2,...$ are obtained through the
SAM update \eqref{e:sam},
then, if $\kappa = \lambda_1/\lambda_d$, for all 
\[
\epsilon < p'(1/\lambda_1,\lambda_d, \eta,\rho,\delta,1/\rho,A,R),
\]
with probability
$1 - 2 \delta$, for all
\begin{align*}
& t \geq  
 \left( \frac{\kappa^5}
     {\eta \lambda_d \min\left\{\eta\lambda_d,
    \lambda_1^2/\lambda_2^2-1\right\}}
    + d
    \right)
    p\left(\log\left( \frac{1}{\epsilon} \right)
          \right)
\end{align*}
one of the following holds:
\begin{itemize}
    \item $\lv w_t - \frac{\eta \rho \lambda_1 e_1}{2-\eta\lambda_1} \rv \leq \epsilon$
and $\lv w_{t+1} + \frac{\eta \rho \lambda_1 e_1}{2-\eta\lambda_1} \rv \leq \epsilon$, or
    \item $\lv w_t + \frac{\eta \rho \lambda_1 e_1}{2-\eta\lambda_1} \rv \leq \epsilon$
and $\lv w_{t+1} - \frac{\eta \rho \lambda_1 e_1}{2-\eta\lambda_1} \rv \leq \epsilon$.
\end{itemize}
\end{theorem}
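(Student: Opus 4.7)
The plan is to exploit the diagonal structure of $\Lambda$: in eigencoordinates, the SAM update~\eqref{e:SAM-quadratic} decouples into $w_{t+1,i} = g_i(\alpha_t)\, w_{t,i}$ with $g_i(\alpha) \defeq 1 - \eta\lambda_i - \eta\rho\lambda_i^2/\alpha$ and $\alpha_t \defeq \|\Lambda w_t\|$, so all coupling between coordinates passes through the single scalar $\alpha_t$. A direct calculation shows the claimed attractor $w^* = \frac{\eta\rho\lambda_1}{2-\eta\lambda_1} e_1$ corresponds to $\alpha^* = \frac{\eta\rho\lambda_1^2}{2-\eta\lambda_1}$, at which $g_1(\alpha^*) = -1$ (producing the sign flip), while for each $i\ge 2$ one checks $|g_i(\alpha^*)| < 1$ using $\lambda_i < \lambda_1$ and $\eta\lambda_1 < 1/2$. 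Local attraction of the two-cycle is therefore immediate; the remainder of the proof quantitatively upgrades this to global convergence from the stated initial conditions.

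I would organize the argument in three phases. \emph{Phase A (confinement of $\alpha_t$).} Use $\|w_0\|\le R$ for an upper bound and $w_{0,1}^2\ge q$ for a lower bound to show that $\alpha_t$ enters and remains in an interval $[\alpha_{\min},\alpha_{\max}]$ around $\alpha^*$ after $O(\log R + \log(1/q))$ iterations. The upper bound comes from $\alpha_{t+1}\le \max_i|g_i(\alpha_t)|\,\alpha_t$, which contracts whenever $\alpha_t$ is much larger than $\alpha^*$; the lower bound rests on $\alpha_t\ge \lambda_1|w_{t,1}|$ combined with the product formula $|w_{t,1}|=|w_{0,1}|\prod_{s<t}|g_1(\alpha_s)|$. \emph{Phase B (direction contraction).} For $i\ge 2$, the ratio $r_{t,i}\defeq (w_{t,i}/w_{t,1})^2$ satisfies $r_{t+1,i}=(g_i(\alpha_t)/g_1(\alpha_t))^2 r_{t,i}$, and on $[\alpha_{\min},\alpha_{\max}]$ a direct computation gives $|g_i/g_1|^2\le 1-c\,\min\{\eta\lambda_d,\,\lambda_1^2/\lambda_2^2-1\}$; the binding index is $i=2$ when $\lambda_2\to\lambda_1$ (rate $\propto\lambda_1^2/\lambda_2^2-1$) and $i=d$ when $\lambda_d\ll\lambda_1$ (rate $\propto\eta\lambda_d$), matching the theorem's time bound. \emph{Phase C (magnitude convergence).} Once $\sum_{i\ge 2} r_{t,i}\ll 1$, the magnitude is close to being governed by the scalar map $\alpha_{t+1}=|g_1(\alpha_t)|\alpha_t$, whose derivative at $\alpha^*$ equals $-(1-\eta\lambda_1)$ and hence contracts geometrically; a coupled induction over B and C then drives $\alpha_t\to\alpha^*$ and $r_{t,i}\to 0$ simultaneously and exponentially.

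The principal obstacle is Phase A, because $\alpha_t$ is not monotone: coordinate sign flips and overshoots can temporarily inflate $\alpha_{t+1}$ just as the direction is improving, so sustaining a lower bound on $\alpha_t$ requires an invariant-based argument rather than a straightforward contraction. A secondary obstacle is carrying the Phase B estimate through Phase C: the magnitude recursion inherits an error of the order of the current direction mismatch, and one must show that both errors shrink in lockstep. The bounded-density hypothesis is used to exclude, in addition to $\{w_{0,1}^2<q\}$, one further low-probability event corresponding to initial conditions that would yield a pathological value of $\alpha_t$ when Phase A ends; these two sets together account for the $2\delta$ in the statement. Once within $O(\epsilon)$ of the cycle, $g_1(\alpha_t)=-1+O(\epsilon/\alpha^*)$ gives the $\pm w^*$ alternation; the constants $\kappa^5$ and the additive $d$ in the running time absorb, respectively, the worst-case eigenvalue ratios appearing in repeated perturbation bounds and the cost of the initial setup before the contractive regime takes hold.
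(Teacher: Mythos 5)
Your local analysis at the two-cycle is correct, and your Phase B intuition (coordinate ratios contract when $\|v_t\|$ is small) matches part~\ref{part:four} of the paper's Lemma~\ref{l:one_step}. But Phase~A contains a genuine gap, and it is the one you flag yourself: you have no mechanism to establish that $\alpha_t=\|v_t\|$ enters a tight band $[\alpha_{\min},\alpha_{\max}]$ around $\alpha^*=\beta_1$ after $O(\log R+\log(1/q))$ steps and stays there. That claim is in fact false in the regime that matters. After absorption into $\{\|v\|\le b\}$ (the paper's Lemma~\ref{l:one_step}, part~\ref{part:three}), the norm's quasi-equilibrium depends on the \emph{direction}: if most of the energy sits in a low-curvature coordinate $i$, the norm drifts toward $\beta_i\ll\beta_1$, not toward $\beta_1$, and it only equilibrates near $\beta_1$ once the direction has already collapsed onto $e_1$. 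So your Phases A and B cannot be decoupled the way the proposal assumes; the direction convergence is the cause, not the consequence, of the norm confinement.

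Worse, the converse failure mode also occurs. Once inside the ball, $\|v_t\|$ can lie anywhere in $(\beta_d,b]$, and when $\|v_t\|\ge\alpha_d$ (which can be strictly less than $b$, e.g.\ when $\lambda_d$ is small) the ratio $(v_{t,i}/v_{t,1})^2$ \emph{grows} rather than contracts, so the uniform bound $|g_i/g_1|^2\le 1-c\mu$ that Phase~B needs does not hold on the ambient interval. The paper's resolution, which your plan omits, is the observation (Lemma~\ref{l:u}) that the signed iterate $u_t=(-1)^tw_t$ performs gradient descent on an explicit nonconvex potential $J$; this Lyapunov function bounds the \emph{total number} of iterations with $\|v_t\|\ge(1+\epsilon)\beta_1$ (Lemma~\ref{l:breakaway2}), and Lemma~\ref{l:p_one_bigger} then combines a per-good-step contraction with a crude per-bad-step expansion bound to drive $\delta_t\to0$. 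Without a potential-function argument (or some substitute that bounds the time spent above $\alpha_d$), your inductive Phase~B/C coupling has no foothold. Your use of the bounded-density hypothesis is in the right spirit, but the paper needs it specifically to keep $\|v_t\|$ away from $\gamma_1$ during the \emph{pre}-absorption phase (Lemma~\ref{l:far_from_gammaone}), so that the first component is not annihilated before the ball is entered; it does not help with the post-absorption wandering of the norm.
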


Theorem~\ref{t:medium} has the following corollary.
\begin{theorem}
\label{t:simple}
For any eigenvalues $\lambda_1 > \lambda_2 \geq  ... \geq \lambda_d > 0$, any 
neighborhood size $\rho > 0$, and
any step size 
% used to bound beta_1
$0 < \eta < \frac{1}{2 \lambda_1}$,
if $w_0$ is sampled from a continuous probability
distribution over $\R^d$ with $\E[\| w_0 \|^2] < \infty$,
then, almost surely, for all $\epsilon > 0$, for all
large enough $t$, the iterates of SAM applied to the quadratic loss
$\ell(w)=\frac{1}{2}w^\top\diag(\lambda_1,\ldots,\lambda_d) w$ satisfy:
\begin{itemize}
    \item $\lv w_t - \frac{\eta \rho \lambda_1 e_1}{2-\eta\lambda_1} \rv \leq \epsilon$
and $\lv w_{t+1} + \frac{\eta \rho \lambda_1 e_1}{2-\eta\lambda_1} \rv \leq \epsilon$, or
    \item $\lv w_t + \frac{\eta \rho \lambda_1 e_1}{2-\eta\lambda_1} \rv \leq \epsilon$
and $\lv w_{t+1} - \frac{\eta \rho \lambda_1 e_1}{2-\eta\lambda_1} \rv \leq \epsilon$.
\end{itemize}
\end{theorem}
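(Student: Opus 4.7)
The plan is to reduce Theorem~\ref{t:simple} to Theorem~\ref{t:medium} by a truncation-and-conditioning argument, and then upgrade the resulting high-probability guarantee to an almost-sure one via a countable intersection of probability-one events.

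First I would construct, for each $\delta > 0$, a ``nice'' event $E_\delta$ on which the three hypotheses needed by Theorem~\ref{t:medium} can be met simultaneously. Let $f$ denote the density of $w_0$ (I interpret ``continuous distribution'' as absolutely continuous with respect to Lebesgue measure). Tightness supplies some $R_\delta > \eta\rho\lambda_1$ with $\Pr(\|w_0\| > R_\delta) < \delta$. Since $\{w : w_1 = 0\}$ has Lebesgue measure zero, $w_{0,1} \neq 0$ almost surely, so some $q_\delta > 0$ gives $\Pr(w_{0,1}^2 < q_\delta) < \delta$. Since $f$ is finite almost everywhere, some $A_\delta < \infty$ gives $\Pr(f(w_0) > A_\delta) < \delta$. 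Define $E_\delta := \{\|w_0\| \le R_\delta\} \cap \{w_{0,1}^2 \ge q_\delta\} \cap \{f(w_0) \le A_\delta\}$, so $\Pr(E_\delta) \ge 1 - 3\delta$.

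Next, let $\mu_\delta$ be the conditional law of $w_0$ given $E_\delta$. It is a continuous distribution with density bounded by $A_\delta/(1-3\delta)$, and $\mu_\delta$-almost surely $\|w_0\| \le R_\delta$ and $w_{0,1}^2 \ge q_\delta$. Applying Theorem~\ref{t:medium} to $\mu_\delta$ with confidence parameter $\delta$ yields a threshold time $T^\ast_\delta$ and a threshold scale $\epsilon^\ast_\delta > 0$ such that, for every $\epsilon < \epsilon^\ast_\delta$, with $\mu_\delta$-probability at least $1 - 2\delta$ one of the two oscillation conditions holds at every $t \ge T^\ast_\delta$. Writing
\[
G_\epsilon \;:=\; \bigl\{\exists\, T\ \forall\, t \ge T:\ \text{one of the two oscillation conditions at precision }\epsilon\text{ holds}\bigr\},
\]
unconditioning gives $\Pr(G_\epsilon) \ge (1 - 3\delta)(1 - 2\delta)$ whenever $\epsilon < \epsilon^\ast_\delta$. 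For an arbitrary $\epsilon > 0$, the monotonicity $G_{\epsilon'} \subseteq G_\epsilon$ for $\epsilon' < \epsilon$ lets me apply the bound at some $\epsilon' < \min(\epsilon, \epsilon^\ast_\delta)$ to conclude the same lower bound on $\Pr(G_\epsilon)$. Sending $\delta \downarrow 0$ yields $\Pr(G_\epsilon) = 1$ for every $\epsilon > 0$, and the conclusion of Theorem~\ref{t:simple} is precisely $\bigcap_{n \ge 1} G_{1/n}$, a countable intersection of probability-one events.

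The main obstacle is the third clause of $E_\delta$: Theorem~\ref{t:medium} requires a uniform density bound, whereas Theorem~\ref{t:simple} merely assumes a density. Truncating by conditioning on a sublevel set of $f$, rather than modifying $f$, is the cleanest way to preserve the underlying dynamics while supplying the required bound. By contrast, the finite second moment hypothesis is stronger than needed: tightness of any probability distribution on $\R^d$ already suffices to produce $R_\delta$.
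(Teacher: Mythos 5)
Your argument is correct and is essentially the (unstated) reduction the paper intends: truncate so that the hypotheses of Theorem~\ref{t:medium} hold, condition, uncondition, send $\delta\downarrow 0$, and take a countable intersection over $\epsilon=1/n$. A few remarks on the details. The conditional density of $w_0$ given $E_\delta$ is $f\,\mathbf{1}_{E_\delta}/\Pr(E_\delta)\le A_\delta/(1-3\delta)$, as you say; and because the trajectory is a deterministic Borel function of $w_0$, the event $G_\epsilon$ is the same under $\mu_\delta$ as under the original law restricted to $E_\delta$, so the unconditioning step $\Pr(G_\epsilon)\ge\Pr(G_\epsilon\mid E_\delta)\Pr(E_\delta)$ is sound. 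One wording slip: the threshold time coming out of Theorem~\ref{t:medium} depends on $\epsilon$ as well as on $\delta, A_\delta, R_\delta, q_\delta$, not only on $\delta$; this is immaterial because $G_\epsilon$ quantifies $\exists T$, but you should not write $T^\ast_\delta$ as if it were $\epsilon$-free. Your observation that the finite-second-moment hypothesis is stronger than needed is correct: any Borel probability measure on $\R^d$ is tight, so the $R_\delta$ you need always exists; the paper presumably includes $\E[\|w_0\|^2]<\infty$ only to make the existence of a suitable $R_\delta$ (via Markov's inequality, say) entirely explicit, not because it is essential.
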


Our analysis shows that, when SAM is initialized far from the optimum, training
proceeds in two stages.  Early, the objective function is reduced exponentially
fast, with the most rapid progress made in the directions with highest variance.
This can be seen, for example, in Figure~\ref{f:sam_quad_all},
which plots the first 30 iterates of SAM initialized at $(2,2)$ in the
case that $\lambda_1 = 1$ and $\lambda_2 = 1/2$, $\eta = 1/5$ and $\rho = 1$.
\begin{figure}
    \centering
    \begin{subfigure}{2.8in}
              \includegraphics[width=2.7in]{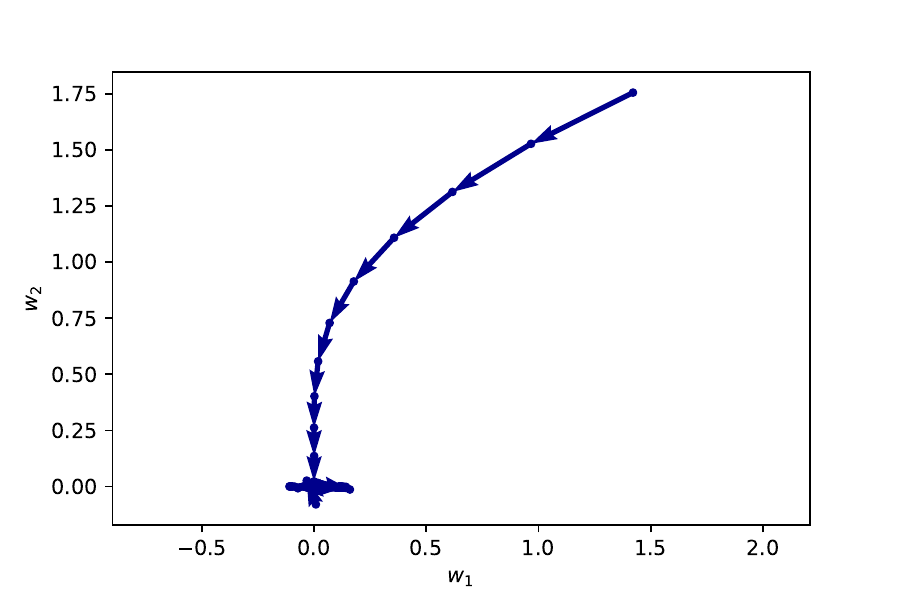}
              \caption{All the iterates}
              \label{f:sam_quad_all}
    \end{subfigure}
    \begin{subfigure}{2.8in}
    \includegraphics[width=2.7in]{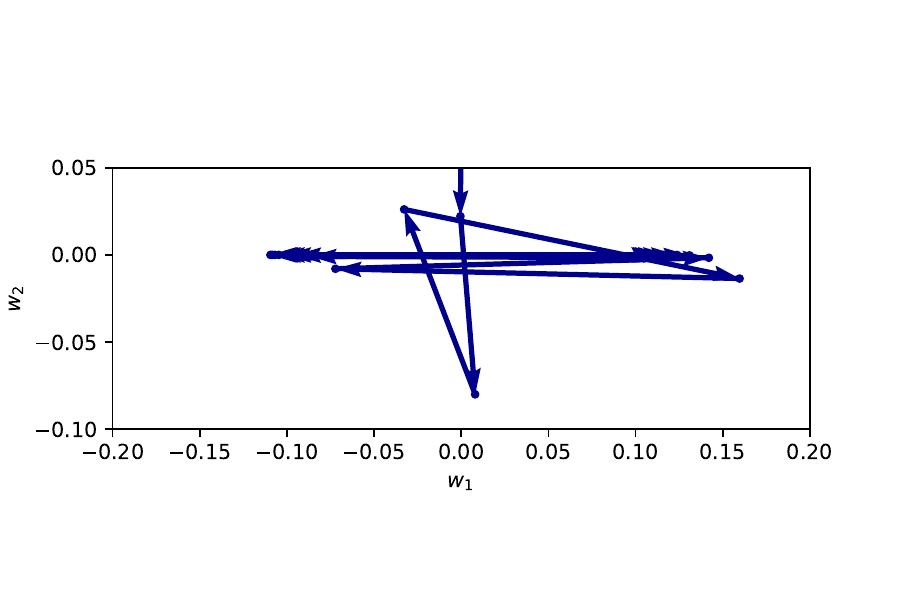}
      \caption{The iterates close to the origin}
      \label{f:sam_quad_close}
      \end{subfigure}
    \caption{The first 30 iterates of SAM, initialized at $(2,2)$ with $\lambda_1 = 1$ and $\lambda_2 = 1/2$, $\eta = 1/5$ and $\rho = 1$.}
    %     \subfigure[All the iterates]{\includegraphics[width=2.95in]{sam_quad_zoomed_out.pdf}}
    % \hfill
    % \subfigure[The iterates close to the origin]{
    % \includegraphics[width=2.95in]{sam_quad_zoomed_in.pdf}}
    % \caption{The first 30 iterates of SAM, initialized at $(2,2)$ with $\lambda_1 = 1$ and $\lambda_2 = 1/2$, $\eta = 1/5$ and $\rho = 1$.}
    \label{f:sam_quad}
\end{figure}
After a certain point, however, SAM's iterates ``overshoot'' in the direction
of highest variance, as can be seen in Figure~\ref{f:sam_quad_close}, which
is the same as Figure~\ref{f:sam_quad_all}, except zoomed in to the region near
the origin, where the details of the later iterates can be seen.
%\begin{figure}
%    \centering
%    \includegraphics[width=3in]{sam_quad_zoomed_in.pdf}
%    \caption{The iterates that are closest to the origin, 
%     from among first 30 iterates of SAM, initialized at %$(2,2)$, in the
%case that $\lambda_1 = 1$ and $\lambda_2 = 1/2$, $\eta = 1/5$ %and $\rho = 1$.}
%    \label{f:sam_quad_zoomed_in}
%\end{figure}
During this second phase, the share of the length of the parameter vector
in the first component increases, and the process converges to the oscillation
described in Theorem~\ref{t:simple}.  Note that, as illustrated in
Figure~\ref{f:sam_quad_all}, due to the normalization by
$\| w_t\|$, the parameter vector can jump away from a position very close
to the origin, with a correspondingly very small loss.
However, as we will see, the training process makes steady progress with respect
to a potential function that we will define in Section~\ref{s:gd}.

\section{Proof of Theorem~\ref{t:medium}}
\label{s:proof_quadratic}

In this section, we prove the following theorem, which implies Theorem~\ref{t:medium}.
We denote $\max \{ z, 0\}$ by $[z]_+$.
\begin{theorem}
\label{t:quadratic}
There is an absolute constant $c$ such that,
for any eigenvalues $\lambda_1 > \lambda_2 \geq ... \geq \lambda_d > 0$,
loss $\ell(w)=\frac{1}{2}w^\top\Lambda w$ with $\Lambda=\diag(\lambda_1,\ldots,\lambda_d)$, any
neighborhood size $\rho > 0$,
any initialization parameters $R, A, q > 0$,
and any step size 
% used to bound beta_1
$0 < \eta < \frac{1}{2 \lambda_1}$,
for all $0 < \epsilon < \min \left\{ \sqrt{\eta \lambda_1/2}, 1/(2 \rho\lambda_1), \eta \rho\lambda_1^2/2\right\}$,
for all $\delta > 0$,
if $w_0$ is sampled from a continuous probability
distribution over $\R^d$ 
\begin{itemize}
    \item whose density is bounded
above by $A \in \R$, and
    \item with probability at least $1-\delta$, $\|w_0\|\le R$ and 
      $w_{0,1}^2 \ge q$,
\end{itemize}
and $w_1, w_2,...$ are obtained through the
SAM update~\eqref{e:sam},
then, 
with probability
$1 - 2 \delta$, for all
\begin{align*}
& t \geq  \frac{6 \lambda_1^5}
     {\eta \lambda_d^6 \min\left\{\eta\lambda_d,
    \frac{\lambda_1^2}{\lambda_2^2}-1\right\}}   \log \left(
                    \frac{4 }
                         { \eta \lambda_1}
                   \right) \\
& \hspace{0.2in}
         +\frac{1}{\min\left\{\eta\lambda_d,
    \frac{\lambda_1^2}{\lambda_2^2}-1\right\}} 
           \left(
         \log\left(  \frac{4 (1 + \eta \rho \lambda_1^2)^2}
                       {\lambda_d^2 \epsilon^2}\right)
          +
    \log\left(\frac{R^2}{q}\right)
        \right)
        \\
& \hspace{0.2in}
         +
       \frac{2 \left[\log\left(\frac{R}{ \eta \rho \lambda_1}\right)\right]_+}{\eta\lambda_d \min\left\{\eta\lambda_d,
    \frac{\lambda_1^2}{\lambda_2^2}-1\right\}}
                              \Bigg(
                              \log\left( 2 \lambda_1 R \right)
                              + 
                              % \log\left(\frac{1}{\Delta}\right) begin
                                \frac{\left[ \log\left(\frac{ R}{\eta \rho \lambda_1}\right) \right]_+ \log\left(
                                  \frac{9\cdot 6^{d+3} R^3}
                                     {(\eta\lambda_d)^{d+3}(\eta \rho \lambda_1)^3}\right)}{\eta\lambda_d}  \\
& \hspace{2.2in} + \log\left(\frac{4\pi^{d/2}(4 \eta \rho \lambda_1^2)^{d-1}\left[\log\left(\frac{ R}{\eta \rho \lambda_1}\right)\right]_+ A}{\Gamma(d/2)\delta \eta\lambda_d}\right)      
                             % \log\left(\frac{1}{\Delta}\right) end
                              \Bigg)
        \\
& \hspace{0.2in}
     + \frac{6}{\eta \lambda_1} \ln \left( \frac{ 2 (1 + \eta \rho \lambda_1^2) }{\lambda_d \epsilon} \right)
\end{align*}
one of the following holds:
\begin{itemize}
    \item $\lv w_t - \frac{\eta \rho \lambda_1 e_1}{2-\eta\lambda_1} \rv \leq \epsilon$
and $\lv w_{t+1} + \frac{\eta \rho \lambda_1 e_1}{2-\eta\lambda_1} \rv \leq \epsilon$, or
    \item $\lv w_t + \frac{\eta \rho \lambda_1 e_1}{2-\eta\lambda_1} \rv \leq \epsilon$
and $\lv w_{t+1} - \frac{\eta \rho \lambda_1 e_1}{2-\eta\lambda_1} \rv \leq \epsilon$.
\end{itemize}
\end{theorem}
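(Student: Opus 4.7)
The plan is to work in the eigenbasis of $\Lambda$, where the SAM update \eqref{e:SAM-quadratic} decouples into coordinatewise multiplicative steps $w_{t+1,i} = \alpha_{t,i}\, w_{t,i}$ with $\alpha_{t,i} = 1 - \eta\lambda_i - \eta\rho\lambda_i^2/u_t$ and $u_t := \|\Lambda w_t\|$, so that the coordinates are coupled only through the scalar $u_t$. A direct substitution verifies the conjectured limit cycle: with $w^\star := \eta\rho\lambda_1/(2-\eta\lambda_1)$ and $u^\star := \lambda_1 w^\star$, the choice $|w_{t,1}| = w^\star$ and $w_{t,i}=0$ for $i\ge 2$ makes $u_t = u^\star$, $\alpha_{t,1} = -1$, and $\alpha_{t,i} w_{t,i} = 0$, so $w_{t+1} = -w_t$. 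I would establish convergence to this cycle in three phases, each contributing to a distinct block of the final bound.

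Phase A (norm contraction) handles the regime $u_t \gg u^\star$: there $\eta\rho\lambda_i^2/u_t$ is negligible and $|\alpha_{t,i}| \le 1-\eta\lambda_i/2$ for every $i$, so $\|w_t\|$ contracts geometrically until $u_t$ enters a definite annulus about $u^\star$ in $O\bigl((1/\eta\lambda_1)\log(R/w^\star)\bigr)$ steps, matching the $\log(R/(\eta\rho\lambda_1))$ terms. Phase C (one-dimensional attraction) takes over once $\sum_{i\ge 2} w_{t,i}^2$ is negligible relative to $w_{t,1}^2$: then $u_t \approx \lambda_1 |w_{t,1}|$ and, setting $x_t := |w_{t,1}|$, the dynamics reduce to the piecewise-affine map $x_{t+1} = |(1-\eta\lambda_1) x_t - \eta\rho\lambda_1|$, a contraction of rate $1-\eta\lambda_1 < 1$ (using $\eta < 1/(2\lambda_1)$) toward $w^\star$. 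Because $\alpha_{t,1} \to -1$ the sign of $w_{t,1}$ alternates, producing the two alternatives in the conclusion after $O((1/\eta\lambda_1)\log(1/\epsilon))$ iterations and yielding the final $(6/\eta\lambda_1)\ln(\cdot)$ term.

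Phase B (alignment onto $e_1$) is the heart of the argument. I would track the log-ratios $r_{t,i} := \log(|w_{t,i}|/|w_{t,1}|)$ for $i\ge 2$, which are well-defined because the multiplicative iteration keeps $w_{t,1}\ne 0$ once $w_{0,1}\ne 0$---guaranteed by the hypothesis $w_{0,1}^2\ge q$ with probability $1-\delta$. Their evolution $r_{t+1,i} = r_{t,i} + \log|\alpha_{t,i}(u_t)/\alpha_{t,1}(u_t)|$ should decrease on average because the per-step estimate
\[
\left|\frac{\alpha_{t,i}(u_t)}{\alpha_{t,1}(u_t)}\right| \le 1 - c\,\min\!\left\{\eta\lambda_d,\; \tfrac{\lambda_1^2}{\lambda_i^2}-1\right\}
\]
holds for a universal $c$ whenever $u_t$ sits in the good annulus: at $u_t = u^\star$ one has $|\alpha_{t,1}|=1$ and $\alpha_{t,i} \approx 1 - 2\lambda_i^2/\lambda_1^2 + O(\eta\lambda_i)$, so the contraction comes from the eigenvalue gap, with $\eta\lambda_d$ as a fallback when the gap is numerically small. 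Iterating this for $O(1/\min\{\cdots\})$ steps drives $r_{t,i}\to-\infty$, which in turn confirms $u_t \approx \lambda_1 |w_{t,1}|$ and closes the bootstrap that feeds Phase C.

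The main obstacle is making Phase B self-consistent: the per-step inequality requires $u_t$ in a good annulus, but $u_t$ depends on all the coordinates whose ratios we are trying to control. My approach is a Lyapunov-type argument combining $\max_{i\ge 2} r_{t,i}$ with a correction in $\log(u_t/u^\star)$ and showing strict decrease of the composite potential while it exceeds a threshold. A secondary technical issue is excluding initializations for which some $u_t$ lands in the thin sliver where $\alpha_{t,1}(u_t) \approx 0$, which would cause transient spikes in $r_{t,i}$; here the bounded-density hypothesis, a union bound over the controlled time horizon, and a codimension-one volume estimate for the smooth preimage of each such sliver under the SAM map handle the bad event, producing the $A$-, $d$- and $\delta$-dependent terms in the stated bound.
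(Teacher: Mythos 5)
Your three-phase plan mirrors the structure of the paper's proof: Phase A corresponds to the paper's early-descent Lemma (showing $\|v_t\|$ enters the absorbing ball in $O\!\left(\tfrac{1}{\eta\lambda_d}\log\tfrac{R}{\eta\rho\lambda_1}\right)$ steps -- note your rate $1/(\eta\lambda_1)$ here is off; the slowest coordinate $\lambda_d$ governs the norm contraction, so the correct rate is $1/(\eta\lambda_d)$); Phase C matches the paper's one-dimensional contraction of $v_{t,1}$ to $\pm\beta_1$; and the sliver-avoidance argument for $\alpha_{t,1}(u_t)\approx 0$ is in spirit the paper's pushforward-density bound on the event $\|v_t\|\approx\gamma_1$.

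The genuine gap is in Phase B, and you correctly identify it yourself. The per-step contraction $|\alpha_{t,i}/\alpha_{t,1}|\leq 1-c\mu$ (with $\mu=\min\{\eta\lambda_d,\lambda_1^2/\lambda_2^2-1\}$) holds only when $\|v_t\|$ is below a threshold (the paper's $\alpha_d$). The crux of the whole theorem is to bound the number of iterations with $\|v_t\|$ above that threshold. Your proposal gestures at a ``Lyapunov-type argument combining $\max_{i\ge2}r_{t,i}$ with a correction in $\log(u_t/u^\star)$'' but neither specifies the potential nor shows it is monotone; as stated this is an unverified hope, not an argument, and it is not at all clear that a potential of that form is monotone under the coupled dynamics. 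The paper resolves this with a different, quite specific device: after the sign flip $u_t=(-1)^t w_t$, the SAM iteration is \emph{exactly} gradient descent with step $\eta\rho$ on the nonconvex scalar objective
\[
  J(u)=\tfrac12 u^\top C u - \|\Lambda u\|,
  \qquad
  C=\tfrac{1}{\eta\rho}(2I-\eta\Lambda),
\]
whose global minimizers are precisely $\pm(\beta_1/\lambda_1)e_1$. Because the update step satisfies a descent inequality with respect to $J$ and the per-step decrease of $J$ is bounded below by a constant whenever $\|v_t\|\geq(1+\epsilon)\beta_1$, the total number of ``bad'' iterations is bounded by the initial excess $J(u_{T_0})-\min J = O(\beta_1^2/\beta_d)$ divided by that per-step decrement. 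That counting argument is what closes the bootstrap and makes Phase B self-consistent. Without this or a worked substitute, your outline does not yet constitute a proof; identifying the explicit Lyapunov function and proving its monotonicity is where the bulk of the technical content lies.
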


The proof of Theorem~\ref{t:quadratic} requires some lemmas, which we prove first.
%The assumptions of Theorem~\ref{t:quadratic}, including $\eta < \frac{1}{2 \lambda_1}$, and $\lambda_1 > \lambda_2$, are in force throughout
Throughout this section, we assume that $\eta\lambda_1<1/2$ and we highlight where the assumption $\lambda_1>\lambda_2$ is used.%\footnote{Because we say that $\lambda_1>\lambda_2$ is just to facilitate interpretation, seems like we should highlight that it's required for keeping control of the magnitude in the subspace spanned by $\{e_i:\lambda_i=\lambda_1\}$, and we should be explicit about the assumptions that hold throughout and those that do not?}

% Because of the key role of
% $\| \nabla \ell(w_t) \|$ in the SAM update,
% t
The evolution of the gradient $\nabla \ell(w_t) =\Lambda w_t$ plays
a key role in the dynamics of SAM.
To simplify expressions, we refer to it using the shorthand $v_t$. 
Substituting into the SAM update~\eqref{e:SAM-quadratic} for the quadratic loss gives
  \begin{align*}
      v_{t+1}
      & = \left(I - \eta\Lambda - \frac{\eta \rho}{\|v_t\|}\Lambda^2\right)v_t,
 \end{align*}
so, for all $i \in [d]$ and all $t$, we have 
\begin{align*}
 v_{t+1,i}
   & = \left(
  1 - \eta\lambda_i  - \frac{\eta  \rho \lambda_i^2}{\|v_t\|}\right)
       v_{t,i} \\
   & = (1 - \eta\lambda_i) \left(
   \|v_t\| - \frac{\eta \rho \lambda_i^2}{1 - \eta\lambda_i }\right)
      \frac{v_{t,i}}{\|v_t\|} \\
      &= (1-\eta\lambda_i)\left(\|v_t\|-\gamma_i\right)\frac{v_{t,i}}{\|v_t\|}, 
  \end{align*}
where $\gamma_i:= \eta \rho \lambda_i^2/(1-\eta\lambda_i)$.

We need the following technical lemma.
\begin{lemma}\label{lemma:compare-quadratics}
For $x> 0$,
$0\le a < b$, $\alpha>\beta \geq 0$, and $a\alpha\ge b\beta$, we have
$a^2(x-\alpha)^2>b^2(x-\beta)^2$ iff $x<(a\alpha+b\beta)/(a+b)$.
\end{lemma}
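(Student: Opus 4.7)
The plan is to compare the two quadratics $a^2(x-\alpha)^2$ and $b^2(x-\beta)^2$ by writing their difference as a product of two linear factors via difference-of-squares, and then pinning down the sign of each factor separately using the hypotheses $a<b$, $\alpha>\beta\ge 0$, and $a\alpha\ge b\beta$.

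Concretely, I would first write
\[
a^2(x-\alpha)^2 - b^2(x-\beta)^2
  = \bigl(a(x-\alpha) - b(x-\beta)\bigr)\bigl(a(x-\alpha) + b(x-\beta)\bigr)
  =: F(x)\,G(x),
\]
where $F(x) = (a-b)x + (b\beta - a\alpha)$ and $G(x) = (a+b)x - (a\alpha + b\beta)$. The factor $G$ is linear with positive slope $a+b>0$, so $G(x)<0$ if and only if $x<(a\alpha+b\beta)/(a+b)$; this is exactly the threshold appearing in the statement.

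The remaining step is to show that $F(x)<0$ for every $x>0$, so that the sign of $F(x)G(x)$ is the opposite of the sign of $G(x)$. The function $F$ has slope $a-b<0$ (using $a<b$ strictly), hence is strictly decreasing, and its intercept satisfies $F(0) = b\beta - a\alpha \le 0$ by the hypothesis $a\alpha\ge b\beta$. Combining these two facts gives $F(x)<F(0)\le 0$ for all $x>0$, so indeed $F(x)<0$ strictly.

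Putting it together, for $x>0$ we obtain
\[
a^2(x-\alpha)^2 > b^2(x-\beta)^2 \iff F(x)G(x)>0 \iff G(x)<0 \iff x < \frac{a\alpha+b\beta}{a+b},
\]
which is the claim. There is no serious obstacle here; the only delicate point is verifying the strict inequality $F(x)<0$ on all of $x>0$, for which it is essential that $a<b$ is strict (the assumption $a\alpha\ge b\beta$ alone only gives $F(0)\le 0$, but the strict negative slope then forces $F(x)<0$ once $x>0$). The hypothesis $\alpha>\beta\ge 0$ is consistent with these conditions but is not directly invoked in the factoring argument.
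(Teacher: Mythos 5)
Your proof is correct and takes essentially the same route as the paper's: both locate one root of the quadratic difference at $(a\alpha+b\beta)/(a+b)\ge 0$ and show the other root lies at some $x\le 0$, so that on $x>0$ the sign is governed entirely by the first root. The paper packages this as a convexity-plus-sign-at-zero argument while you make the difference-of-squares factorization $F\cdot G$ explicit, but the mathematical content is the same.
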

\begin{proof}
Substituting shows that $b^2(x-\beta)^2-a^2(x-\alpha)^2=0$ at $x=(a\alpha+b\beta)/(a+b)\ge 0$. Also, $b^2(x-\beta)^2-a^2(x-\alpha)^2\le 0$ at $x=0$, which shows that the other zero of this convex quadratic occurs at $x\le 0$.
\end{proof}

\subsection{Some properties}
The following lemma identifies some properties of SAM with the convex quadratic criterion. It shows that the magnitudes of the components of the gradient vector $v_t$ have fixed points under SAM's update when the gradients are in the eigenvector directions and at distance $\beta_i$ from the minimum, it shows that the norm of $v_t$ determines how the magnitudes of its components grow, both in absolute terms (where the critical values are the $\beta_i$) and 
% in relative terms
relative to the first component 
(where the critical values are the $\alpha_i$), and it shows that, for $b = \eta \rho \lambda_1^2$, the set 
$\{v : \|v\|\le b\}$ 
is absorbing.
Recall that we have assumed that $\eta\lambda_1<1/2$.
\begin{lemma}
\label{l:one_step}
% Suppose $\eta\lambda_1<1$. 
For $i=1,\ldots, d$, define
  \begin{align*}
      \gamma_i &= \frac{\eta \rho\lambda_i^2}{1-\eta\lambda_i}, \\
      \beta_i  &=\frac{1-\eta\lambda_i}{2-\eta\lambda_i}\gamma_i
                =\frac{\eta \rho\lambda_i^2}{2-\eta\lambda_i}
               ,\\
      \alpha_i &=\frac{(1-\eta\lambda_1)\gamma_1+(1-\eta\lambda_i)\gamma_i}
        {1-\eta\lambda_1+1-\eta\lambda_i} \\
    b & = (1-\eta\lambda_1)\gamma_1 = \eta \rho \lambda_1^2. \\ 
  \end{align*}
We have
  \begin{enumerate}
      \item\label{part:one} 
      % The set of solutions $(v_1^2,\ldots,v_d^2)$ to the system of equations: for $1\le i\le d$, $v_{t+1,i}^2=v_{t,i}^2$ is
      $
        \left\{\left(s_1,\ldots,s_d\right):
        \exists v_t, \;
        % for all 
        \forall
        \text{$1\le i\le d$, }v_{t+1,i}^2=v_{t,i}^2=s_i\right\}
        =
        \{ 0 \} \cup \bigcup_{i = 1}^d \co\{\beta_i^2e_j:\beta_j=\beta_i\}
      $,\\
      where $\co(S)$ denotes the convex hull of a set $S$ and $e_j$ is the $j$th basis vector in $\Re^d$.
      \item\label{part:two} For $1\le i\le d$, $v_{t+1,i}^2 < v_{t,i}^2$ iff $\|v_t\|>\beta_i$.
      \item\label{part:four} Suppose $\lambda_1>\lambda_2$. Then for $i\in\{2,\ldots,d\}$,
      % $\frac{v_{t+1,1}^2 v_{t,i}^2}{v_{t+1,i}^2v_{t,1}^2}>1$.
      $\frac{v_{t+1,1}^2}{v_{t+1,i}^2}
            >\frac{v_{t,1}^2}{v_{t,i}^2}$
      iff $\|v_t\|<\alpha_i$.
      \item\label{part:five} $\beta_d\le\cdots\le\beta_1\le\alpha_d\le\cdots\alpha_2\le\alpha_1=\gamma_1$ and $\beta_1\le b$. Furthermore, if $\lambda_d>0$ then $\beta_1<\alpha_d$.
      \item\label{part:three}
      $\|v_t\|\le b$ implies $\|v_{t+1}\|\le b$.
      % I think this might not be used
      %
    %   \item\label{part:six} 
    % %   $\alpha_i\ge(1-\eta\lambda_1)\gamma_1$ iff $\lambda_i\le\left(1-\eta\lambda_1\right)\lambda_1$.
    %       $\alpha_i \ge  b$ iff $\lambda_i\ge\left(1-\eta\lambda_1\right)\lambda_1$.
  \end{enumerate}
\end{lemma}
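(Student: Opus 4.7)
The plan is to reduce every part of the lemma to a one-dimensional inequality in the scalar $x \defeq \|v_t\|$, working directly from the per-coordinate recursion $v_{t+1,i} = (1-\eta\lambda_i)(x-\gamma_i) v_{t,i}/x$ (valid whenever $x>0$). Throughout I use the standing hypothesis $\eta\lambda_1 < 1/2$, so that each factor $1-\eta\lambda_i$ lies in $(1/2,1)$.

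For Part~\ref{part:one} I would square the recursion coordinatewise: an index with $v_{t,i}\ne 0$ is fixed iff $(1-\eta\lambda_i)^2(x-\gamma_i)^2 = x^2$, and since $1-\eta\lambda_i\in(0,1)$ the unique nonnegative root of this equation is $x=\beta_i$ (the other root $x=-\gamma_i$ is negative and so infeasible). Hence at any nonzero fixed point every active coordinate forces the same value of $x$, so all active indices must share a common $\beta_i$, giving exactly the convex-hull description. Part~\ref{part:two} comes from the same squaring: $v_{t+1,i}^2 < v_{t,i}^2$ is equivalent to $(1-\eta\lambda_i)|x-\gamma_i|<x$, and a short case split on whether $x$ exceeds $\gamma_i$ rewrites this as $x>\beta_i$.

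For Part~\ref{part:four} I would take the squared ratio $(v_{t+1,1}/v_{t+1,i})^2/(v_{t,1}/v_{t,i})^2 = (1-\eta\lambda_1)^2(x-\gamma_1)^2/[(1-\eta\lambda_i)^2(x-\gamma_i)^2]$ and apply Lemma~\ref{lemma:compare-quadratics} with $a=1-\eta\lambda_1$, $b=1-\eta\lambda_i$, $\alpha=\gamma_1$, $\beta=\gamma_i$. The hypothesis $a<b$ reads $\lambda_1>\lambda_i$, which is where the assumption $\lambda_1>\lambda_2$ enters (for $i=2$; for $i>2$ it follows from $\lambda_i\le\lambda_2<\lambda_1$). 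The remaining hypotheses come from $\gamma_\lambda = \eta\rho\lambda^2/(1-\eta\lambda)$ being strictly increasing in $\lambda$ together with the identity $(1-\eta\lambda)\gamma_\lambda = \eta\rho\lambda^2$, which gives $a\alpha = \eta\rho\lambda_1^2\ge\eta\rho\lambda_i^2 = b\beta$. The lemma's threshold $(a\alpha+b\beta)/(a+b)$ then matches $\alpha_i$ exactly.

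Part~\ref{part:five} reduces to a handful of elementary monotonicities: the map $\lambda\mapsto \eta\rho\lambda^2/(2-\eta\lambda)$ is strictly increasing on $(0,1/(2\eta))$, yielding $\beta_d\le\cdots\le\beta_1$; the identity above simplifies $\alpha_i$ to $\eta\rho(\lambda_1^2+\lambda_i^2)/(2-\eta\lambda_1-\eta\lambda_i)$, visibly increasing in $\lambda_i$, with $\alpha_1=\gamma_1$ on substitution; the bound $\beta_1\le b$ is immediate from $\beta_1 = b/(2-\eta\lambda_1)$; and $\beta_1<\alpha_d$ when $\lambda_d>0$ follows since passing from $\beta_1$ to $\alpha_d$ strictly enlarges the numerator (by the positive quantity $(1-\eta\lambda_d)\gamma_d$) while strictly shrinking the denominator. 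Finally, for Part~\ref{part:three} it suffices to establish the per-coordinate bound $(1-\eta\lambda_i)|x-\gamma_i|\le b$ whenever $0<x\le b$: if $x\le\gamma_i$ then $(1-\eta\lambda_i)(\gamma_i-x)\le (1-\eta\lambda_i)\gamma_i = \eta\rho\lambda_i^2\le b$, while if $x>\gamma_i$ then $(1-\eta\lambda_i)(x-\gamma_i)<x\le b$; summing the squared recursion across $i$ with $\sum_i v_{t,i}^2/x^2 = 1$ gives $\|v_{t+1}\|^2\le b^2$. The only step needing a small amount of care is the bookkeeping in Part~\ref{part:one} when several $\beta_i$ coincide, which is precisely the degeneracy absorbed by the convex-hull description; everything else is direct algebra.
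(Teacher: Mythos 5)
Your approach is essentially the same as the paper's: reduce every claim to a scalar inequality in $x=\|v_t\|$, square the per-coordinate recursion, and invoke Lemma~\ref{lemma:compare-quadratics} for Part~\ref{part:four} with exactly the same parameter assignments. One small slip: in Part~\ref{part:one} the second root of $(1-\eta\lambda_i)^2(x-\gamma_i)^2=x^2$ is $x=-\rho\lambda_i$ (equivalently $-(1-\eta\lambda_i)\gamma_i/(\eta\lambda_i)$), not $-\gamma_i$; it is still negative, so the conclusion stands. Your observation that $\alpha_i=\eta\rho(\lambda_1^2+\lambda_i^2)/(2-\eta\lambda_1-\eta\lambda_i)$ is monotone in $\lambda_i$ gives a cleaner derivation of $\alpha_d\le\cdots\le\alpha_1$ than the paper's two-step substitution argument, and your per-coordinate bound $(1-\eta\lambda_i)|x-\gamma_i|\le b$ in Part~\ref{part:three} is a tidy rearrangement of the same estimate the paper uses; neither changes the substance of the argument.
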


\begin{proof}
We have
  \[
    v_{t+1,i}^2=(1-\eta\lambda_i)^2\left(\|v_t\|-\gamma_i\right)^2\frac{v_{t,i}^2}{\|v_t\|^2},
  \]
and so, for all $i$, if $v_{t+1,i}^2=v_{t,i}^2$, then, either 
$v_{t,i}^2=0$ 
or $(\|v_t\|-\gamma_i)^2=\|v_t\|^2/(1-\eta\lambda_i)^2$. This quadratic equation has only one non-negative solution, 
% $\|v_t\|^2=\beta_i^2$ 
$\|v_t\|=\beta_i$ 
(and for 
% $\|v_t\|^2>\beta_i^2$, 
$\|v_t\|>\beta_i$
$v_{t+1,i}^2<v_{t,i}^2$, proving part~\ref{part:two}). And so if $v_{t,i}\not=0$ for some $i$, then every $v_{t,j}$ with $\beta_j\not=\beta_i$ must have $v_{t,j}=0$, and in that case, $\|v_t\|^2=\sum_{j:\beta_j=\beta_i} v_{t,j}^2$. This proves part~\ref{part:one}.

To see why part~\ref{part:three} is true, notice that,
if $\| v_t \| \leq b$, we have
  \begin{align*}
      \|v_{t+1}\|^2
        & = \sum_i (1-\eta\lambda_i)^2(\|v_t\|-\gamma_i)^2 \frac{v_{t,i}^2}{\|v_t\|^2} \\
        & \leq \sum_i (1-\eta\lambda_i)^2\max\{\|v_t\|^2,\gamma_i^2\} \frac{v_{t,i}^2}{\|v_t\|^2} \\
        & \le \max_i \left\{(1-\eta\lambda_i)^2 \max\{\|v_t\|^2,\gamma_i^2\}\right\} \\
        & \leq \max_i \left\{\max\{(1-\eta\lambda_i)^2 b^2,(1-\eta\lambda_i)^2 \gamma_i^2\}\right\} \\
        & = \max_i \left\{\max\{(1-\eta\lambda_i)^2 (1-\eta\lambda_1)^2\gamma_1^2,(1-\eta\lambda_i)^2 \gamma_i^2\}\right\} \\
        & = \max_i \left\{(1-\eta\lambda_i)^2 \gamma_i^2\right\} \\
        & = (1-\eta\lambda_1)^2\gamma_1^2 = b^2.
  \end{align*}

For part~\ref{part:four}, $v_{t+1,1}^2/v_{t,1}^2 > v_{t+1,i}^2/v_{t,i}^2$ iff
  \[
    (1-\eta\lambda_1)^2(\|v_t\|-\gamma_1)^2 > (1-\eta\lambda_i)^2(\|v_t\|-\gamma_i)^2.
  \]
But because
$1-\eta\lambda_1<1-\eta\lambda_2\le 1-\eta\lambda_i$, $\gamma_1>\gamma_2\ge\gamma_i$, and $(1-\eta\lambda_1)\gamma_1>(1-\eta\lambda_2)\gamma_2\ge (1-\eta\lambda_i)\gamma_i$, we can apply Lemma~\ref{lemma:compare-quadratics}, which shows that for $\|v_t\|>0$, this is equivalent to $\|v_t\|<\alpha_i$. Clearly, the inequality also holds at $\|v_t\|=0$.

To see part~\ref{part:five}, first notice that,
for $f(x) = x^2/(2 - \eta x)$, $f'(x) \geq 0$ for
all $x \in [0,1/\eta)$, which implies
that the $\beta_i$ are non-increasing in $i$.
Also,
  \[
    \alpha_d = \frac{(1-\eta\lambda_1)\gamma_1 + (1-\eta\lambda_d)\gamma_d}
    {1-\eta\lambda_1+1-\eta\lambda_d} \ge \frac{1-\eta\lambda_1}
    {1-\eta\lambda_1+1-\eta\lambda_d}\gamma_1 \ge \frac{1-\eta\lambda_1}
    {2-\eta\lambda_1}\gamma_1 = \beta_1,
  \]
and the last inequality is strict iff $\lambda_d>0$.
Also, for $i<j$, $\gamma_i\ge\gamma_j$ and $1-\eta\lambda_i\le 1-\eta\lambda_j$, hence
  \[
    \alpha_i = \frac{(1-\eta\lambda_1)\gamma_1 + (1-\eta\lambda_i)\gamma_i}
    {1-\eta\lambda_1+1-\eta\lambda_i} \ge \frac{(1-\eta\lambda_1)\gamma_1 + (1-\eta\lambda_i)\gamma_j}
    {1-\eta\lambda_1+1-\eta\lambda_i} \ge \frac{(1-\eta\lambda_1)\gamma_1 + (1-\eta\lambda_j)\gamma_j}
    {1-\eta\lambda_1+1-\eta\lambda_j} = \alpha_j.
  \]
Finally, $\beta_1<(1-\eta\lambda_1)\gamma_1$ because $\eta\lambda_1<1$.

% part:six not used?
%
% To see part~\ref{part:six}, notice that $\alpha_i\ge b=(1-\eta\lambda_1)\gamma_1$ iff $(1-\eta\lambda_i)\gamma_i \ge(1-\eta\lambda_1-\eta\lambda_i)(1-\eta\lambda_1)\gamma_1$, which is equivalent to
% $(1-\eta\lambda_1-\eta\lambda_i)\lambda_1^2 \le \lambda_i^2$.
% Solving the quadratic inequality, this is equivalent to
%   \begin{align*}
%       \lambda_i
%         & \ge \lambda_1\frac{\eta\lambda_1}{2}\left(\sqrt{1+\frac{4(1-\eta\lambda_1)}{(\eta\lambda_1)^2}}-1\right) \\
%         & = \lambda_1\left(\sqrt{\left(\frac{\eta\lambda_1}{2}\right)^2+1-\eta\lambda_1}-\frac{\eta\lambda_1}{2}\right) \\
%         & = \lambda_1\left(1-\eta\lambda_1\right).
%   \end{align*}
\end{proof}

% \subsection{Descent to the neighborhood for the optimum}
\subsection{Early descent}
\label{s:early}

In this section, we show that SAM rapidly descends towards the gradient ball $\|v_t\|\le b$ and that under our conditions on the initialization, it reaches this ball with the magnitude of the first component, $|v_{t,1}|$, bounded away from zero.
We shall see in Section~\ref{s:avoiding} that this ensures SAM,
applied to the quadratic loss
$\ell$,
converges to the leading eigenvector direction.

The following lemma shows that when the solution is far from the optimum, SAM
rapidly descends toward the optimum and the relative magnitude of the first component of the gradient does not get too small.

\begin{lemma}\label{l:fast_to_b}
% Fix $\eta\le \frac{1}{2 \lambda_1}$, and s
Suppose that, for $R > 0$,
we have
$\|v_0\|\le R$. For any $t\ge T: =  [\log(R/ b)]_+/(\eta\lambda_d)$, we have $\|v_{t}\|\le b$.

Furthermore, if, for $\Delta > 0$, $\left|\|v_t\|-\gamma_1\right|\ge\Delta$ for all $t$, then there is a
$T_0\le T$ satisfying $\|v_{T_0}\|\le b$ and
\begin{align}
  \label{e:p_one_big}
    \frac{\sum_{i=2}^d v_{T_0,i}^2}{v_{T_0,1}^2} \le
    \left(\frac{2 R}{\Delta}\right)^{2T}
    \frac{\sum_{i=2}^d v_{0,i}^2}{v_{0,1}^2}.
\end{align}
Thus,
  \[
    \log\left(\frac{\sum_{i=2}^d v_{T_0,i}^2}{v_{T_0,1}^2}\right) \le
    \frac{2}{\eta\lambda_d}\left[\log\left(\frac{R}{ b}\right) \right]_+ \log\left(\frac{ 2 R}{\Delta}\right) +
    \log\left(\frac{R^2}{v_{0,1}^2}\right).
  \]
\end{lemma}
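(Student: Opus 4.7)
The plan is to split the claim into its two parts and handle each with a per-step estimate on the coordinate update $v_{t+1,i}=(1-\eta\lambda_i)(\|v_t\|-\gamma_i)v_{t,i}/\|v_t\|$ derived earlier in the section. For the first assertion, I would show that whenever $\|v_t\|\ge b$ the norm contracts geometrically, $\|v_{t+1}\|\le(1-\eta\lambda_d)\|v_t\|$. Squaring the coordinate update, this reduces to proving $(1-\eta\lambda_i)\lvert\|v_t\|-\gamma_i\rvert\le(1-\eta\lambda_d)\|v_t\|$ for every $i$. The key observation is that $\eta\lambda_1<1/2$ forces $\gamma_i\le\gamma_1=b/(1-\eta\lambda_1)\le 2b$, so $\|v_t\|\ge b$ gives $\gamma_i/\|v_t\|\in[0,2]$ and hence $\lvert 1-\gamma_i/\|v_t\|\rvert\le 1$; combining with $1-\eta\lambda_i\le 1-\eta\lambda_d$ yields the estimate. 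Iterating this for $T=[\log(R/b)]_+/(\eta\lambda_d)$ steps from $\|v_0\|\le R$ drives $\|v_t\|$ into the ball $\{\|v\|\le b\}$, after which Lemma~\ref{l:one_step}, part~\ref{part:three}, keeps it there.

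For the second assertion, let $T_0\le T$ denote the first iteration at which $\|v_{T_0}\|\le b$ (the claim is trivial if $\|v_0\|\le b$, so I may take $T_0\ge 1$ and hence $R\ge b$). For each $t<T_0$, the coordinatewise ratio evolves as
\[
\frac{v_{t+1,i}^2}{v_{t+1,1}^2}=\left(\frac{(1-\eta\lambda_i)(\|v_t\|-\gamma_i)}{(1-\eta\lambda_1)(\|v_t\|-\gamma_1)}\right)^{\!2}\frac{v_{t,i}^2}{v_{t,1}^2}.
\]
I would bound the numerator by noting that $(1-\eta\lambda_i)\gamma_i=\eta\rho\lambda_i^2\le b$, so the elementary inequality $|a-b|\le\max\{a,b\}$ (valid for $a,b\ge 0$) yields $(1-\eta\lambda_i)\lvert\|v_t\|-\gamma_i\rvert\le\max\{\|v_t\|,b\}\le R$; for the denominator, the hypothesis $\lvert\|v_t\|-\gamma_1\rvert\ge\Delta$ combined with $\eta\lambda_1\le 1/2$ gives $(1-\eta\lambda_1)\lvert\|v_t\|-\gamma_1\rvert\ge\Delta/2$. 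Hence each coordinate ratio grows by at most $(2R/\Delta)^2$ per step, and summing over $i\ge 2$ and iterating $T_0\le T$ times produces~\eqref{e:p_one_big}; the log form follows by taking logarithms and using $\sum_{i\ge 2}v_{0,i}^2\le\|v_0\|^2\le R^2$.

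The main technical point to navigate is squeezing the constants out of the two separate uses of $\eta\lambda_1<1/2$: obtaining the intended $(2R/\Delta)^{2T}$ shape (rather than a looser $(4R/\Delta)^{2T}$) depends on bounding the numerator by $\max\{\|v_t\|,(1-\eta\lambda_i)\gamma_i\}\le\max\{\|v_t\|,b\}$ instead of by the more obvious $\max\{\|v_t\|,\gamma_i\}$. Replacing $T_0$ by $T$ in the exponent also implicitly relies on $2R/\Delta\ge 1$, which holds in the nontrivial regime $R\ge b$ because $\gamma_1\le 2b$; everything else is bookkeeping.
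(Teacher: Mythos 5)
Your proof is correct and takes essentially the same route as the paper: geometric contraction $\|v_{t+1}\| \le (1-\eta\lambda_d)\|v_t\|$ while $\|v_t\| \ge b$ to reach the ball within $T$ steps, followed by a per-step bound on $v_{t,i}^2/v_{t,1}^2$ using $|\|v_t\|-\gamma_1|\ge\Delta$. Your numerator bound $(1-\eta\lambda_i)|\|v_t\|-\gamma_i|\le\max\{\|v_t\|,b\}\le R$ via $|a-b|\le\max\{a,b\}$ is a cosmetic reshuffling of the paper's use of $(\|v_t\|-\gamma_i)^2\le\|v_t\|^2$ (valid since $\|v_t\|\ge b\ge\gamma_i/2$) together with $(1-\eta\lambda_d)/(1-\eta\lambda_1)\le 2$; both yield the same $(2R/\Delta)^2$ per-step factor, and your explicit check that $2R/\Delta\ge 1$ is a small detail the paper leaves implicit.
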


\begin{proof}
If $R \leq b$,
the lemma is an obvious consequence of Part 5 of Lemma~\ref{l:one_step} ;  assume for
the rest of the proof that $R > b$.

Notice that $\|v_t\|^2\ge(\|v_t\|-\gamma_i)^2$ if and only if $2\|v_t\|\ge\gamma_i$. But
  \[
     b=(1-\eta\lambda_1)\gamma_1
    \ge \gamma_1/2\ge\gamma_i/2.
  \]
Thus, for any $\|v_t\|\ge b$, we have
  \begin{align}
  \nonumber
      \|v_{t+1}\|^2
        & = \sum_{i=1}^d (1-\eta\lambda_i)^2\left(\|v_t\|-\gamma_i\right)^2\frac{v_{t,i}^2}{\|v_t\|^2} \\
          \nonumber
        & \le\max_i (1-\eta\lambda_i)^2\|v_t\|^2 \\
        \label{e:v_smaller}
        & = (1-\eta\lambda_d)^2\|v_t\|^2.
  \end{align}
From Lemma~\ref{l:one_step}, part~\ref{part:three}, if $\|v_t\|\le b$ then $\|v_{t'}\|\le b$ for $t'\ge t$. Thus, for all $t$ satisfying $(1-\eta\lambda_d)^t\|v_0\|\le b$, we have $\|v_t\|\le b$. This is equivalent to
  \[
    t\ge\frac{\log(\|v_0\|/ b)}{\log(1/(1-\eta\lambda_d))}.
  \]
Since $\log(1-\eta\lambda_d)\le-\eta\lambda_d$,
it suffices if
  \[
    t\ge T = \frac{\left[\log(R/ b)\right]_+}{\eta\lambda_d}.
  \]

For the second part of the lemma, as long as $\|v_t\|\ge b$ we have
  \begin{align*}
      \frac{v_{t+1,i}^2}{v_{t+1,1}^2}
        & = \frac{(1-\eta\lambda_i)^2(\|v_t\|-\gamma_i)^2 v_{t,i}^2}
        {(1-\eta\lambda_1)^2(\|v_t\|-\gamma_1)^2 v_{t,1}^2} \\
        & \le \frac{(1-\eta\lambda_i)^2\|v_t\|^2 v_{t,i}^2}
        {(1-\eta\lambda_1)^2\Delta^2 v_{t,1}^2} \\
        & \le \frac{(1-\eta\lambda_d)^2R^2 v_{t,i}^2}
        {(1-\eta\lambda_1)^2\Delta^2 v_{t,1}^2}.
  \end{align*}
Thus, if $T_0$ is the first iterate for which
$\| v_{T_0} \| < b$, we have
\[
\frac{v_{T_0,i}^2}{v_{T_0,1}^2}
\leq  \left(\frac{2 R}{\Delta}\right)^{2T_0}
    \frac{\sum_{i=2}^d v_{0,i}^2}{v_{0,1}^2}
\leq  \left(\frac{2 R}{\Delta}\right)^{2T}
    \frac{\sum_{i=2}^d v_{0,i}^2}{v_{0,1}^2},
\]
completing the proof.
\end{proof}

If $\|v_t\| = \gamma_1$, then $v_{t',1} = 0$ for all $t' > t$, and, if $\|v_t\|$ is very close to $\gamma_1$,
the first component of $v_{t+1}$ could be
small enough that it takes a long time to recover.
Lemma~\ref{l:far_from_gammaone} establishes that this is unlikely.

\begin{lemma}\label{l:far_from_gammaone}
Fix a constant $A$
% , and suppose $\eta \leq \frac{1}{2 \lambda_1}$
and $R > 0$.
For all $\delta > 0$, if $v_0 \in\Re^d$ is
chosen randomly from a distribution 
such that $\Pr[\| v_0 \| > R] \leq \delta$,
and whose density is bounded above by $A$, then, 
with probability $1 - 2 \delta$,
for all $t$, 
$|\|v_t\|-\gamma_1| \geq \Delta$, where
  \[
      \Delta = \frac{\Gamma(d/2)\delta}{4\pi^{d/2}(2\gamma_1)^{d-1}T_0 A}\left(\frac{(\eta\lambda_d)^{d+3}\gamma_1^3}{9\cdot 6^{d+3}R^3}\right)^{T_0}
  \]
and $T_0 \leq \frac{[\log(R/ b)]_+}{\eta\lambda_d}$.
Thus,
\[
  \log\frac{1}{\Delta} 
  \le 
  \frac{[\log(R/ b)]_+ \log\left(\frac{9\cdot 6^{d+3}R^3}{(\eta\lambda_d)^{d+3}\gamma_1^3}\right)}{\eta\lambda_d}  + \log\left(\frac{4\pi^{d/2}(2\gamma_1)^{d-1}[\log(R/ b)]_+ A}{\Gamma(d/2)\delta \eta\lambda_d}\right).
\]
\end{lemma}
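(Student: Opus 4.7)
My approach is to bound the probability that $\|v_t\|$ comes within $\Delta$ of the critical value $\gamma_1$ at any iteration, using a finite-horizon reduction and a change-of-variables argument at each time.

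First, I would reduce to finitely many iterates. Once $\|v_t\| \leq b$, Part~\ref{part:three} of Lemma~\ref{l:one_step} keeps it there forever, giving the deterministic gap $|\|v_t\| - \gamma_1| \geq \gamma_1 - b = \eta\lambda_1\gamma_1$; since the claimed $\Delta$ carries an exponentially small factor, $\Delta \leq \eta\lambda_1\gamma_1$ holds in the relevant regime. By Lemma~\ref{l:fast_to_b}, this absorbing ball is reached by iteration $T_0 \leq [\log(R/b)]_+/(\eta\lambda_d)$ provided $\|v_0\|\leq R$, an event of probability $\geq 1-\delta$ by hypothesis. Using a further $\delta$ of probability budget, it suffices to union-bound the events $\{|\|v_t\|-\gamma_1|<\Delta\}$ over $t = 0, 1, \ldots, T_0 - 1$.

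For each such $t$, the bad event places $v_t$ in the spherical shell $S_\Delta = \{v : |\|v\| - \gamma_1| < \Delta\}$, whose Lebesgue measure is at most $\frac{4\pi^{d/2}(2\gamma_1)^{d-1}\Delta}{\Gamma(d/2)}$ when $\Delta \leq \gamma_1$. Letting $F_t : v_0 \mapsto v_t$ be the forward map, a change of variables together with the density bound $A$ gives $\Pr[|\|v_t\| - \gamma_1| < \Delta] \leq A \cdot \mathrm{vol}(S_\Delta) \cdot \sup_{\|v_0\|\le R}|\det F_t'(v_0)|^{-1}$. Summing over $t < T_0$, setting the total $\leq \delta$, and solving for $\Delta$ reproduces the claimed expression exactly when one can show the Jacobian bound $\sup |\det F_t'|^{-1} \leq \left(\frac{9\cdot 6^{d+3}R^3}{(\eta\lambda_d)^{d+3}\gamma_1^3}\right)^{T_0}$ uniformly on $\{\|v_0\| \le R\}$.

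The main obstacle is this Jacobian lower bound. By the chain rule $F_t' = \prod_{s<t} J_s$, where $J_s = D_s + (\eta\rho/\|v_s\|^3)\Lambda^2 v_s v_s^\top$ with $D_s = \diag((1 - \eta\lambda_i)(1 - \gamma_i/\|v_s\|))$, and the matrix determinant lemma then expresses $|\det J_s|$ explicitly as $|\det D_s|\cdot|1+\eta\rho\, v_s^\top \Lambda^2 D_s^{-1} v_s/\|v_s\|^3|$. The difficulty is that the $i$-th diagonal entry of $D_s$ vanishes at $\|v_s\| = \gamma_i$, so $|\det D_s|$ is not bounded below by any positive constant on the relevant domain; a careful case analysis according to which regime $\|v_s\|$ lies in will be required, using the rank-one correction to compensate for vanishing diagonal entries. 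Once a per-step lower bound of order $(\eta\lambda_d)^{d+3}\gamma_1^3/(9\cdot 6^{d+3}R^3)$ is obtained uniformly over $\|v_s\| \le R$, composing over $T_0$ steps yields the required bound and the remaining computation is routine bookkeeping to solve the probability inequality for $\Delta$.
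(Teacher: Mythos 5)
Your high-level strategy---bound the density of $v_t$ via change of variables, intersect with the thin spherical shell $\{v : |\|v\|-\gamma_1|<\Delta\}$, union-bound over $t < T_0$---is the same as the paper's. But the obstacle you flag at the end is not a detail to be worked out; as stated, your sketch does not actually resolve it, and the resolution you hint at cannot work.

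The diagonal entry $(1-\eta\lambda_i)(1-\gamma_i/\|v_s\|)$ vanishing at $\|v_s\|=\gamma_i$ is not an artefact of the decomposition: at such a point the forward map $f$ genuinely collapses the $i$-th coordinate (the $i$-th component of $f(v)$ is zero regardless of $v_i$), so $f$ fails to be injective and its Jacobian determinant is exactly zero. No ``case analysis using the rank-one correction'' can produce a uniform lower bound on $|\det J_s|$ over $\{\|v_s\| \le R\}$, because the determinant really does go to zero and the pushforward density really can blow up. So the claimed Jacobian bound is false as stated, and the probability estimate that depends on it has no support.

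The missing idea is a dynamical one: one never needs a Jacobian bound on the problematic region. Set $\epsilon = \eta\lambda_d\gamma_1/(2-\eta\lambda_d)$, which is strictly larger than the target $\Delta$. Restrict the density-tracking argument to the domain $\{v : \|v\| \ge \gamma_1 + \epsilon\}$. On this set the diagonal entries $g_i - h_i/r$ are bounded below by $\eta\lambda_d/6$ (the paper's explicit calculation), so $f$ is invertible there and $|\det \nabla f^{-1}|$ is bounded above by $9R^3/((\eta\lambda_d/6)^{d+3}\gamma_1^3)$. The reason this restriction suffices is that $\|v_t\|$ is monotone decreasing while it exceeds $b$, and the single step taken from any $v_t$ with $\|v_t\| < \gamma_1+\epsilon$ contracts by a factor $\le 1-\eta\lambda_d$, landing $\|v_{t+1}\|$ strictly below $\gamma_1-\epsilon < \gamma_1 - \Delta$; thereafter $\|v_{t'}\|$ never re-enters $[\gamma_1-\Delta,\gamma_1+\Delta]$. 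So the only iterates that can land in the danger shell are images of points with $\|v_t\| \ge \gamma_1+\epsilon$, there are at most $T_0$ of these, and on that domain the per-step density inflation is controlled. Once you add this step your argument closes and matches the paper's.
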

\begin{proof}
Before delving into the details, here is the outline of the proof.
We argue that at every step when $\|v_t\|$ is bigger than $\gamma_1$, the density is small, and hence $\| v_{t+1} \|$ is unlikely to fall in the interval $[\gamma_1-\Delta,\gamma_1+\Delta]$. We consider all steps until $\|v_t\|<\gamma_1+\epsilon$, where $\epsilon$ is larger than $\Delta$ and is chosen so that, if $\|v_t\|<\gamma_1+\epsilon$, then
$\|v_{t+1}\|$ drops below $\gamma_1-\epsilon \leq \gamma_1-\Delta$.
We choose $\epsilon=\eta\lambda_d\gamma_1/(2-\eta\lambda_d)$ for this purpose:
the proof of the previous lemma shows that $\|v_{t+1}\|\le(1-\eta\lambda_d)\|v_t\|$, and
with our choice of $\epsilon$, $\|v_t\|<\gamma_1+\epsilon$ implies $\|v_{t+1}\|<(1-\eta\lambda_d)(\gamma_1+\epsilon)<\gamma_1-\epsilon$.
We compute an upper bound on the factor by which the density increases at each step when $\|v_t\|\ge\gamma_1+\epsilon$.
Lemma~\ref{l:fast_to_b} shows that there cannot be many such steps.

Let $f$ denote the mapping from $v_t$ to $v_{t+1}$ whose domain
is $\{ v : \| v \| \geq \gamma_1+\epsilon \}$, so that
% \begin{align*}
%     f(v) 
%   & = \diag(1-\eta\lambda_1,\ldots,1-\eta\lambda_d)v-\diag((1-\eta\lambda_1)\gamma_1,\ldots,(1-\eta\lambda_d)\gamma_d)v/\|v\| .
% \end{align*}
% If 
if
we define $G = \diag(1-\eta\lambda_1,\ldots,1-\eta\lambda_d)$
and $H = \diag((1-\eta\lambda_1)\gamma_1,\ldots,(1-\eta\lambda_d)\gamma_d)$, then we can write
\[
f(v) = G v + H \frac{v}{\|v\|}.
\]

If $\mu_t$ is the density of $v_t$ and $f$ is invertible, and we denote the
Jacobian of $f^{-1}$ by
$\nabla f^{-1}$, then the density $\mu_{t+1}$ of $v_{t+1}$ is the pushforward measure
  \[
    \mu_{t+1}(x) = \mu_t(f^{-1}(x))\left|(\nabla f^{-1})(x)\right|.
  \]
To see that $f$ is indeed invertible, we write $v=r\hat v$, for $r>0$ and $\|\hat v\|=1$, and $y=f(v)$. Then
  \[
    y = G v -H \hat v = \left(rG-H \right)\hat v.
  \]
To see that $rG - H$ is invertible, note that
\begin{align*}
rG-H 
 & = \diag\left((1-\eta\lambda_1) r-(1-\eta\lambda_1)\gamma_1,
           ..., (1-\eta\lambda_d) r-(1-\eta\lambda_d)\gamma_d
      \right) \\
 & = \diag\left((1-\eta\lambda_1) (r - \gamma_1),
           ..., (1-\eta\lambda_d) (r - \gamma_d)
      \right),
\end{align*}
and each entry is positive because $r=\|v_t\|>\gamma_1\ge\gamma_i$.
This means $r$ is the unique solution to $y^\top(rG-H)^{-2}y=1$ and
$\hat{v} = \left(rG-H \right)^{-1} y$, and then 
\begin{align*}
v & = r \hat{v} 
% \\    
%   & 
  = r \left(rG-H \right)^{-1} y 
  = \left(G-H/r \right)^{-1} y. 
\end{align*}
To compute the Jacobian of $f^{-1}$, let's compute $dr/dy_j$ by differentiating the equation defining $r$. Adopting the
shorthand $g_i = G_{ii}$ and $h_i = H_{ii}$, we have
  \begin{align*}
    &&&\sum_{i=1}^d \frac{d}{dy_j} \frac{y_i^2}{(r g_i-h_i)^2} = 0 \\
    &\Leftrightarrow&
    &\frac{2y_j}{(r g_j-h_j)^2} - \frac{dr}{dy_j}\sum_{i=1}^d\frac{y_i^2 g_i}
    {2(r g_i-h_i)^3} = 0 \\
    &\Leftrightarrow&
    &\frac{dr}{dy_j}= \frac{2y_j}
    {(r g_j-h_j)^2\sum_{i=1}^d\frac{y_i^2 g_i}{2(r g_i-h_i)^3}}.
  \end{align*}
Next, we use $v_i=y_i/(g_i-h_i/r)$ to obtain the $i,j$ entry of the Jacobian of $f^{-1}$:
  \begin{align*}
      \frac{dv_i}{dy_j}
      & = \frac{\delta_{ij}}{g_i-h_i/r}
      + \frac{v_i h_i}{(g_i-h_i/r)^2r^2}\frac{dr}{dy_j} \\
      & = \frac{\delta_{ij}}{g_i-h_i/r}
      + \frac{2y_i h_i y_j}{(g_i-h_i/r)^2r^2(r g_j-h_j)^2\sum_{k=1}^d\frac{v_k^2g_k}
    {2(rg_k-h_k)^3}}.
  \end{align*}
Assembling these partial derivatives into the
Jacobian $\nabla f^{-1}$
% This 
yields the sum of an invertible diagonal matrix 
and a rank one matrix.  
We can use the fact that
$\det(A+ab^\top)=\det(A)(1+b^\top A^{-1}a)$ to get an explicit expression:
 \begin{align}
 \label{e:det}
     \det\left(\nabla f^{-1} \right)
            & = \frac{1+\displaystyle\sum_{i=1}^d \displaystyle\frac{2y_i^2h_i}{(g_i-h_i/r) r^2(r g_i - h_i)^2\sum_{k=1}^d\frac{y_k^2 g_k}{2(r g_k-h_k)^3}}}
      {\prod_{i=1}^d(g_i-h_i/r)}.
  \end{align}
Since $r\ge\gamma_1+\epsilon$, $r\le\|v_0\|$ and with probability at least $1-\delta$, $\|v_0\|\le R$, we have 
\begin{align*}
| g_i-h_i/r| &  = (1-\eta\lambda_i)\left(1 - \frac{\gamma_i}{r} \right) \\  
          & \geq (1-\eta\lambda_i)\left(1 - \frac{\gamma_i}{\gamma_1 + \epsilon} \right) \\ 
          & \geq (1-\eta\lambda_1)\left(1 - \frac{\gamma_1}{\gamma_1 + \epsilon} \right) \\ 
          & = (1-\eta\lambda_1)\left(1 - \frac{1}{1 + \epsilon/\gamma_1} \right) \\ 
          & = (1-\eta\lambda_1)\left(1 - \frac{1}{1 + \frac{\eta \lambda_d}{2 - \eta \lambda_d}}  \right).
\end{align*}
Recalling that $\eta \lambda_d < \eta \lambda_1 \leq 1/2$, this gives
\begin{align*}
| g_i-h_i/r| & \geq \frac{\eta \lambda_d}{6}.
\end{align*}
Defining $B = \frac{\eta \lambda_d}{6}$, 
substituting into \eqref{e:det}, we get
 \begin{align*}
      \left|\det\left(\nabla f^{-1} \right)\right|
      & \le \frac{1}{B^d}+\frac{\|y\|^22(1-\eta\lambda_1)\gamma_1}{B^d Br^4B^2\sum_{k=1}^d\frac{y_k^2 g_k}{2(r g_k-h_k)^3}} \\
      & \le \frac{1}{B^d}+\frac{2}{B^{d+3}\gamma_1^3
      \min_{k=1}^d\frac{g_k}{2(r g_k -h_k)^3}} \\
      & \le \frac{1}{B^d}+\frac{4r^3(1-\eta\lambda_d)^3}{B^{d+3}\gamma_1^3
      (1-\eta\lambda_1)} \\
      & \le \frac{1}{B^d}+\frac{8R^3}{B^{d+3}\gamma_1^3} \\
      & \le \frac{9R^3}{B^{d+3}\gamma_1^3}.
 \end{align*}
% Substituting our choice of $\epsilon$ gives
%   \[
%     B=\frac{(1-\eta\lambda_1)\epsilon}{\gamma_1}=\frac{(1-\eta\lambda_1)\eta\lambda_d}{2-\eta\lambda_d}\ge\frac{\eta\lambda_d}{4}.
%   \]
Since the density of the initial $v_0$ is upper bounded by $A$ and Lemma~\ref{l:fast_to_b} shows that $\|v_{T_0}\|< b < \gamma_1 + \epsilon$, for all $t\le T_0$, the density in the ring $\{v:\gamma_1-\Delta\le \|v\|\le\gamma_1+\Delta\}$ is no more than
  \[
    \bar A := \left(\frac{9\cdot 6^{d+3}R^3}{(\eta\lambda_d)^{d+3}\gamma_1^3}\right)^{T_0} A.
  \]
This implies that for all $t$,
  \begin{align*}
    \Pr[\gamma_1-\Delta\le \|v_t\|\le\gamma_1+\Delta]
      &\le 2\Delta S_{d-1}(\gamma_1+\Delta) \bar A \\
      &= 2\Delta \frac{2\pi^{d/2}}{\Gamma(d/2)}(\gamma_1+\Delta)^{d-1} \bar A\\
      &\le \Delta \frac{4\pi^{d/2}}{\Gamma(d/2)}(2\gamma_1)^{d-1} \bar A \\
      &\le \frac{\delta}{T_0}
  \end{align*}
where $S_{d-1}(r)$ is the surface area of the $(d-1)$-sphere of radius $r$ in $\Re^d$.  Since there are at most
$T_0$ iterations for which $\| v_t \| \geq b$,
there are at most $T_0$ steps for which
$\| v_t \| \geq \gamma_1 + \epsilon$,
which completes the proof.
\end{proof}

\subsection{SAM as gradient descent}
\label{s:gd}

The analysis of SAM is complicated by the presence of the $\|\Lambda w_t\|$ term, which couples all $d$ components of the recurrence. The following lemma shows that if we incorporate an alternating sign, we can view SAM as a gradient descent update based on a non-convex objective function.
Note that the lemma does not require that $\lambda_1>\lambda_2$.
\begin{lemma}
\label{l:u}
For
$u_t:=(-1)^tw_t$, if $\|w_t\|>0$ for all $t$, the iteration
    \[
      w_{t+1,i} = -\eta\rho\lambda_i^2\frac{w_{t,i}}{\|\Lambda w_t\|}
      + (1-\eta\lambda_i)w_{t,i}
    \]
for $i=1,\ldots,d$ is equivalent to 
  \[
    u_{t+1} = u_t - \eta\rho \nabla J(u_t),
  \]
where
  \[
    J(u) = \frac{1}{2} u^{\top} C  u - \left\lVert \Lambda u \right\rVert
     = \frac{1}{2} \sum_{i=1}^d \frac{\lambda_i^2 u_i^2}{\beta_i} - \sqrt{\sum_{i=1}^d \lambda_i^2 u_i^2}
  \]
with
  \[
    C = \frac{1}{\eta\rho}(2I - \eta \Lambda) = \diag\left(\frac{\lambda_1^2}{\beta_1},\ldots, \frac{\lambda_d^2}{\beta_d}\right).
  \]

Furthermore, $J$ has derivatives
  \begin{align*}
      \nabla J(u) & = C u-\frac{ \Lambda^2 u}{\|\Lambda u\|}, \\
      \nabla^2 J(u) & = C - \frac{1}{\|\Lambda u\|}\Lambda P_{\perp} \Lambda
  \end{align*}
where $P_{\perp}= I -
      \Lambda u u^\top\Lambda/\|\Lambda u\|^2$ is the projection on to the
subspace orthogonal to $\Lambda u$. 
Further, $\nabla J(u)=0$ iff %$u\in\{\pm\beta_i e_i\}$,
for some $1\le i\le d$, $\|u\|=\beta_i/\lambda_i$ and $u\in\myspan\{e_j:\lambda_j=\lambda_i\}$.

Also,
  \[
    J(u_{t+1})-J(u_t)
      \le - \frac{1}{2\rho} \sum_{i=1}^d u_{t,i}^2 \left(1-\frac{\beta_i}{\|\Lambda u\|}\right)^2
      (2-\eta\lambda_i)^2\lambda_i.
  \]

For unit norm $\hat u$ satisfying $\nabla J\left(\displaystyle\frac{\beta_i}{\lambda_i} \hat u\right)=0$,
  \[
    \nabla^2 J\left(\frac{\beta_i}{\lambda_i} \hat u\right)
     = \Lambda^2 \left(\sum_{j:\beta_j\not=\beta_i} \left(\frac{1}{\beta_j}-\frac{1}{\beta_i}\right)e_je_j^\top + \frac{1}{\beta_i}e_ie_i^\top\right)
  \]
which has $|\{j:\beta_j<\beta_i\}|+1$ positive eigenvalues, $|\{j:\beta_j>\beta_i\}|$ negative eigenvalues, and
$|\{j:\beta_j=\beta_i\}|-1$ zero eigenvalues.
\end{lemma}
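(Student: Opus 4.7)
For the equivalence to gradient descent, I substitute $u_t = (-1)^t w_t$ directly into the SAM recurrence. Since $\|\Lambda w_t\| = \|\Lambda u_t\|$, the per-coordinate update becomes $u_{t+1,i} = [-(1-\eta\lambda_i) + \eta\rho\lambda_i^2/\|\Lambda u_t\|]\, u_{t,i}$, and the identity $\eta\rho C = 2I - \eta\Lambda$ (equivalently, $\lambda_i^2/\beta_i = (2-\eta\lambda_i)/(\eta\rho)$) rewrites this as $u_{t,i} - \eta\rho(Cu_t - \Lambda^2 u_t/\|\Lambda u_t\|)_i$, matching $u_t - \eta\rho\nabla J(u_t)$. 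The formulas for $\nabla J$ and $\nabla^2 J$ then follow by the chain rule applied to $\|\Lambda u\| = \sqrt{u^\top\Lambda^2 u}$, with a short factoring step pulling $\Lambda$ outside the rank-one correction to produce the $\Lambda P_\perp \Lambda/\|\Lambda u\|$ form. For the critical point classification, I read $\nabla J(u) = 0$ coordinate-wise: for each $i$, either $u_i = 0$ or $\|\Lambda u\| = \beta_i$. Strict monotonicity of $\beta_i$ in $\lambda_i$ forces all nonzero coordinates to share a single $\lambda_i$, yielding $u \in \myspan\{e_j : \lambda_j = \lambda_i\}$ and $\|u\| = \beta_i/\lambda_i$.

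The descent inequality is the step I expect to be the hardest, and the key observation I plan to exploit is that although $J$ is non-convex, its non-quadratic part $-\|\Lambda u\|$ is the negative of a \emph{convex} function. Convexity of $u \mapsto \|\Lambda u\|$ gives the lower tangent bound $\|\Lambda u_{t+1}\| \ge \|\Lambda u_t\| + (u_{t+1}-u_t)^\top \Lambda^2 u_t/\|\Lambda u_t\|$, which, combined with the exact second-order expansion of $\tfrac{1}{2}u^\top C u$, yields the one-sided majorization $J(u_{t+1}) - J(u_t) \le (u_{t+1}-u_t)^\top \nabla J(u_t) + \tfrac{1}{2}(u_{t+1}-u_t)^\top C (u_{t+1}-u_t)$. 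Substituting $u_{t+1} - u_t = -\eta\rho\nabla J(u_t)$ and using that $C$ is diagonal, the $i$-th coordinate contributes $(-\eta\rho + \tfrac{1}{2}(\eta\rho)^2 \lambda_i^2/\beta_i)(\nabla J(u_t))_i^2 = -(\eta^2\rho\lambda_i/2)(\nabla J(u_t))_i^2$. Expanding $(\nabla J(u_t))_i^2 = (\lambda_i^2 u_{t,i}/\beta_i)^2 (1 - \beta_i/\|\Lambda u_t\|)^2$ and collecting constants reproduces exactly the stated per-coordinate expression.

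For the Hessian at a critical point $u^\star = (\beta_i/\lambda_i)\hat u$, the plan is to plug in $\|\Lambda u^\star\| = \beta_i$ and note that $\Lambda u^\star/\|\Lambda u^\star\| = \hat u$ lies in the eigenspace for $\lambda_i$, so $\Lambda P_\perp \Lambda = \Lambda^2 - \lambda_i^2 \hat u \hat u^\top$. The Hessian then block-decomposes across $S_i := \{j : \lambda_j = \lambda_i\}$ and its complement: outside $S_i$ it is diagonal with entries $\lambda_j^2(1/\beta_j - 1/\beta_i)$, whose signs track $\mathrm{sign}(\beta_i - \beta_j)$; inside $S_i$ it reduces to the rank-one matrix $(\lambda_i^2/\beta_i)\hat u \hat u^\top$, contributing one positive eigenvalue and $|S_i|-1$ zeros. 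Counting gives the claimed signature. The remaining pieces (differentiation, coordinate-wise case analysis, block decomposition) are routine; the main conceptual hurdle is recognizing the convexity-based majorization that makes the descent calculation clean.
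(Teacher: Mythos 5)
Your proposal is correct and follows essentially the same route as the paper: the recurrence substitution, the derivative formulas, the coordinate-wise critical-point classification, the algebraic simplification at critical points, and the eigenvalue signature count all match. The one place you phrase things a bit more transparently is the descent inequality, where you invoke convexity of $u\mapsto\|\Lambda u\|$ directly to get the quadratic majorization $J(u_{t+1})-J(u_t)\le\nabla J(u_t)^\top(u_{t+1}-u_t)+\tfrac12(u_{t+1}-u_t)^\top C(u_{t+1}-u_t)$, whereas the paper derives the same bound by integrating the Hessian twice and applying $\nabla^2 J\preceq C=(2I-\eta\Lambda)/(\eta\rho)$; since that Hessian bound is itself just the statement that the non-quadratic part is concave, the two arguments are the same idea, not a genuinely different proof.
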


\begin{figure}
    \centering
    \includegraphics[width=6in]{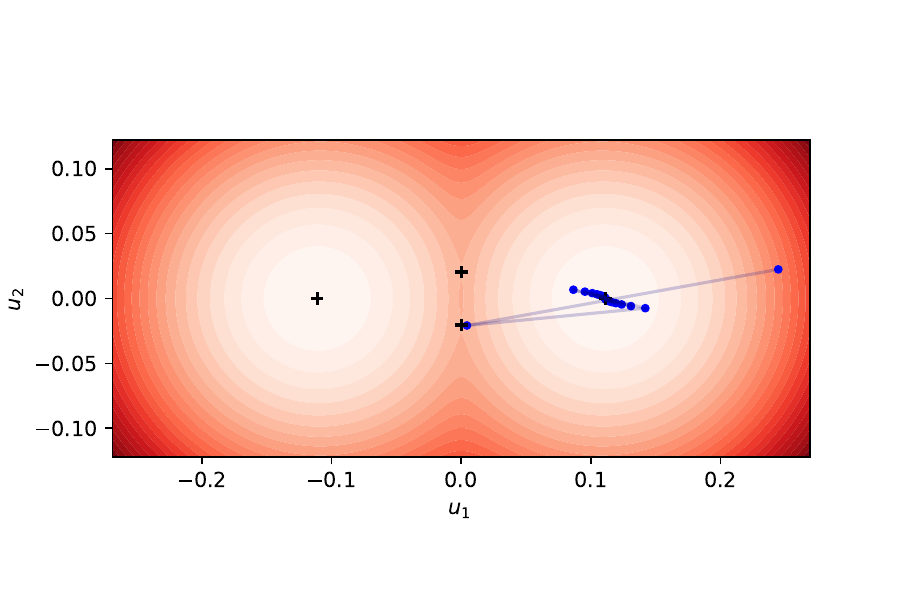}
    \caption{A heat map of the function $J$ defined
    in Lemma~\ref{l:u} in the case that
    $d=2$, $\eta = 1/5$, $\rho = 1$, $\lambda_1 = 1$
    and $\lambda_2 = 1/5$, along with
    $u_0, u_1,..., u_{25}$ when $w_0 = (0.244,0.0224)$. 
    The black pluses mark the stationary points.}
    \label{f:J}
\end{figure}
\begin{remark}
Although $J$ is not convex, it is well-behaved
(see Figure~\ref{f:J}). In particular,
the set of all stationary points with only non-negative eigenvalues is
  \[
    M=\left\{u\in\Re^d: \|u\|=\frac{\beta_1}{\lambda_1},\, u\in\myspan\{e_j:\lambda_j=\lambda_1\}\right\},
  \]
and this is the set of global minima. There are no other local minima, since at all other stationary points the Hessian has a negative eigenvalue. It is easy to see that all $u\in M$ have $J(u)=-\beta_1/2$. And, for example, if $\lambda_1>\lambda_2$, then $M=\left\{  -\frac{\beta_1}{
\lambda_1} e_1, \frac{\beta_1}{
\lambda_1} e_1\right\}$, and at all other stationary points the Hessian has at least one negative eigenvalue no larger than $1/\beta_1-1/\beta_2<0$.
\end{remark}

\begin{proof}
First notice that
  \begin{align*}
    \nabla J(u)
      & = \nabla\left(   \frac{u ^\top (I-\eta\Lambda/2) u}{\eta\rho} - \sqrt{u^\top\Lambda^2u}\right) \\
      & = \frac{2I-\eta\Lambda}{\eta\rho}u - \frac{ \Lambda^2 u}{\|\Lambda u\|},
  \end{align*}
and
  \begin{align}
    \nabla^2 J(u)
            & = \frac{ 2I-\eta\Lambda}{\eta\rho} 
              - \frac{\Lambda^2}{\|\Lambda u\|} + \frac{\Lambda^2 u u^\top\Lambda^2}{\|\Lambda u\|^3} \notag\\
      & = \frac{2I-\eta\Lambda}{\eta\rho} - \frac{1}{\|\Lambda u\|}\Lambda\left(I -
      \frac{\Lambda uu^\top\Lambda}{\|\Lambda u\|^2}\right)\Lambda \notag\\
      & \preceq \frac{ 2I-\eta\Lambda}{\eta\rho}.  \label{e:uppernabla2J}
  \end{align}
Now, from~\eqref{e:SAM-quadratic},
  \begin{align*}
      u_{t+1}
        & = (-1)^{t+1}w_{t+1} \\
        & = (-1)^{t+1}\left(I - \eta\Lambda - \frac{\eta \rho}{\|\Lambda w_t\|}\Lambda^2\right)w_t \\
        & = \eta\rho\Lambda^2\frac{u_t}{\|\Lambda u_t\|} - (I-\eta\Lambda)u_t \\
        & = u_t  - \eta\rho \left( \left( \frac{2I-\eta\Lambda}{\eta\rho} \right) u_t - \Lambda^2\frac{u_t}{\|\Lambda u_t\|}\right) \\
        & = u_t  - \eta\rho \nabla\left(\frac{ u_t^\top (2I-\eta \Lambda )  u_t }{2\eta\rho}
                - \|\Lambda u_t\|\right) \\
        & = u_t  - \eta\rho \nabla J(u_t).
  \end{align*}
Applying the fundamental theorem of calculus twice and using~\eqref{e:uppernabla2J},
  \begin{align*}
    J(u_{t+1})-J(u_t)
      & = (u_{t+1}-u_t)^\top \int_0^1\nabla J(u_t+h(u_{t+1}-u_t))\, dh \\
      & = (u_{t+1}-u_t)^\top \int_0^1\left(\nabla J(u_t) + 
      h \left( \int_0^1\nabla^2 J(u_t+x h(u_{t+1}-u_t))\,dx \right) (u_{t+1}-u_t)\right)\, dh \\
      & \le \nabla J(u_t)^\top(u_{t+1}-u_t)
        + \frac{1}{2} (u_{t+1}-u_t)^\top \frac{2I-\eta\Lambda}{\eta\rho} (u_{t+1}-u_t) \\
      & = -\eta\rho \nabla J(u_t)^\top \nabla J(u_t)
        + \eta\rho \nabla J(u_t)^\top \left(I-\frac{\eta\Lambda}{2}\right)\nabla J(u_t) \\
      & = -\eta\rho u_t^\top\left(\frac{2I-\eta\Lambda}{\eta\rho}-\frac{\Lambda^2}{\|\Lambda u\|}\right)^2u_t  \\
      & \hspace{1in}
        + \eta\rho u_t^\top \left(\frac{2I-\eta\Lambda}{\eta\rho}-\frac{\Lambda^2}{\|\Lambda u\|}\right)
        \left(I-\frac{\eta\Lambda}{2}\right)\left(\frac{2I-\eta\Lambda}{\eta\rho}-\frac{\Lambda^2}{\|\Lambda u\|}\right)u_t \\[2mm]
      & = -\eta\rho u_t^\top\left(\frac{2I-\eta\Lambda}{\eta\rho}-\frac{\Lambda^2}{\|\Lambda u\|}\right)\frac{\eta\Lambda}{2}\left(\frac{2I-\eta\Lambda}{\eta\rho}-\frac{\Lambda^2}{\|\Lambda u\|}\right)u_t \\
      & = - \eta\rho \sum_{i=1}^d u_{t,i}^2 \left(\frac{2-\eta\lambda_i}{\eta\rho}-\frac{\lambda_i^2}{\|\Lambda u\|}\right)^2
      \frac{\eta\lambda_i}{2} \\
      & = - \frac{1}{2\rho} \sum_{i=1}^d u_{t,i}^2 \left(1-\frac{\eta\rho\lambda_i^2}{(2-\eta\lambda_i)\|\Lambda u\|}\right)^2
      (2-\eta\lambda_i)^2\lambda_i \\
      & = - \frac{1}{2\rho} \sum_{i=1}^d u_{t,i}^2 \left(1-\frac{\beta_i}{\|\Lambda u\|}\right)^2
      (2-\eta\lambda_i)^2\lambda_i,
  \end{align*}
where $\beta_i = \eta\rho\lambda_i^2/(2-\eta\lambda_i)$ as before.

Now, if $u$ satisfies $\nabla J(u)=0$, then
\begin{align*}
&    \frac{(2I-\eta\Lambda) u}{\eta\rho} = \frac{\Lambda^2 u}{\|\Lambda u\|} \\
& \Rightarrow 
  \left( \frac{\Lambda^{-1}}{|| \Lambda u ||} \right) \frac{(2I-\eta\Lambda) u}{\eta\rho} = \frac{\Lambda u}{\|\Lambda u\|^2} \\
&  \Rightarrow \left( \frac{\Lambda^{-1}}{|| \Lambda u ||} \right) \frac{(2I-\eta\Lambda) \Lambda^{-1} \Lambda u}{\eta\rho} = \frac{\Lambda u}{\|\Lambda u\|^2} \\
& \Rightarrow
     \left( \frac{\Lambda^{-2} (2I-\eta\Lambda) }{\eta\rho} \right) \frac{\Lambda u}{\|\Lambda u\|} = \frac{\Lambda u}{\|\Lambda u\|^2},
\end{align*}
that is, $\Lambda u/\|\Lambda u\|$ is an eigenvector of $\frac{\Lambda^{-2} (2I-\eta\Lambda) }{\eta\rho}=\diag(1/\beta_1,\ldots,1/\beta_d)$ with eigenvalue $1/\|\Lambda u\|$.
Consider one such stationary point:
$\zeta_i e_i$, for some $i \in [d]$ and
$\zeta_i \in \R$.  We have
\[
\frac{(2 - \eta \lambda_i) \zeta_i}{\eta \rho}
 = \frac{\lambda_i^2 \zeta_i}{\lambda_i |\zeta_i|}
\]
which implies
\[
|\zeta_i| = 
  \frac{\eta \rho \lambda_i}{2 - \eta \lambda_i}
 = \frac{\beta_i}{\lambda_i}.
\]
Nearly exactly the same reasoning implies that 
$\| u \| = \frac{\beta_i}{\lambda_i}$ for all stationary points $u$
whose eigenvalues are the same as $e_i$.
% Consider one of these stationary points, $u = (\beta_i/\lambda_i) e_i$.
For such a stationary point,
  \begin{align*}
      \nabla^2 J(u)
      &= \frac{2I-\eta\Lambda}{\eta\rho} - \frac{1}{\beta_i}\Lambda\left(I - e_ie_i^\top\right)\Lambda \\
      &= \Lambda^2 \left(\diag(1/\beta_1,\ldots,1/\beta_d)- \frac{1}{\beta_i}\left(I - e_ie_i^\top\right)\right) \\
      &= \Lambda^2 \left(\sum_{j:\beta_j\not=\beta_i} \left(\frac{1}{\beta_j}-\frac{1}{\beta_i}\right)e_je_j^\top + \frac{1}{\beta_i}e_ie_i^\top\right).
  \end{align*}
The counts of eigenvalues of different signs follow from this and the ordering $\beta_1\ge\cdots\ge\beta_d$ (Lemma~\ref{l:one_step}, part~\ref{part:five}).
\end{proof}

The following lemma shows that SAM cannot spend too much time with $\| v_t \|$ large, because $J$ is non-increasing and it decreases 
% more 
a lot
when $\| v_t \|$ is large. Lemma~\ref{l:one_step} part~\ref{part:two} shows that the norm of $v_t$ decreases when the norm is larger than $\beta_1$, and the lemma shows in particular that the norm cannot stay much larger than $\beta_1$.

\begin{lemma}
\label{l:breakaway2}
For $\epsilon > 0$, and $\| v_{T_0} \| \leq  b$,
\[
  \left|\left\{\astrut
    t\ge T_0: \| v_t \| \geq (1+\epsilon)\beta_1\right\}\right| \le
   \frac{2}{\eta \epsilon^2\lambda_1\beta_1}
   \left(
   \max_{\| \Lambda w \|\le b, s \in \{-1,1\}}J(s w) -
  \min_{u}J(u)
   \right)
  \le 
  \frac{3 \beta_1}{\eta\epsilon^2\lambda_1\beta_d}.
\]
\end{lemma}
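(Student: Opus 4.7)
The plan is to use $J$ from Lemma~\ref{l:u} as a Lyapunov function. Setting $u_t=(-1)^t w_t$ so that $\|\Lambda u_t\|=\|v_t\|$, the SAM iterate becomes a gradient step on $J$, and $J$ is non-increasing with an explicit per-step drop. I will show that every step with $\|v_t\|\ge(1+\epsilon)\beta_1$ forces the drop to be at least $\eta\epsilon^2\lambda_1\beta_1/2$. Since $J(u_t)$ is bounded below by $\min_u J(u)$ and, by monotonicity together with the hypothesis $\|v_{T_0}\|\le b$, is bounded above by $\max_{\|\Lambda w\|\le b,\,s\in\{-1,1\}}J(sw)$ for every $t\ge T_0$, the cumulative drop from time $T_0$ onwards is at most this difference. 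Dividing by the per-step lower bound then yields the first inequality.

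For the per-step bound, I start from the explicit inequality at the end of Lemma~\ref{l:u},
\[
J(u_t)-J(u_{t+1})\ge\frac{1}{2\rho}\sum_{i=1}^d u_{t,i}^2\Bigl(1-\frac{\beta_i}{\|v_t\|}\Bigr)^{\!2}(2-\eta\lambda_i)^2\lambda_i,
\]
and specialize to $\|v_t\|\ge(1+\epsilon)\beta_1$. Three ingredients combine: (i)~$\beta_i\le\beta_1$ for every $i$ by Lemma~\ref{l:one_step}(\ref{part:five}), so $\beta_i/\|v_t\|\le 1/(1+\epsilon)$ and hence $(1-\beta_i/\|v_t\|)^2\ge\epsilon^2/(1+\epsilon)^2$; (ii)~$(2-\eta\lambda_i)^2\ge(2-\eta\lambda_1)^2$ by monotonicity; (iii)~the identity $u_{t,i}^2=v_{t,i}^2/\lambda_i^2$ gives $\sum_i u_{t,i}^2\lambda_i=\sum_i v_{t,i}^2/\lambda_i\ge\|v_t\|^2/\lambda_1\ge(1+\epsilon)^2\beta_1^2/\lambda_1$. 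The $(1+\epsilon)^2$ factors cancel, and the defining identity $(2-\eta\lambda_1)\beta_1=\eta\rho\lambda_1^2$, together with $\eta\lambda_1<1/2$, collapses everything to $J(u_t)-J(u_{t+1})\ge\eta\epsilon^2\lambda_1\beta_1/2$.

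For the second inequality I bound $\max J-\min J$. The minimum $-\beta_1/2$ is attained at $\pm(\beta_1/\lambda_1)e_1$, as noted in the remark following Lemma~\ref{l:u}. Since $\beta_i\ge\beta_d$ for every $i$, we have the pointwise estimate $J(w)\le\|\Lambda w\|^2/(2\beta_d)-\|\Lambda w\|$, which, as a convex function of $r=\|\Lambda w\|\in[0,b]$, is maximized at an endpoint and so is at most $\max\{0,\,b^2/(2\beta_d)-b\}$. Using $b=(2-\eta\lambda_1)\beta_1\le 2\beta_1$ with the opposite bound $b\ge(3/2)\beta_1$ (from $\eta\lambda_1<1/2$) absorbs the $+\beta_1/2$ coming from $-\min J$ into the $-b$ term and produces $\max J-\min J\le 2\beta_1^2/\beta_d$. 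Feeding this into the first inequality, and using the slightly sharper per-step drop that retains the extra factor $2-\eta\lambda_1\ge 3/2$, gives the stated $3\beta_1/(\eta\epsilon^2\lambda_1\beta_d)$.

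The main obstacle is constant-chasing rather than anything conceptual: aligning $\epsilon$, $\lambda_1$, and $\beta_1$ with the right exponents in the per-step decrease, and pushing the endpoint analysis of $\max J$ just far enough that the $-\min J$ term is absorbed into the favorable $-b$ piece. Once Lemma~\ref{l:u} supplies the Lyapunov structure and Lemma~\ref{l:one_step}(\ref{part:five}) supplies the ordering $\beta_d\le\cdots\le\beta_1$, the remaining work is a short computation.
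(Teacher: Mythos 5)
Your proof follows exactly the paper's route: treat $J$ from Lemma~\ref{l:u} as a Lyapunov function, show every step with $\|v_t\|\ge(1+\epsilon)\beta_1$ drops $J$ by at least $\eta\epsilon^2\lambda_1\beta_1/2$, and divide the total budget $\max J-\min J$ by the per-step drop. Your three ingredients for the per-step bound and your budget estimate $\max J-\min J\le 2\beta_1^2/\beta_d$ all check out.

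There is, however, a subtlety that you handled more carefully than the paper does. The second displayed inequality in the lemma, read literally, asserts $\max J-\min J\le\tfrac{3\beta_1^2}{2\beta_d}$, but this is false when $\eta\lambda_1$ is small and $\beta_1>3\beta_d$: in that regime $b\to 2\beta_1$, so $\max J-\min J\to\tfrac{2\beta_1^2}{\beta_d}-\tfrac{3\beta_1}{2}$, which exceeds $\tfrac{3\beta_1^2}{2\beta_d}$ precisely when $\beta_1>3\beta_d$. The paper's own proof of this step is an arithmetic slip: it claims $\tfrac{2\beta_1^2}{\beta_d}+\tfrac{\beta_1}{2}\le\tfrac{3\beta_1^2}{2\beta_d}$, which is reversed. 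Your decision to retain the factor $(2-\eta\lambda_1)\ge 3/2$ in the per-step drop is exactly what is needed: it yields the count bound $\tfrac{8\beta_1}{3\eta\epsilon^2\lambda_1\beta_d}\le\tfrac{3\beta_1}{\eta\epsilon^2\lambda_1\beta_d}$ directly, bypassing the faulty intermediate comparison, so the conclusion of the lemma is correct and your proof of it is sound. If one wants the displayed chain to hold verbatim, the first expression should carry the extra $(2-\eta\lambda_1)$ in its denominator, which is what your argument effectively supplies.
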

\begin{proof}
From Lemma~\ref{l:one_step} part~\ref{part:five}, $\beta_i\le\beta_1$, and the definition of $\beta_i$ implies that $\lambda_i/\beta_i\ge\lambda_1/\beta_1$.
Thus, whenever $\| v_t \| \geq (1+\epsilon)\beta_1$,
recalling that $\eta \lambda_1 < 1$, Lemma~\ref{l:u} implies
\begin{align*}
J(u_{t+1})-J(u_t)
 & \leq - \frac{1}{2\rho} \sum_{i=1}^d u_{t,i}^2 \left(1-\frac{\beta_i}{\|\Lambda u_t \|}\right)^2
      (2-\eta\lambda_i)^2\lambda_i \\
 & = - \frac{1}{2\rho} \sum_{i=1}^d \left(\frac{v_{t,i}}{\lambda_{t,i}}\right)^2 \left(1-\frac{\beta_i}{\| v_t \|}\right)^2
      (2-\eta\lambda_i)^2\lambda_i \\
 & = - \frac{\eta}{2} \sum_{i=1}^d v_{t,i}^2 \left(1-\frac{\beta_i}{\|v_t \|}\right)^2
      \frac{(2-\eta\lambda_i) \lambda_i}{\beta_i} \\
 & \leq - \frac{\eta}{2} \sum_{i=1}^d v_{t,i}^2 \left(1-\frac{\beta_i}{\| v_t \|}\right)^2
        \frac{\lambda_i}{\beta_i} \\
 & \leq - \frac{\eta}{2} \sum_{i=1}^d v_{t,i}^2 \left(1-\frac{\beta_i}{(1 + \epsilon) \beta_1 }\right)^2
        \frac{\lambda_i}{\beta_i} \\
 & \leq - \frac{\eta}{2}\sum_{i=1}^d v_{t,i}^2 \left(1-\frac{\beta_1}{(1 + \epsilon) \beta_1 }\right)^2
        \frac{\lambda_1}{\beta_1} \\
  & = - \eta \left(1-\frac{1}{1+\epsilon}\right)^2
          \frac{\lambda_1}{2\beta_1} \|v_t\|^2 \\
  & \le - \eta \left(\frac{\epsilon}{1+\epsilon}\right)^2
          \frac{\lambda_1}{2\beta_1} (1+\epsilon)^2\beta_1^2 \\
  & = - \frac{\eta \epsilon^2\lambda_1\beta_1}{2},
\end{align*}
and since $J$ is always nonincreasing, this means there can be no more than
  \[
    \frac{2}{\eta \epsilon^2\lambda_1\beta_1}
      \left(
      \max_{w \in \R^d, s \in \{-1,1\}: \| \Lambda w \|\le b}J(s w) -
  \min_{u}J(u)
      \right)
  \]
iterations like this.

For the last inequality, we have
\begin{align*}
\max_{\|\Lambda w \|\le b, s}J(s w)
  & = \max_{0\le z\le b} \max_{\|\Lambda w\|=z} \left( \frac{1}{2}w^\top C w - z\right) \\
  & = \max_{0\le z\le b} \max_{\|v\|=z} \left(\frac{1}{2}v^\top\Lambda^{-1} \diag\left(\frac{\lambda_1^2}{\beta_1},\ldots, \frac{\lambda_d^2}{\beta_d}\right) \Lambda^{-1}v - z \right)\\
  & = \max_{0\le z\le b} \max_{\|v\|=z} \left(\frac{1}{2}v^\top \diag\left(\frac{1}{\beta_1},\ldots, \frac{1}{\beta_d}\right) v - z \right) \\
  & = \max_{0\le z\le b} \left(\frac{z^2}{2\beta_d} - z \right) \\
  & = \frac{b^2}{2\beta_d} - b \\
  & \le \frac{2 \beta_1^2}{\beta_d},
\end{align*}
%first notice that, from among choices of $w$ with the same value of $\lv \Lambda w \rv$, a perturbation argument demonstrates that $J(s w)$ is maximized when all of the weight of $w$ is on $w_d$, so that
%\[
%\max_{\|\Lambda w \|\le b, s}J(s w) = \max_{z \leq b}
%     \frac{z^2}{2 \beta_d} - z
%     = \frac{b^2}{2 \beta_d} - b
%     \leq \frac{2 \beta_1^2}{\beta_d},
%\]
since $b\le 2\beta_1$.
% and $\eta\lambda_1<1$.

Since $\min_u J(u) = -\beta_1/2$, we have
\begin{align*}
\max_{\|u\|\le b}J(u) -  \min_u J(u)
 \leq \frac{2 \beta_1^2}{\beta_d} + \frac{\beta_1}{2}
 \leq \frac{3 \beta_1^2}{2 \beta_d},
\end{align*}
since $\beta_d \leq \beta_1$.
\end{proof}

\subsection{Avoiding non-minimal stationary points}
\label{s:avoiding}

Lemma~\ref{l:u} shows that the set of global minima of $J$ is a sphere of radius $\beta_1/\lambda_1$ in the subspace spanned by the $e_j$ with $\lambda_j=\lambda_1$. To simplify notation, we assume that $\lambda_1>\lambda_2$, so that this subspace is in the $e_1$ direction. Then to ensure that $J$ decreases to a global minimum, it suffices to keep $|\lambda_1 w_{t,1}| = |v_{t,1}|$ away from zero and $\|v_t\|\not=\beta_1$.
The following quantity measures the extent to which $v_t$ still has
``energy'' in components other than the first.
\begin{definition}
Define $\delta_t = 1 - \frac{|v_{t,1}|}{\|v_t \|}$.
\end{definition}

\begin{lemma}
\label{l:tau_by_log_odds}
We have
\[
\delta_t \leq \frac{1}{2}\frac{\sum_{i=2}^d v_{t,i}^2}{v_{t,1}^2}
\]
whenever this bound is most $1/2$.
\end{lemma}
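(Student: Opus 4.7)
The plan is a short algebraic manipulation that in fact gives the bound unconditionally; the hypothesis "whenever this bound is at most $1/2$" will only be used (if at all) to conclude that $\delta_t \leq 1/2$ in the regime of interest. The key trick is rationalization.

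First I would rewrite
\[
\delta_t \;=\; 1 - \frac{|v_{t,1}|}{\|v_t\|} \;=\; \frac{\|v_t\| - |v_{t,1}|}{\|v_t\|}.
\]
Then, multiplying numerator and denominator by the conjugate $\|v_t\| + |v_{t,1}|$, I would obtain
\[
\delta_t \;=\; \frac{\|v_t\|^2 - v_{t,1}^2}{\|v_t\|\bigl(\|v_t\| + |v_{t,1}|\bigr)} \;=\; \frac{\sum_{i=2}^d v_{t,i}^2}{\|v_t\|\bigl(\|v_t\| + |v_{t,1}|\bigr)}.
\]

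Next, I would lower-bound the denominator using the trivial inequality $\|v_t\| \geq |v_{t,1}|$ (which is immediate from $\|v_t\|^2 = v_{t,1}^2 + \sum_{i\geq 2} v_{t,i}^2$). This gives
\[
\|v_t\|\bigl(\|v_t\| + |v_{t,1}|\bigr) \;\geq\; |v_{t,1}|\bigl(|v_{t,1}| + |v_{t,1}|\bigr) \;=\; 2\, v_{t,1}^2,
\]
and substituting yields the claimed inequality
\[
\delta_t \;\leq\; \frac{1}{2}\,\frac{\sum_{i=2}^d v_{t,i}^2}{v_{t,1}^2}.
\]
The bound holds without the hypothesis that its right-hand side is at most $1/2$; that hypothesis only controls whether the resulting estimate is non-vacuous (since $\delta_t \in [0,1]$ always). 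There is no real obstacle here: the argument is a single conjugation step combined with $\|v_t\| \geq |v_{t,1}|$.
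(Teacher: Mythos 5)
Your proof is correct, and it is cleaner than the paper's. The paper writes $\delta_t = 1 - (1 + \alpha)^{-1/2}$ with $\alpha = \sum_{i\ge2} v_{t,i}^2/v_{t,1}^2$, then verifies $1 - (1+\alpha)^{-1/2} \le \alpha/2$ for $\alpha \in [0,1]$ by expanding the cubic $(1+\alpha)(1-\alpha/2)^2$; the hypothesis that the bound is at most $1/2$ is used only to ensure $\alpha \le 1$ so that this verification applies. You instead rationalize $\|v_t\| - |v_{t,1}|$ against its conjugate and then use $\|v_t\| \ge |v_{t,1}|$ to lower-bound the denominator by $2 v_{t,1}^2$. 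This gives the inequality directly, without the cubic expansion, and, as you correctly note, it holds unconditionally (for $v_{t,1} \ne 0$); the hypothesis in the lemma statement is superfluous for the inequality itself. Both arguments rely on the same underlying fact ($\|v_t\| \ge |v_{t,1}|$), but your rationalization sidesteps the need to prove a separate one-variable estimate and yields a strictly stronger statement.
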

\begin{proof}
We have
  \begin{align*}
%   & 
   \delta_t
%   \\
       & = 1-\frac{|v_{t,1}|}{\|v_t\|} \\
        & = 1 - \frac{1}{\sqrt{1 + \sum_{i=2}^d v_{t,i}^2/v_{t,1}^2}} \\
        & \le \frac{1}{2}\frac{\sum_{i=2}^dv_{t,i}^2}{v_{t,1}^2}
  \end{align*}
since, for all $0\le\alpha\le 1$, we have $1-1/\sqrt{1+\alpha}\le\alpha/2$. Indeed, this inequality is equivalent to
  \begin{align*}
    1 &\ge (1+\alpha)\left(1-\frac{\alpha}{2}\right)^2 \\
      &= (1+\alpha)\left(1-\alpha + \frac{\alpha^2}{4}\right) \\
      &= 1-\alpha^2 + \frac{\alpha^2}{4}+\frac{\alpha^3}{4} \\
      &= 1-\frac{\alpha^2(3-\alpha)}{4}.
  \end{align*}
\end{proof}

Lemma~\ref{l:one_step} part~\ref{part:four} shows that the first component increases relative to the other components when $\|v_t\|<\alpha_d$. But as long as $\lambda_d>0$, part~\ref{part:three} shows that $\alpha_d>\beta_1$, and in that case Lemma~\ref{l:breakaway2} implies that $\|v_t\|$ does not spend too much time above $\alpha_d$. Our assumption that $\lambda_1>\lambda_2$ ensures that the first component of $v_t$ increases in magnitude relative to all the other components; otherwise, the equations describing the evolution of the first and second components are identical. The key constant depends on both $\lambda_d$ and the gap between $\lambda_1$ and $\lambda_2$.

\begin{lemma}
\label{l:p_one_bigger}
% Suppose that $\lambda_1>\lambda_2$, $\lambda_d>0$, and $\eta \lambda_1 \leq 1/2$ and d
Define
  \[
    \mu = \min\left\{\eta\lambda_d,
    \frac{\lambda_1^2}{\lambda_2^2}-1\right\}.
  \]
If $v_{t,1}^2>0$, the following two statements are equivalent:
  \begin{align*}
      \frac{v_{t+1,i}^2}{v_{t+1,1}^2}
        &< \frac{1}{(1+\mu)^2} \frac{v_{t,i}^2}{v_{t,1}^2}\qquad\forall i\in\{2,\ldots,d\}, \\
      \frac{\|v_t\|}{\beta_1}
        & < \frac{2-\eta\lambda_1}{2-\eta\lambda_1
        -(\eta\lambda_d-\mu)-\eta\lambda_d\mu}
        \left(1+(1+\mu)\frac{\lambda_d^2}{\lambda_1^2}\right).
  \end{align*}
Thus, if $v_{t,1}^2>0$ for all $t$,
\[
  \left|\left\{t: \|v_t\|\le b\text{ and for some }i\in\{2,\ldots,d\},\,
  \frac{v_{t+1,i}^2}{v_{t+1,1}^2}
    \ge \frac{1}{(1+\mu)^2} \frac{v_{t,i}^2}{v_{t,1}^2} \right\}\right| \le T_1,
\]
where
\[
  T_1 = \frac{3 \beta_1 \lambda_1^3}
     {\eta \beta_d \lambda_d^4} .
\]
Furthermore, if $T_0$ is such that $\|v_{T_0}\|\le b$, then
  \[
%    0 \le 
    \delta_{T+1} \le \frac{1}{2}\left(\frac{1}{1+\mu}\right)^{2(T-T_1)}
    \left(\frac{2}{\eta\lambda_1}\right)^{2T_1}
    \frac{\sum_{i=2}^d v_{T_0,i}^2}{v_{T_0,1}^2},
  \]
provided that $T$ is large enough that this upper bound is less than $1/2$.

Thus, for all $\epsilon < 1/2$,
$
    % 0 \le 
    \delta_{T+1} \le \epsilon
$
provided
  \[
    T\ge \frac{2}{\mu}
        \left(
        T_1 \log \left(
                    \frac{4}
                         { \eta \lambda_1}
                   \right)
         +\frac{1}{2} \log\left( \frac{\sum_{i = 2}^d v_{T_0}^2}
                       {2 \epsilon v_{T_0,1}^2} \right)
       \right).
  \]
\end{lemma}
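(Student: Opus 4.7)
The plan is to organize the proof into three blocks that mirror the three statements of the lemma: the single-step equivalence, the count $T_1$ of bad iterations, and the geometric decay of $\delta_t$.

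For the equivalence, I would start from the identity $v_{t+1,i}=(1-\eta\lambda_i)(\|v_t\|-\gamma_i)v_{t,i}/\|v_t\|$ derived near the beginning of this section. Squaring the ratio $v_{t+1,i}^2/v_{t+1,1}^2$ and demanding it be below $(1+\mu)^{-2}\,v_{t,i}^2/v_{t,1}^2$ rearranges to $(1-\eta\lambda_1)^2(r-\gamma_1)^2>(1+\mu)^2(1-\eta\lambda_i)^2(r-\gamma_i)^2$, where $r=\|v_t\|$. I would then invoke Lemma~\ref{lemma:compare-quadratics} with $a=1-\eta\lambda_1$, $\alpha=\gamma_1$, $b=(1+\mu)(1-\eta\lambda_i)$, $\beta=\gamma_i$. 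The only non-obvious hypothesis is $a\alpha\ge b\beta$, which reduces to $\lambda_1^2\ge(1+\mu)\lambda_i^2$; this is precisely the content of $\mu\le\lambda_1^2/\lambda_2^2-1\le\lambda_1^2/\lambda_i^2-1$. A short monotonicity check shows the threshold $(a\alpha+b\beta)/(a+b)$ is increasing in $\lambda_i$, so the universal quantifier over $i\in\{2,\ldots,d\}$ collapses to $i=d$. Substituting $a\alpha=\eta\rho\lambda_1^2$ and $b\beta=(1+\mu)\eta\rho\lambda_d^2$, factoring $\eta\rho\lambda_1^2$ from the numerator, and dividing by $\beta_1=\eta\rho\lambda_1^2/(2-\eta\lambda_1)$ turns the inequality into the exact form displayed in the lemma.

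For the bound $T_1$ on bad steps, I would lower-bound the displayed threshold. Since $\eta\lambda_d-\mu\ge 0$ and $\eta\lambda_d\mu\ge 0$, the first fraction is at least $1$; and since $1+(1+\mu)\lambda_d^2/\lambda_1^2\ge 1+\lambda_d^2/\lambda_1^2$, the whole threshold is at least $1+\lambda_d^2/\lambda_1^2$. Consequently, whenever the equivalent condition fails while $\|v_t\|\le b$, we must have $\|v_t\|\ge(1+\lambda_d^2/\lambda_1^2)\beta_1$. Applying Lemma~\ref{l:breakaway2} with $\epsilon=\lambda_d^2/\lambda_1^2$ then yields exactly $3\beta_1/(\eta(\lambda_d^2/\lambda_1^2)^2\lambda_1\beta_d)=3\beta_1\lambda_1^3/(\eta\beta_d\lambda_d^4)=T_1$.

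For the decay of $\delta_T$, let $S_t:=\sum_{i=2}^d v_{t,i}^2/v_{t,1}^2$. On a good step, the equivalent per-$i$ inequality sums to $S_{t+1}\le(1+\mu)^{-2}S_t$. On a bad step, I would upper bound the per-term amplification factor $\frac{(1-\eta\lambda_i)^2(r-\gamma_i)^2}{(1-\eta\lambda_1)^2(r-\gamma_1)^2}$ using $(1-\eta\lambda_i)^2\le 1$, $|r-\gamma_i|\le\gamma_1$ (valid since $r\le b\le\gamma_1$ and $\gamma_i\le\gamma_1$), the critical bound $\gamma_1-r\ge\gamma_1-b=\eta\lambda_1\gamma_1$, and $(1-\eta\lambda_1)^2\ge 1/4$ (from $\eta\lambda_1<1/2$). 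These combine to the single-step amplification $\le(2/(\eta\lambda_1))^2$. Since Lemma~\ref{l:one_step}, part~\ref{part:three} keeps $\|v_t\|\le b$ for all $t\ge T_0$, counting at most $T_1$ bad and the remainder good steps between $T_0$ and $T$ yields the claimed geometric decay of $S_{T+1}$, and Lemma~\ref{l:tau_by_log_odds} converts it into the stated bound on $\delta_{T+1}$. Finally, setting the bound equal to $\epsilon$, using $\log(1+\mu)\ge\mu/2$ for $\mu\le 1$ (guaranteed by $\mu\le\eta\lambda_d<1/2$), and absorbing a minor constant into $\log(4/(\eta\lambda_1))$ gives the closed-form threshold on $T$.

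The main obstacle is the algebraic simplification in the first block---correctly peeling Lemma~\ref{lemma:compare-quadratics}'s conclusion down to the compact form printed in the lemma and verifying $i=d$ is the binding index. The rest is bookkeeping: choosing the right amplification bound for bad steps, the right $\epsilon$ for invoking Lemma~\ref{l:breakaway2}, and the standard logarithm inversion at the end.
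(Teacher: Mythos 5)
Your proposal is correct and follows essentially the same path as the paper. The equivalence via Lemma~\ref{lemma:compare-quadratics} (with the same substitution and the key observation that the hypothesis $a\alpha\ge b\beta$ is exactly $\lambda_1^2\ge(1+\mu)\lambda_i^2$, then the monotonicity argument collapsing the quantifier to $i=d$), the application of Lemma~\ref{l:breakaway2} after lower-bounding the threshold, and the good/bad-step decomposition with the $(2/(\eta\lambda_1))^2$ per-step amplification bound followed by a $\log(1+\mu)\ge\mu/2$ inversion all match the paper's argument step-for-step; the only cosmetic difference is that the paper carries a slightly tighter intermediate amplification factor $\frac{(1-\eta\lambda_d)^2}{(1-\eta\lambda_1)^2\eta^2\lambda_1^2}$ before over-approximating at the end, whereas you jump directly to the bound printed in the lemma statement.
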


\begin{proof}
For the equivalence, notice that the evolution of $v_t$ implies that
  \[
      \frac{v_{t+1,i}^2}{v_{t+1,1}^2}
        < \frac{1}{(1+\mu)^2} \frac{v_{t,i}^2}{v_{t,1}^2}
  \]
if and only if
  \begin{equation}\label{equation:ucomp}
    (1-\eta\lambda_1)^2(\|v_t\|-\gamma_1)^2 > (1+\mu)^2(1-\eta\lambda_i)^2
    (\|v_t\|-\gamma_i)^2.
  \end{equation}
We can apply Lemma~\ref{lemma:compare-quadratics}, because $0<1-\eta\lambda_1<(1+\mu)(1-\eta\lambda_i)$,
$\gamma_1>\gamma_i$, and
  \begin{align*}
    &&&(1-\eta\lambda_1)\gamma_1\ge(1+\mu)(1-\eta\lambda_i)\gamma_i \\
    &\Leftrightarrow&&
    \lambda_1^2\ge(1+\mu)\lambda_i^2 \\
    &\Leftrightarrow&&
    \frac{\lambda_1^2}{\lambda_i^2}-1\ge\mu,
  \end{align*}
which follows from the definition of $\mu$.
Lemma~\ref{lemma:compare-quadratics} implies that when $\|v_t\|>0$,
\eqref{equation:ucomp} is equivalent to
  \begin{align*}
    \|v_t\|
    & <\frac{(1-\eta\lambda_1)\gamma_1 + (1+\mu)(1-\eta\lambda_i)\gamma_i}{1-\eta\lambda_1+(1+\mu)(1-\eta\lambda_i)}.
  \end{align*}
Because the right hand side is a convex combination of $\gamma_1$ and $\gamma_i$, because $\gamma_d\le\cdots\le\gamma_2$, and because the convex coefficients are also ordered
($1-\eta\lambda_2\le \cdots\le 1-\eta\lambda_d$), these inequalities for all $i$ are implied by the corresponding inequality for $i=d$, which is
  \begin{align*}
    \|v_t\|
    & <\frac{(1-\eta\lambda_1)\gamma_1 + (1+\mu)(1-\eta\lambda_d)\gamma_d}{1-\eta\lambda_1+(1+\mu)(1-\eta\lambda_d)} \\
    \Leftrightarrow\qquad
    \frac{\|v_t\|}{\beta_1}
    & <\frac{2-\eta\lambda_1}{(1-\eta\lambda_1)\gamma_1}
     \left(
      \frac{(1-\eta\lambda_1)\gamma_1 + (1+\mu)(1-\eta\lambda_d)\gamma_d}{1-\eta\lambda_1+(1+\mu)(1-\eta\lambda_d)}
      \right) 
      \\
    & = \frac{2-\eta\lambda_1}{2-\eta\lambda_1-(\eta\lambda_d-\mu)-\eta\lambda_d\mu}\left(1+\frac{(1+\mu)(1-\eta\lambda_d)\gamma_d}{(1-\eta\lambda_1)\gamma_1}\right) \\
    & = \frac{2-\eta\lambda_1}{2-\eta\lambda_1-(\eta\lambda_d-\mu)-\eta\lambda_d\mu}\left(1+(1+\mu)\frac{\lambda_d^2}{\lambda_1^2}\right).
  \end{align*}
This proves the first part of the lemma.

Hence, for each iteration when, for some $2\le i\le d$,
\[
\frac{v_{t+1,i}^2}{v_{t+1,1}^2}
    \ge \frac{1}{(1+\mu)^2} \frac{v_{t,i}^2}{v_{t,1}^2},
\]
we must have
  \[
      \frac{\|v_t\|}{\beta_1}
        \ge \frac{2-\eta\lambda_1}{2-\eta\lambda_1-(\eta\lambda_d-\mu)-\eta\lambda_d\mu}
        \left(1+(1+\mu)\frac{\lambda_d^2}{\lambda_1^2}\right)> 1+(1+\mu)\frac{\lambda_d^2}{\lambda_1^2},
  \]
where the inequality follows from $0<\mu\le \eta\lambda_d$
and $\eta < 1/\lambda_1$.
Lemma~\ref{l:breakaway2} implies that the number of these iterations for which we also have $\|v_t\|\le b$ is no more than
\begin{align*}
& \frac{3 \beta_1}
     {\eta 
     \left(
     (1 + \mu) \frac{\lambda_d^2}{\lambda_1^2}
     \right)^2
      \lambda_1 \beta_d} 
%       \\
% & 
%
% The small-mu regime seemed to be the bottleneck in the
% end, so giving away this factor of (1 + mu)^2
 \leq \frac{3 \beta_1 \lambda_1^3}
     {\eta \beta_d \lambda_d^4} 
 = T_1.    
\end{align*}

For the third part, consider the sequence of steps from $t=T_0$ to $t=T\ge T_1$. There are at least $T-T_1$ steps when
  \[
    \frac{v_{t+1,i}^2}{v_{t+1,1}^2} < \frac{1}{(1+\mu)^2}
    \frac{v_{t,i}^2}{v_{t,1}^2},
  \]
and no more than $T_1$ steps when this fails, and for those steps we have
  \begin{align*}
    \frac{v_{t+1,i}^2}{v_{t+1,1}^2}
    & = \frac{(1-\eta\lambda_i)^2(\|v_t\|-\gamma_i)^2}
    {(1-\eta\lambda_1)^2(\|v_t\|-\gamma_1)^2}
    \frac{v_{t,i}^2}{v_{t,1}^2} \\
    & \le \frac{(1-\eta\lambda_d)^2\gamma_1^2}
    {(1-\eta\lambda_1)^2( b-\gamma_1)^2}
    \frac{v_{t,i}^2}{v_{t,1}^2} \\
    & = \frac{(1-\eta\lambda_d)^2}{(1-\eta\lambda_1)^2\eta^2\lambda_1^2}
    \frac{v_{t,i}^2}{v_{t,1}^2}.
  \end{align*}
(We used the fact that $0\le\|v_t\|\le b\le\gamma_1$, and so $(\gamma_i-\|v_t\|)^2\le\gamma_1^2$.)
So we have
\begin{align*}
    \frac{\sum_{i=2}^d v_{T+1,i}^2}{v_{T+1,1}^2}
    &\le \left(\frac{1}{(1+\mu)^2}\right)^{T-T_1}
    \left(\frac{(1-\eta\lambda_d)^2}{(1-\eta\lambda_1)^2\eta^2\lambda_1^2}\right)^{T_1}
    \frac{\sum_{i=2}^d v_{T_0,i}^2}{v_{T_0,1}^2}.
  \end{align*}
Applying Lemma~\ref{l:tau_by_log_odds}
\begin{align*}
    \delta_t
    &\le \frac{1}{2} \left(\frac{1}{(1+\mu)^2}\right)^{T-T_1}
    \left(\frac{(1-\eta\lambda_d)^2}{(1-\eta\lambda_1)^2\eta^2\lambda_1^2}\right)^{T_1}
    \frac{\sum_{i=2}^d v_{T_0,i}^2}{v_{T_0,1}^2},
  \end{align*}
if this bound is at most $1/2$.  Solving for $T$, for all $0 < \epsilon < 1/2$,
for all 
\[
    T\ge \frac{1}{\log(1 + \mu)}
        \left(
        T_1 \log \left(
                    \frac{(1 + \mu) (1 - \eta \lambda_d)}
                         {(1 - \eta \lambda_1) \eta \lambda_1}
                   \right)
         +\frac{1}{2} \log\left( \frac{\sum_{i = 2}^d v_{T_0}^2}
                       {2 \epsilon v_{T_0,1}^2} \right)
       \right),
\]
we have $\delta_t \leq \epsilon$.
Noting that $\mu = \min\left\{\eta\lambda_d,
    \frac{\lambda_1^2}{\lambda_2^2}-1\right\} \leq \eta\lambda_d \leq \eta \lambda_1 \leq 1/2$ completes the proof.
\end{proof}

Once the first component of $v_t$ dominates, the recurrence becomes essentially one-dimensional, and its convergence is easier to analyze, as the following lemma shows.

\begin{definition}
Let $s_t = \sign(v_{t,1})$.
\end{definition}

\begin{lemma}
\label{l:v_one_closer}
If $\|v_t\|>0$,
  \begin{align*}
    v_{t+1,1}-(-s_t \beta_1)
    &= -(1-\eta\lambda_1) \left( v_{t,1}-s_t\beta_1 
      + s_t\gamma_1 \delta_t \right).
  \end{align*}
If $0 < || v_T || \leq b$ and 
for all $t\ge T$, $\delta_t\le \frac{\eta \lambda_1 \beta_1}{2}$, 
then for all $t\ge T$,
  \begin{align*}
  \left|v_{t+1,1}-(-1)^{t+1 -T} s_T \beta_1\right|
    &\le (1-\eta\lambda_1)\left(\left|v_{t,1}- (-1)^{t-T} s_T\beta_1\right| + \gamma_1 \delta_t\right).
  \end{align*}
\end{lemma}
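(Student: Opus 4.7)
The plan is to first establish the one-step identity by direct algebraic manipulation of the SAM update specialized to the first coordinate. Starting from $v_{t+1,1} = (1-\eta\lambda_1)(\|v_t\|-\gamma_1)\frac{v_{t,1}}{\|v_t\|}$ and using $\frac{v_{t,1}}{\|v_t\|} = s_t(1-\delta_t)$ (immediate from the definitions of $s_t$ and $\delta_t$), I would add $s_t\beta_1$ to both sides and expand $(\|v_t\|-\gamma_1)(1-\delta_t) = s_t v_{t,1} - \gamma_1 + \gamma_1\delta_t$ using the identity $\|v_t\|(1-\delta_t) = s_t v_{t,1}$. The key algebraic step is then $(1-\eta\lambda_1)\gamma_1 = (2-\eta\lambda_1)\beta_1$, which follows from the definition $\beta_1 = (1-\eta\lambda_1)\gamma_1/(2-\eta\lambda_1)$; this lets me replace $-(1-\eta\lambda_1)\gamma_1 + \beta_1$ by $-(1-\eta\lambda_1)\beta_1$, and collecting terms yields the one-step identity.

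For the second statement I would proceed by induction on $t \geq T$, carrying three invariants: (i) $\|v_t\| \leq b$, (ii) $v_{t,1} \neq 0$, and (iii) $s_t = (-1)^{t-T}s_T$. Invariant (i) is immediate from $\|v_T\| \leq b$ and Lemma~\ref{l:one_step} part~\ref{part:three}. Invariant (ii) follows from the hypothesis $\delta_t \leq \eta\lambda_1\beta_1/2 < 1$, which forces $|v_{t,1}| = \|v_t\|(1-\delta_t) > 0$; the condition $\|v_T\|>0$ plus invariant (i) keeps $\|v_t\|$ positive as the induction proceeds. For invariant (iii), the crucial observation is that $b = (1-\eta\lambda_1)\gamma_1 < \gamma_1$ strictly (since $\eta\lambda_1>0$), so $\|v_t\|-\gamma_1 < 0$; combined with $(1-\eta\lambda_1) > 0$ and $v_{t,1}\neq 0$, the update formula forces $\operatorname{sign}(v_{t+1,1}) = -\operatorname{sign}(v_{t,1})$, i.e., $s_{t+1} = -s_t$.

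To finish, I would substitute $s_t = (-1)^{t-T}s_T$ into the one-step identity. The left-hand side becomes $v_{t+1,1} - (-1)^{t+1-T}s_T\beta_1$ and the interior term on the right becomes $v_{t,1} - (-1)^{t-T}s_T\beta_1 + s_t\gamma_1\delta_t$. Taking absolute values, using $|s_t|=1$, and applying the triangle inequality yields the claimed bound.

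The main subtlety is not a hard estimate but rather the bookkeeping needed to keep $s_t$ well-defined and alternating for every $t\ge T$; both properties rest on the strict inequality $\|v_t\| < \gamma_1$ guaranteed by Lemma~\ref{l:one_step} part~\ref{part:three} together with the $\delta_t$-hypothesis. Once those invariants are in place, the inequality is essentially a restatement of the first identity with absolute values.
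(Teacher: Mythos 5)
Your proposal is correct and follows the same overall structure as the paper (derive the one-step identity, establish that $s_t$ alternates, then induct with the triangle inequality), but your argument for the sign alternation is genuinely cleaner. The paper establishes $s_{t+1} = -s_t$ by substituting into the derived identity and combining the bound $|v_{t,1}-s_t\beta_1| \le (1-\eta\lambda_1)\beta_1$ (which uses $\|v_t\|\le b$) with the $\delta_t$ hypothesis to force $v_{t+1,1}$ across zero. You instead observe that since $\|v_t\| \le b = (1-\eta\lambda_1)\gamma_1 < \gamma_1$, the scalar coefficient $(1-\eta\lambda_1)(1-\gamma_1/\|v_t\|)$ in the raw recurrence $v_{t+1,1} = (1-\eta\lambda_1)(1-\gamma_1/\|v_t\|)\,v_{t,1}$ is strictly negative, so the sign flips as soon as $v_{t,1}\neq 0$, with no reference to $\delta_t$ at all (the $\delta_t$ hypothesis is only needed for the base case $v_{T,1}\neq 0$). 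This is a simpler and more robust way to get the alternation.

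One thing worth flagging: if you carry out your algebra carefully you will arrive at $v_{t+1,1}+s_t\beta_1 = (1-\eta\lambda_1)\left(v_{t,1}-s_t\beta_1 + s_t\gamma_1\delta_t\right)$, with a \emph{positive} prefactor, whereas the lemma as stated (and the paper's proof) has an overall minus sign; a one-dimensional numerical check confirms the positive sign is correct, so the lemma's displayed identity contains a sign typo. This does not affect the second (absolute-value) part of the lemma, which is sign-invariant and is all that is used downstream, nor does it break the paper's derivation of $s_{t+1}=-s_t$ since the paper immediately passes to the bound $|v_{t,1}-s_t\beta_1|$. But since you wrote ``collecting terms yields the one-step identity'' without comment, be aware that faithful bookkeeping gives the opposite sign to the one printed.
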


\begin{proof}
  From the recurrence for $v_t$, we have
  \begin{align*}
  v_{t+1,1}
    & = (1-\eta\lambda_1)\left(1- \frac{\gamma_1}{\lv v_t \rv}\right) v_{t,1} \\
      &= (1-\eta\lambda_1) v_{t,1} - (2-\eta\lambda_1)s_t\beta_1\frac{ |v_{t,1}|}{\|v_t\|}   \\
       &= (1-\eta\lambda_1)v_{t,1} - s_t\beta_1\left(1-(2-\eta\lambda_1)
        \left(1-\frac{\left|v_{t,1}\right|}{\|v_t\|}\right)\right) + s_t\beta_1(1-(2-\eta\lambda_1)) \\
    & = (1-\eta\lambda_1)(v_{t,1}-s_t\beta_1)
      - s_t\beta_1\left(1-(2-\eta\lambda_1)\delta_t\right).
  \end{align*}
  So
  \begin{align}
   \nonumber
      v_{t+1,1}- (-s_t\beta_1)
      &= -(1-\eta\lambda_1)\left(v_{t,1}-s_t\beta_1\right)  
      + s_t\beta_1(2-\eta\lambda_1)\delta_t \\
   \nonumber
      &= -(1-\eta\lambda_1)\left(v_{t,1}-s_t\beta_1\right)  
      + s_t\gamma_1(1-\eta\lambda_1)\delta_t \\
   \label{e:equality}
      &= -(1-\eta\lambda_1) \left( v_{t,1}-s_t\beta_1 
      + s_t\gamma_1 \delta_t \right),
  \end{align}
  which is the equality in the lemma.

Since $|| v_T || \leq b$, 
Part 5 of Lemma~\ref{l:one_step} implies that
for all
$t \geq T$, $|| v_t || \leq b$, which
in turn implies $0 \leq s_t v_{t,1} \leq b$.
Since $0 \leq b/2 < \beta_1$, 
and $|v_{t,1} - s_t \beta_1| = |s_t v_{t,1} - \beta_1|$,
this implies, for all $t \geq T$,
\begin{align}
\label{e:v_t_one_close}
| v_{t,1} - s_t \beta_t | 
  \leq b - \beta_1 
  = (1 - \eta \lambda_1) \beta_1.
\end{align}

By the triangle inequality for the absolute difference, since $\delta_t\ge 0$,
\begin{align*}
  \left|v_{t+1,1}-(-s_t \beta_1)\right|
    &\le (1-\eta\lambda_1)\left(\left|v_{t,1}-s_t\beta_1\right| + \gamma_1 \delta_t\right), 
\end{align*}
which in turn implies
\begin{align*}
  \min\left\{\left|v_{t+1,1}-\beta_1\right|,\left|v_{t+1,1}+\beta_1\right|\right\}
   &\le (1-\eta\lambda_1)\left(\left|v_{t,1}-s_t\beta_1\right| + \gamma_1 \delta_t\right).
\end{align*}
Because $\beta_1>0$,
   \[
     \left|v_{t+1,1}-s_{t+1} \beta_1\right| = \min\left\{\left|v_{t+1,1}-\beta_1\right|,\left|v_{t+1,1}+\beta_1\right|\right\},
   \]
so
\begin{align}
\label{e:abs_progress}
  \left|v_{t+1,1}-s_{t+1} \beta_1\right|
   &\le (1-\eta\lambda_1)\left(\left|v_{t,1}-s_t\beta_1\right| + \gamma_1 \delta_t\right).
\end{align}

It remains to show that, for all $t \geq T$,
if $\delta_t \leq \frac{\eta \lambda_1 \beta_1}{2}$, then $s_{t+1} = -s_t$.  

To see this, assume as a first case that $s_t = 1$.  Then
\eqref{e:equality} implies
\begin{align*}
v_{t+1,1}
  & = -\beta_1 - (1-\eta\lambda_1) \left( v_{t,1}- \beta_1 
      + \gamma_1 \delta_t \right) \\
  & \leq -\beta_1 + (1-\eta\lambda_1) \left( |v_{t,1}- \beta_1| 
      + \gamma_1 \delta_t \right) \\
  & \leq -\beta_1 + (1-\eta\lambda_1) 
   \left( (1-\eta\lambda_1) \beta_1
      + \gamma_1 \delta_t \right)
    \hspace{1in} \mbox{(by \eqref{e:v_t_one_close})}
         \\
  & < 0,
\end{align*}
since $\delta_t\le \frac{\eta \lambda_1 \beta_1}{2}$,
so $s_{t+1} = -1$.

Similarly, if $s_t = -1$, then
\begin{align*}
v_{t+1,1}
  & = \beta_1 - (1-\eta\lambda_1) \left( v_{t,1}- \beta_1 
      + \gamma_1 \delta_t \right) \\
  & \geq \beta_1 - (1-\eta\lambda_1) \left( |v_{t,1}- (-\beta_1)| 
      + \gamma_1 \delta_t \right) \\
  & > 0, \\
\end{align*}
so $s_{t+1} = 1$.  
The last
inequality of the lemma then follows by induction.
\end{proof}

\begin{lemma}
\label{l:v_one_close_and_p_one_close}
If 
% $\eta \lambda_1 \leq 1/2$, and
$T_0$ is the first iteration where $\| v_{T_0} \| \leq  b$, then, for 
all 
\[
0 < \epsilon < \min \left\{ \sqrt{\frac{\eta \lambda_1 \beta_1}{2 \gamma_1}}, \frac{\eta \lambda_1 }{2 \gamma_1}, \frac{1}{b}, \frac{\beta_1}{2} \right\},
\]
for
\begin{align*}
& T_2 = 
  \frac{2}{\mu}
        \left(
     \frac{3 \beta_1 \lambda_1^3}
     {\eta \beta_d \lambda_d^4} 
      \log \left(
                    \frac{4}
                         {\eta \lambda_1}
                   \right)
         +\frac{1}{2} \log\left( \frac{\sum_{i = 2}^d v_{T_0}^2}
                       {2 \epsilon^2 v_{T_0,1}^2} \right)
       \right)
     + \frac{6}{\eta \lambda_1} \ln \left( \frac{1}{\epsilon} \right),
\end{align*}
for all $t \geq T_2$, we have
\[
| v_{t,1} - (-1)^{t - T_2} s_{T_2} \beta_1 | \leq \epsilon
  \mbox{ and } \delta_t \leq \epsilon^2.
\]
\end{lemma}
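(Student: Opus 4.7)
The plan is to split the iteration trajectory at $T_0$ into two phases. In phase one I apply Lemma~\ref{l:p_one_bigger} to drive $\delta_t$ below $\epsilon^2$; in phase two I invoke Lemma~\ref{l:v_one_closer} to exploit the (near) one-dimensional recurrence for $v_{t,1}$ and drive it geometrically to the alternating fixed point $\pm\beta_1$.

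For phase one, I apply Lemma~\ref{l:p_one_bigger} with its tolerance parameter set to $\epsilon^2$; the hypothesis $\epsilon^2<1/2$ follows from $\epsilon\le\sqrt{\eta\lambda_1\beta_1/(2\gamma_1)}\le 1/\sqrt{8}$ using $\eta\lambda_1\le 1/2$ and $\beta_1\le\gamma_1$. The resulting threshold is
\[
T' := \frac{2}{\mu}\left(T_1\log\frac{4}{\eta\lambda_1} + \frac{1}{2}\log\frac{\sum_{i=2}^d v_{T_0,i}^2}{2\epsilon^2\,v_{T_0,1}^2}\right),
\]
which is exactly the first bracketed expression in the definition of $T_2$. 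For every $t\ge T_0+T'$ this yields $\delta_t\le\epsilon^2$, settling the second conclusion of the lemma.

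For phase two, I apply Lemma~\ref{l:v_one_closer} with starting time $T:=T_0+T'$. Its hypothesis $\|v_T\|\le b$ is immediate from Part~\ref{part:three} of Lemma~\ref{l:one_step}, and the hypothesis $\delta_t\le\eta\lambda_1\beta_1/2$ follows from $\delta_t\le\epsilon^2$ together with $\epsilon\le\sqrt{\eta\lambda_1\beta_1/(2\gamma_1)}$ (which gives the stronger $\gamma_1\epsilon^2\le\eta\lambda_1\beta_1/2$). Writing $a_t:=|v_{t,1}-(-1)^{t-T}s_T\beta_1|$, the lemma gives the linear contraction $a_{t+1}\le (1-\eta\lambda_1)(a_t+\gamma_1\epsilon^2)$. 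Iterating for $k$ steps yields
\[
a_{T+k}\le (1-\eta\lambda_1)^k a_T + \frac{\gamma_1\epsilon^2}{\eta\lambda_1},
\]
whose steady-state term is at most $\epsilon/2$ precisely because $\epsilon\le\eta\lambda_1/(2\gamma_1)$. Since $a_T\le\|v_T\|+\beta_1\le 2b$, it remains to choose $k$ so that $(1-\eta\lambda_1)^k\cdot 2b\le\epsilon/2$. Using $-\log(1-\eta\lambda_1)\ge\eta\lambda_1$ and the hypothesis $\epsilon\le 1/b$ (which absorbs $\log b$ into $\log(1/\epsilon)$), a routine estimate shows that $k=(6/(\eta\lambda_1))\ln(1/\epsilon)$ is more than sufficient. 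This accounts for the final summand of $T_2$ and gives the first conclusion, after checking that $(-1)^{t-T}s_T=(-1)^{t-T_2}s_{T_2}$ for $t\ge T_2$.

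The main obstacle is the careful bookkeeping of the four conditions on $\epsilon$, each of which is invoked in exactly one place: $\epsilon\le\sqrt{\eta\lambda_1\beta_1/(2\gamma_1)}$ meets Lemma~\ref{l:v_one_closer}'s threshold on $\delta$; $\epsilon\le\eta\lambda_1/(2\gamma_1)$ kills the steady-state term of the one-dimensional recurrence; $\epsilon\le 1/b$ replaces $\log b$ by $\log(1/\epsilon)$ inside the $(6/\eta\lambda_1)\ln(1/\epsilon)$ budget; and $\epsilon\le\beta_1/2$ ensures $|v_{t,1}|\ge\beta_1/2>0$ once $a_t\le\epsilon$, so that $\mathrm{sign}(v_{t,1})$ is well-defined and alternates as claimed, closing the induction inside Lemma~\ref{l:v_one_closer}. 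Apart from this bookkeeping, the argument is a clean two-phase convergence analysis.
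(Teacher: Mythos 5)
Your proof is correct and follows essentially the same two-phase approach as the paper: first drive $\delta_t$ below $\epsilon^2$ via Lemma~\ref{l:p_one_bigger}, then iterate the one-dimensional contraction of Lemma~\ref{l:v_one_closer}. The only cosmetic difference is that you unroll the affine recurrence into a geometric term plus a steady-state term bounded by $\gamma_1\epsilon^2/(\eta\lambda_1)\le\epsilon/2$, whereas the paper derives a strict $(1-\eta\lambda_1/2)$ contraction conditional on the error still exceeding $\epsilon$; both yield the same $O\bigl(\tfrac{1}{\eta\lambda_1}\ln(1/\epsilon)\bigr)$ tail.
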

\begin{proof}
The last inequality
of Lemma~\ref{l:p_one_bigger} implies that,
for
\[
t^* \eqdef \left\lceil \frac{2}{\mu}
        \left(
       \frac{3 \beta_1 \lambda_1^3}
     {\eta  \beta_d \lambda_d^4} 
      \log \left(
                    \frac{4}
                         {\eta \lambda_1}
                   \right)
         +\frac{1}{2} \log\left( \frac{\sum_{i = 2}^d v_{T_0}^2}
                       {2 \epsilon^2 v_{T_0,1}^2} \right)
       \right)
   \right\rceil,
\] 
we have
\begin{equation}
\forall t \geq t^*,
\delta_t \leq \epsilon^2.
\end{equation}

For all $t \geq t^*$, since $\delta_t \leq \epsilon^2 \leq \frac{\eta \lambda_1 \beta_1}{2 \gamma_1}$, by Lemma~\ref{l:v_one_closer},
we have
\begin{align*}
 \left| v_{t+1,1} - (-1)^{t+1 - t^*} s_{t^*} \beta_1\right|
    &\le (1-\eta\lambda_1) \left| v_{t,1}  - (-1)^{t - t^*} s_{t^*} \beta_1\right| + \gamma_1 \epsilon^2. \\
\end{align*}
If $\left| v_{t,1}  - (-1)^{t - t^*} s_{t^*} \beta_1\right| > \epsilon$, this implies
\begin{align*}
 \left| v_{t+1,1} - (-1)^{t+1 - t^*} s_{t^*} \beta_1\right|
  & \leq \left(1-\eta\lambda_1 + \gamma_1 \epsilon \right) \left| v_{t,1}  - (-1)^{t - t^*} s_{t^*} \beta_1\right|. \\
\end{align*}
Since $\epsilon \leq \frac{\eta \lambda_1 }{2 \gamma_1}$, this yields
\begin{align*}
\left| v_{t+1,1} - (-1)^{t+1 - t^*} s_{t^*} \beta_1\right|
    & \leq \left(
    1 - \frac{\eta \lambda_1}{2} \right)
    \left| v_{t,1}  - (-1)^{t - t^*} s_{t^*} \beta_1\right|.
\end{align*}
Since $\| v_{t^*} \| \leq b$, $| v_{t^*,1} | \leq b$, which,
since $\beta_1 \leq b$, implies $\left| v_{t^*,1}  -  s_{t^*} \beta_1\right| \leq  b$.
Thus, by induction, for
all $t \geq t^*$, we have
\[
\left| v_{t+1,1} - (-1)^{t+1 - t^*} s_{t^*} \beta_1\right| \leq \left(
    1 - \frac{\eta \lambda_1}{2} \right)^{t - t^*} b.
\]
Thus, if $t \geq T_2 = t^* + \frac{2}{\eta \lambda_1} \ln \left( \frac{  b}{\epsilon} \right)$,
we get $\left| v_{t+1,1} - (-1)^{t+1 - t^*} s_{t^*} \beta_1\right|  \leq \epsilon$.
Since $\epsilon < \beta/2$, this implies $s_{t+1} = \sign(v_{t+1,1}) = (-1)^{t+1 - t^*} s_{t^*}$.
Since, $\epsilon \leq 1/b$, this completes the proof.
\end{proof}

\begin{lemma}
\label{l:v_one_and_pone_close_implies_vector_close}
For all $0 < \epsilon \leq 1$, if
$| v_{t,1} - (-1)^{t - T_2} s_{T_2} \beta_1 | \leq \epsilon$
and $\delta_t \leq \epsilon^2$,
then 
\[
\| v_t - (-1)^{t - T_2} s_{T_2} \beta_1 e_1 \| \leq 
2 (1 + \beta_1) \epsilon.
\]
\end{lemma}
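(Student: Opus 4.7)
Writing $\sigma = (-1)^{t-T_2} s_{T_2} \in \{-1,+1\}$ for brevity, I would decompose the squared distance along the first coordinate versus its orthogonal complement:
\[
\|v_t - \sigma \beta_1 e_1\|^2 = (v_{t,1} - \sigma\beta_1)^2 + \sum_{i=2}^d v_{t,i}^2.
\]
The first summand is at most $\epsilon^2$ directly from the first hypothesis. My strategy for the second summand is to convert it into an expression involving $\delta_t$ and $\|v_t\|$, both of which are controlled by the hypotheses.

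Using the defining identity $|v_{t,1}| = (1-\delta_t)\|v_t\|$, I would write
\[
\sum_{i=2}^d v_{t,i}^2 = \|v_t\|^2 - v_{t,1}^2 = (2\delta_t - \delta_t^2)\|v_t\|^2 \le 2\delta_t\|v_t\|^2 \le 2\epsilon^2\|v_t\|^2.
\]
To bound $\|v_t\|$, I would combine $|v_{t,1}| \le \beta_1 + \epsilon$ (a direct consequence of the first hypothesis) with $\|v_t\| = |v_{t,1}|/(1-\delta_t)$ to obtain $\|v_t\| \le (\beta_1+\epsilon)/(1-\epsilon^2)$. Combining the two coordinate pieces via the triangle inequality $\|v_t - \sigma\beta_1 e_1\| \le |v_{t,1}-\sigma\beta_1| + \sqrt{\sum_{i\ge 2}v_{t,i}^2}$ then yields
\[
\|v_t - \sigma\beta_1 e_1\| \le \epsilon\left(1 + \frac{\sqrt{2}\,(\beta_1+\epsilon)}{1-\epsilon^2}\right).
\]

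It remains to verify that the parenthetical factor is at most $2(1+\beta_1)$ throughout the permitted range of $\epsilon$. For $\epsilon \le 1/2$ this is a short algebraic manipulation: $\sqrt{2}/(1-\epsilon^2) \le 4\sqrt{2}/3 < 2$, so the factor is at most $1 + 2(\beta_1+\epsilon) \le 2(1+\beta_1)$. The main obstacle is uniform control for $\epsilon$ close to $1$, where $1/(1-\epsilon^2)$ blows up and the estimate above is insufficient; in that regime I would instead bring in the iteration-level bound $\|v_t\| \le b \le 2\beta_1$ from Part~5 of Lemma~\ref{l:one_step} (available because $t \ge T_2$ forces $t \ge T_0$ in the context where this lemma is applied), substitute into $\sqrt{\sum_{i\ge 2}v_{t,i}^2} \le \sqrt{2}\,\epsilon\|v_t\|$, and close the constants by a brief case split on $\epsilon$.
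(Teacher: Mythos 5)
Your decomposition and overall plan are essentially the paper's: split off the $e_1$-coordinate, then use $\delta_t\le\epsilon^2$ to control $\sum_{i\ge 2}v_{t,i}^2$. The genuine difference is in the middle step. You route the bound through $\|v_t\|$: $\sum_{i\ge 2}v_{t,i}^2\le 2\epsilon^2\|v_t\|^2$ with $\|v_t\|\le(\beta_1+\epsilon)/(1-\epsilon^2)$. The paper instead routes it through $v_{t,1}^2$: from $v_{t,1}^2/\|v_t\|^2\ge(1-\epsilon^2)^2$ it writes
$\|v_t\|^2-v_{t,1}^2\le\bigl(\tfrac{1}{(1-\epsilon^2)^2}-1\bigr)v_{t,1}^2$, then bounds $v_{t,1}^2\le(\epsilon+\beta_1)^2\le(1+\beta_1)^2$ directly from the first hypothesis, and finishes by $\sqrt{1+3(1+x)^2}\le 2(1+x)$. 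Going through $v_{t,1}^2$ buys slightly cleaner constants and stays entirely with squared quantities, avoiding the extra factor of $\sqrt{2}$ you pick up from the triangle inequality.

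There is one gap in your plan. Your fallback for $\epsilon$ near $1$ does not close: importing $\|v_t\|\le b\le 2\beta_1$ (which, incidentally, is not available from the lemma's stated hypotheses but only from the surrounding context) gives $\sqrt{\sum_{i\ge 2}v_{t,i}^2}\le 2\sqrt{2}\,\epsilon\beta_1$ and hence a total of $\epsilon(1+2\sqrt{2}\,\beta_1)$, which exceeds $2\epsilon(1+\beta_1)$ as soon as $\beta_1>1/(2\sqrt{2}-2)\approx 1.2$. So the case split does not actually rescue the full range $\epsilon\le 1$. To be fair, the paper's own write-up tacitly uses $\tfrac{1}{(1-\epsilon^2)^2}-1\le 3\epsilon^2$, which in fact only holds for $\epsilon^2\le(5-\sqrt{13})/6$, i.e.\ $\epsilon\lesssim 0.48$, and the statement as literally written fails at $\epsilon=1$ (take $v_{t,1}=0$, $\beta_1=1$, and $\|v_t\|$ large: both hypotheses hold but the conclusion does not). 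Since the lemma is only ever invoked with $\epsilon$ bounded well below $1/2$ (see the $\epsilon$-constraints in Lemma~\ref{l:v_one_close_and_p_one_close} and Lemma~\ref{l:vector_v_close}), this is a harmless overstatement of the range; the right fix is to restrict the stated $\epsilon$-range rather than to hunt for a different argument near $\epsilon=1$.
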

\begin{proof}
If $\delta_t \leq \epsilon^2$, then
\begin{equation}
\label{e:pone}
\frac{v_{t,1}^2}{\lv v_t \rv^2} \geq (1 - \epsilon^2)^2.
\end{equation}

We have
\begin{align*}
& \| v_t - (-1)^{t - T_2} s_{T_2} \beta_1 e_1 \|^2 \\
& = (v_{t,1} - (-1)^{t - T_2} s_{T_2} \beta_1)^2 + \sum_{i > 2} v_{t,i}^2 \\
& \leq  \epsilon^2 + \sum_{i > 2} v_{t,i}^2 \\
& =  \epsilon^2 + \lv v_t \rv^2 - v_{t,1}^2 \\
& \leq  \epsilon^2 + \left( \frac{1}{(1 - \epsilon^2)^2} - 1 \right)  v_{t,1}^2
  \;\;\;\;\;\mbox{(by \eqref{e:pone})}
   \\
& \leq  \epsilon^2 (1  + 3 v_{t,1}^2) 
    \;\;\;\;\;\mbox{(since $0 < \epsilon \leq 1$)}
   \\
& \leq  \epsilon^2 (1  + 3 (\epsilon + \beta_1)^2) \\
\end{align*}
since $| v_{t,1} - (-1)^{t - T_2} s_{T_2} \beta_1 | \leq \epsilon$.  
Since $\sqrt{1  + 3 (1 + x)^2)} \leq 2(1+x)$
for all $x > 0$, this completes the proof.
\end{proof}

\begin{lemma}
\label{l:vector_v_close}
If $T_0$ is the first iteration where $\| v_{T_0} \| \leq  b$, and $T_2$ is defined as in Lemma~\ref{l:v_one_close_and_p_one_close}, then, for 
all
$0 < \epsilon < \min \left\{ \sqrt{\frac{2 \eta \lambda_1 \beta_1}{\gamma_1}}, \frac{\eta \lambda_1 }{2 \gamma_1}, \frac{2}{b}, \beta_1, 1 \right\}$
for any 
\begin{align*}
& t \geq   \frac{2}{\mu}
        \left(
       \frac{3 \beta_1 \lambda_1^3}
     {\eta  \beta_d \lambda_d^4} 
      \log \left(
                    \frac{4}
                         {\eta \lambda_1}
                   \right)
         +\frac{1}{2} \log\left( \frac{2 (1 + \beta_1)^2  \sum_{i = 2}^d v_{T_0}^2}
                       {\epsilon^2 v_{T_0,1}^2} \right)
       \right)
     + \frac{6}{\eta \lambda_1} \ln \left( \frac{ 2 (1 + \beta_1) }{\epsilon} \right)
\end{align*}
we have
\[
\| v_t - (-1)^{t - T_2} s_{T_2} \beta_1 e_1 \| \leq \epsilon.
\]
\end{lemma}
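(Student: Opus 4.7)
My plan is to reduce the statement to the previous two lemmas by applying them with a rescaled precision parameter. Specifically, I will apply Lemma~\ref{l:v_one_close_and_p_one_close} with $\epsilon' \eqdef \epsilon/(2(1+\beta_1))$ in place of $\epsilon$, then invoke Lemma~\ref{l:v_one_and_pone_close_implies_vector_close} with this same $\epsilon'$ to obtain a final bound of $2(1+\beta_1)\epsilon' = \epsilon$ on $\lv v_t - (-1)^{t-T_2} s_{T_2} \beta_1 e_1 \rv$.

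The first step is to verify that $\epsilon' = \epsilon/(2(1+\beta_1))$ satisfies the hypotheses of the two lemmas being invoked. For Lemma~\ref{l:v_one_close_and_p_one_close}, I need $\epsilon' < \min\{\sqrt{\eta\lambda_1\beta_1/(2\gamma_1)}, \eta\lambda_1/(2\gamma_1), 1/b, \beta_1/2\}$; each of these follows from the corresponding bound in the hypothesis of Lemma~\ref{l:vector_v_close} because dividing by $2(1+\beta_1) \geq 2$ tightens the constraints by the correct factor (for example, $\epsilon < \sqrt{2\eta\lambda_1\beta_1/\gamma_1} = 2\sqrt{\eta\lambda_1\beta_1/(2\gamma_1)}$ gives $\epsilon' < \sqrt{\eta\lambda_1\beta_1/(2\gamma_1)}$). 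For Lemma~\ref{l:v_one_and_pone_close_implies_vector_close}, I need $\epsilon' \leq 1$, which follows from the hypothesis $\epsilon \leq 1$.

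Next, I substitute $\epsilon' = \epsilon/(2(1+\beta_1))$ into the time bound of Lemma~\ref{l:v_one_close_and_p_one_close}. Using $1/(\epsilon')^2 = 4(1+\beta_1)^2/\epsilon^2$ and $1/\epsilon' = 2(1+\beta_1)/\epsilon$, the two logarithmic terms $\log\bigl(\sum_{i=2}^d v_{T_0}^2/(2(\epsilon')^2 v_{T_0,1}^2)\bigr)$ and $\ln(1/\epsilon')$ become $\log\bigl(2(1+\beta_1)^2 \sum_{i=2}^d v_{T_0}^2/(\epsilon^2 v_{T_0,1}^2)\bigr)$ and $\ln(2(1+\beta_1)/\epsilon)$ respectively, which exactly match the bound stated in Lemma~\ref{l:vector_v_close}. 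So for any $t$ at least this large, Lemma~\ref{l:v_one_close_and_p_one_close} gives $|v_{t,1} - (-1)^{t-T_2} s_{T_2} \beta_1| \leq \epsilon'$ and $\delta_t \leq (\epsilon')^2$.

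Finally, plugging these two bounds into Lemma~\ref{l:v_one_and_pone_close_implies_vector_close} yields $\lv v_t - (-1)^{t-T_2} s_{T_2} \beta_1 e_1 \rv \leq 2(1+\beta_1)\epsilon' = \epsilon$, as required. There is no serious obstacle here: the proof is essentially bookkeeping, and the only nontrivial choice is the rescaling factor $2(1+\beta_1)$, which is dictated by the conclusion of Lemma~\ref{l:v_one_and_pone_close_implies_vector_close}. The condition bounds in the hypothesis were evidently engineered precisely so that $\epsilon/(2(1+\beta_1))$ satisfies the earlier lemma's hypotheses.
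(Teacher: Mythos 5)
Your proof is correct and follows exactly the paper's approach; the paper's own proof of this lemma is the one-line statement ``Combine Lemmas~\ref{l:v_one_close_and_p_one_close} and \ref{l:v_one_and_pone_close_implies_vector_close},'' and your proposal spells out the implicit rescaling $\epsilon' = \epsilon/(2(1+\beta_1))$ that makes the combination work, verifying both that $\epsilon'$ meets the hypotheses of the two lemmas and that the resulting time bound matches.
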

\begin{proof}
Combine Lemmas~\ref{l:v_one_close_and_p_one_close} and \ref{l:v_one_and_pone_close_implies_vector_close}.
\end{proof}

\begin{lemma}
\label{l:v_close_implies_w_close}
For any $s \in \{-1, 1\}$,
and any $t$,
\[
\left\lv w_t -  \frac{s \beta_1 e_1}{\lambda_1} \right\rv \leq \frac{\| v_t - s \beta_1 e_1 \|}{\lambda_d}.
\]
\end{lemma}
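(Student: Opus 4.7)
The plan is to exploit the change of variables $v_t = \Lambda w_t$ that has been used throughout the section. Since $\Lambda = \diag(\lambda_1,\ldots,\lambda_d)$ is diagonal and strictly positive (the assumption is $\lambda_d > 0$), it is invertible, and $w_t = \Lambda^{-1} v_t$. Moreover, $\Lambda^{-1} e_1 = e_1/\lambda_1$, so the target point $s\beta_1 e_1/\lambda_1$ is exactly $\Lambda^{-1}(s\beta_1 e_1)$.

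Therefore I would write
\[
w_t - \frac{s\beta_1 e_1}{\lambda_1} \;=\; \Lambda^{-1} v_t - \Lambda^{-1}(s\beta_1 e_1) \;=\; \Lambda^{-1}\bigl(v_t - s\beta_1 e_1\bigr),
\]
and then take Euclidean norms on both sides. Submultiplicativity of the operator norm gives
\[
\left\|w_t - \frac{s\beta_1 e_1}{\lambda_1}\right\| \;\le\; \|\Lambda^{-1}\|_{\text{op}}\,\|v_t - s\beta_1 e_1\|.
\]
Since $\Lambda^{-1} = \diag(1/\lambda_1,\ldots,1/\lambda_d)$ is diagonal with largest entry $1/\lambda_d$ (as $\lambda_1 \ge \cdots \ge \lambda_d > 0$), its operator norm equals $1/\lambda_d$, and the claim follows immediately.

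There is no real obstacle here; the lemma is essentially a one-line consequence of the identity $w_t = \Lambda^{-1} v_t$ together with the bound $\|\Lambda^{-1}\|_{\text{op}} = 1/\lambda_d$. Its role is simply to translate the convergence statements for $v_t$ proved in Lemma~\ref{l:vector_v_close} back into statements about the original SAM iterates $w_t$, which is what Theorem~\ref{t:quadratic} actually asserts.
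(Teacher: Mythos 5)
Your proof is correct and is essentially identical to the paper's: both pass through the identity $w_t - s\beta_1 e_1/\lambda_1 = \Lambda^{-1}(v_t - s\beta_1 e_1)$ and bound by the operator norm $\|\Lambda^{-1}\| = 1/\lambda_d$.
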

\begin{proof}
Since $w_t = \Lambda^{-1} v_t$, we have
\begin{align*}
\left\lv w_t - \frac{s\beta_1 e_1}{\lambda_1} \right\rv 
& = \lv \Lambda^{-1} v_t -  \Lambda^{-1} s \beta_1 e_1 \rv \\
& \leq \lv \Lambda^{-1} \rv \lv v_t - s\beta_1 e_1 \rv \\
& = \frac{1}{\lambda_d} \lv v_t - s \beta_1 e_1 \rv 
% & \leq \frac{\epsilon}{\lambda_d}  \\
\end{align*}
\end{proof}

\subsection{Putting it together}

In this subsection, we combine the lemmas proved
in earlier subsections to prove Theorem~\ref{t:quadratic}.
For $\Lambda = \diag(\lambda_1,...,\lambda_d)$,
our analysis tracks the evolution of
$v_t = \nabla \ell(w_t) = \Lambda w_t$.

By assumption, with probability $1 - \delta$,
$\lv w_0 \rv \leq R$ and
$w_{0,1}^2 \geq q$.  Let us assume from here on that this
is the case.  This implies $\lv v_0 \rv \leq \lambda_1 R$ and $v_{0,1}^2 \geq \lambda_1^2 q$.

Let $T_0$ be the index of the first iteration 
that $\| v_t \| \leq b$ holds.

Lemmas~\ref{l:fast_to_b} and \ref{l:far_from_gammaone}
imply that, with probability $1 - 2 \delta$, for
$\Delta$ defined as in Lemma~\ref{l:far_from_gammaone}, we have
\begin{align}
\nonumber
    \log\left(\frac{\sum_{i=2}^d v_{T_0,i}^2}{v_{T_0,1}^2}\right) 
     & \le
    \frac{2}{\eta\lambda_d}\left[ \log\left(\frac{\lambda_1 R}{ b}\right)\right]_+\log\left(\frac{2 \lambda_1 R}{\Delta}\right) +
    \log\left(\frac{\lambda_1^2 R^2}{v_{0,1}^2}\right) \\
    & \le
\label{e:first_component_stays_big}
    \frac{2}{\eta\lambda_d}\left[ \log\left(\frac{\lambda_1 R}{ b}\right)\right]_+\log\left(\frac{2 \lambda_1 R}{\Delta}\right) +
    \log\left(\frac{R^2}{q}\right).
\end{align}
Let us assume for the rest of this proof that this is the case.

Combining \eqref{e:first_component_stays_big} with Lemma~\ref{l:vector_v_close}, for all
\begin{align*}
& t \geq 
     \frac{6 \beta_1 \lambda_1^3}
     {\eta 
     \mu \beta_d \lambda_d^4} 
      \log \left(
                    \frac{4}
                         {\eta \lambda_1}
                   \right)
   \\
& \hspace{0.2in}
         +\frac{1}{\mu} 
           \left(
         \log\left(  \frac{2 (1 + \beta_1)^2}
                       {\lambda_d^2 \epsilon^2}\right)
          +
       \frac{2}{\eta\lambda_d} 
        \left[ \log\left(\frac{\lambda_1 R}{ b}\right) \right]_+
       \log\left(\frac{ 2 \lambda_1 R}{ \Delta}\right) +
    \log\left(\frac{R^2}{q} \right)
        \right)
        \\
& \hspace{0.2in}
     + \frac{6}{\eta \lambda_1} \ln \left( \frac{ 2 (1 + \beta_1) }{\lambda_d \epsilon} \right)
\end{align*}
we have
\begin{align}
\label{e:vector_v_close}
\| v_t - (-1)^{t - T_2} s_{T_2} \beta_1 e_1 \| \leq \lambda_d \epsilon.
\end{align}
Applying Lemma~\ref{l:far_from_gammaone} to bound
$\log\frac{1}{\Delta}$, we get that
\begin{align*}
& t \geq     \frac{6 \beta_1 \lambda_1^3}
     {\eta 
     \mu \beta_d \lambda_d^4} \log \left(
                    \frac{4}
                         {\eta \lambda_1}
                   \right) \\
& \hspace{0.2in}
         +\frac{1}{\mu} 
           \left(
         \log\left(  \frac{2 (1 + \beta_1)^2}
                       {\lambda_d^2 \epsilon^2}\right)
          +
    \log\left(\frac{R^2}{q}\right)
        \right)
        \\
& \hspace{0.2in}
         +
       \frac{2}{\eta\lambda_d \mu}\left[\log\left(\frac{\lambda_1 R}{ b}\right)\right]_+
                              \Bigg(
                              \log\left(2 \lambda_1 R
                              \right) 
                              + 
                              % \log\left(\frac{1}{\Delta}\right)  begin
                              \frac{\left[\log\left(\lambda_1 R/ b\right)\right]_+ \log\left(\frac{9\cdot 6^{d+3}\lambda_1^3 R^3}{(\eta\lambda_d)^{d+3}\gamma_1^3}\right)}{\eta\lambda_d}  \\
& \hspace{2.0in} + \log\left(\frac{4\pi^{d/2}(2\gamma_1)^{d-1}[\log(\lambda_1 R/ b)]_+ A}{\Gamma(d/2)\delta \eta\lambda_d}\right)
                           % \log\left(\frac{1}{\Delta}\right)  end
                              \Bigg)
        \\
& \hspace{0.2in}
     + \frac{6}{\eta \lambda_1} \ln \left( \frac{ 2 (1 + \beta_1) }{\lambda_d \epsilon} \right)
\end{align*}
suffices for \eqref{e:vector_v_close}.  
Substituting the
values of $\mu$, $\beta_1$, $\beta_d$, $\gamma_1$ and $b$, simplifying and overapproximating,
we get that
\begin{align*}
& t \geq \frac{6 \lambda_1^5}
     {\eta \lambda_d^6 \min\left\{\eta\lambda_d,
    \frac{\lambda_1^2}{\lambda_2^2}-1\right\}} \log \left(
                    \frac{4 }
                         { \eta \lambda_1}
                   \right) \\
& \hspace{0.2in}
         +\frac{1}{ \min\left\{\eta\lambda_d,
    \frac{\lambda_1^2}{\lambda_2^2}-1\right\}} 
           \left(
         \log\left(  \frac{4 (1 + \eta \rho \lambda_1^2)^2}
                       {\lambda_d^2 \epsilon^2}\right)
          +
    \log\left(\frac{R^2}{q}\right)
        \right)
        \\
& \hspace{0.2in}
         +
       \frac{2 \left[\log\left(\frac{R}{ \eta \rho \lambda_1}\right)\right]_+}{\eta\lambda_d \min\left\{\eta\lambda_d,
    \frac{\lambda_1^2}{\lambda_2^2}-1\right\}}
                              \Bigg(
                              \log\left( 2 \lambda_1 R \right)
                              + 
                              % \log\left(\frac{1}{\Delta}\right) begin
                                \frac{\left[\log\left(\frac{ R}{\eta \rho \lambda_1}\right)\right]_+ \log\left(
                                  \frac{9\cdot 6^{d+3} R^3}
                                     {(\eta\lambda_d)^{d+3}(\eta \rho \lambda_1)^3}\right)}{\eta\lambda_d}  \\
& \hspace{2.2in} + \log\left(\frac{4\pi^{d/2}(4 \eta \rho \lambda_1^2)^{d-1}\left[ \log\left(\frac{ R}{\eta \rho \lambda_1}\right)\right]_+ A}{\Gamma(d/2)\delta \eta\lambda_d}\right)      
                             % \log\left(\frac{1}{\Delta}\right) end
                              \Bigg)
        \\
& \hspace{0.2in}
     + \frac{6}{\eta \lambda_1} \ln \left( \frac{ 2 (1 + \eta \rho \lambda_1^2) }{\lambda_d \epsilon} \right)
\end{align*}
suffices.

Applying Lemma~\ref{l:v_close_implies_w_close} completes the proof.

\section{Drifting Towards Wide Minima}\label{s:nonquad}

We have seen that when SAM is applied to a convex quadratic objective, it converges to an oscillation that bounces across the minimum in the direction of greatest curvature.
In this section, we consider the behavior of SAM when it is applied to a smooth objective $\ell$ whose Hessian may vary.
Consider a point $w_z\in\Re^d$ in a $d$-dimensional parameter space that is a local minimum of $\ell$, $\nabla \ell(w_z)=0$.
For notational convenience, we assume that
  \[
    H := \nabla^2\ell(w_z) = \diag(\lambda_1,\ldots,\lambda_d).
  \]
In the neighborhood of $w_z$, the smooth objective $\ell$ can be approximated locally by the quadratic objective
  \[
    \ell_q(w) = \ell(w_z) + \frac{1}{2} (w-w_z)^\top H (w-w_z).
  \]
We are particularly interested in the overparameterized setting typical of deep learning, that is, where there are many directions in parameter space that do not affect the training objective.
Suppose, in particular, that $\lambda_1>\lambda_2\ge\cdots\ge\lambda_k >\lambda_{k+1} = \cdots = \lambda_d = 0$ for $k>1$.
Then since this quadratic objective does not vary in the $e_{k+1},\ldots,e_d$ directions, for a point $w_0$ satisfying $e_i^\top(w_0-w_z)=0$ for $i=k+1,\ldots,d$, if we initialize SAM at $w_0$ and apply it to the quadratic objective $\ell_q$, then it is clear that the condition $e_i^\top(w_t-w_z)=0$ for $i>k$ continues to hold for all $t$.
Thus, the result above shows that SAM converges to the set
  \[
    \left\{w_z\pm\frac{\beta_1}{\lambda_1}e_1\right\}.
  \]
The following theorem considers SAM's behavior on the smooth objective $\ell$ at these points.
It shows that SAM's gradient steps have a component that maintains the oscillation in the $e_1$ direction, a second-order component in the downhill direction of the spectral norm of the Hessian, and a third-order component that is small if the third derivative changes slowly. For a symmetric matrix $M$, $\lambdamax(M)$ denotes the maximum eigenvalue of $M$.
In this section, we write $D^i$ as the symmetric, multilinear, $i$th-derivative operator and $\nabla^1$ and $\nabla^2$ as the vector and matrix representations of the operators $D^1$ and $D^2$ in the canonical basis $e_1,\ldots,e_d$.

\begin{theorem}
\label{t:drift}
Suppose that $\ell$ is in $C^3$, that $D^3 \ell$ is $B$-Lipschitz with respect to the Euclidean norm and the operator norm,
% , i.e.\ $\| D^3 l(w) - D^3 l(v) \| \leq B \| v - w \|$, for all $v,w \in \R^d$.
and that $w_z\in\Re^d$ satisfies $\nabla \ell(w_z)=0$ and $\nabla^2\ell(w_z)=\sum_{i=1}^d\lambda_i e_i\otimes e_i$.
%for the canonical basis $e_1,...,e_d$.
For $s_t \in \{-1,1\}$,
consider the point
  \[
    w_t=w_z + \frac{s_t \beta_1}{\lambda_1} e_1= w_z+\frac{\eta \rho\lambda_1 s_t}{2-\eta\lambda_1} e_1.
  \]
Then, if
% \frac{2 \beta_1 }{\eta^2 \rho^2 \lambda_1^2}
%   = \frac{2 }{\eta \rho (2 - \eta \lambda_1)} %
$B \eta \rho \leq 1$,
SAM's update on $\ell$ gives
  \begin{align*}
    w_{t+1}-w_t
 & =    -2\frac{\eta \rho\lambda_1 s_t}{2-\eta\lambda_1} e_1 -
          \frac{\eta\rho^2}{2}\left( 1+
            % \beta_1/\lambda_1
            \frac{\eta\lambda_1}{2-\eta\lambda_1}
            \right)^2
          \nabla\lambdamax(\nabla^2 \ell(w_z)) \\
 & \hspace{1.5in}
          +
   \eta\rho^2\left(
          \frac{(1+\eta \lambda_1)^3}{6}\rho
    + 2 (2\lambda_1 + B\rho)\eta
   \right) B
    \zeta,
   %\left(
    %      \frac{\eta(\eta \rho \lambda_1+\rho)^3 B}{6} 
    %+ 2\eta^2 \rho^2 (\eta \rho \lambda_1 + \rho)B
   %\right)
    %\zeta.
  \end{align*}
where $\|\zeta\|\le 1$.

Thus, if we define $\epsilon:=w_t-w_z$, then for any $\rho\le c$ and $\eta\le c\rho$ for some constant $c$, there are constants $c_1$ and $c_2$ that depend on $c$, $B$ and $\lambda_1$ so that
  \[
    w_{t+1}-w_t
      = -2\epsilon + \|\epsilon\|\rho \left(c_1\nabla\lambdamax\left(\nabla^2 \ell(w_z)\right)
      +c_2\rho \zeta
      \rule[-.5\baselineskip]{0pt}{\baselineskip}
      \right).
    %w_{t+1}-w_t
    %  = -2\epsilon + %c_1\|\epsilon\|\rho\nabla\lambdamax\left(\nabla^2 \ell(w_z)\right)
    %  +c_2\|\epsilon\|\rho^2 \zeta.
  \]
\end{theorem}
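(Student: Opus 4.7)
The plan is to Taylor-expand $\nabla\ell$ around $w_z$ to third order, recognize two of the resulting contributions as the claimed leading terms, and bound everything else by the stated residual. Translate so $w_z=0$ and write $g_t:=\nabla\ell(w_t)$, $\hat g_t:=g_t/\|g_t\|$, and $z_t:=w_t+\rho \hat g_t$, so that the SAM step is $w_{t+1}-w_t=-\eta\nabla\ell(z_t)$. With $T:=D^3\ell(0)$, the $B$-Lipschitz assumption on $D^3\ell$ and the integral form of Taylor's theorem give, for every $x$,
\begin{equation*}
\nabla\ell(x)=Hx+\tfrac12 T[x,x,\cdot]+R(x),\qquad \|R(x)\|\le B\|x\|^3/6.
\end{equation*}
Because $\lambda_1>\lambda_2$ makes $\lambda_1$ a simple eigenvalue of $H$ with eigenvector $e_1$, first-order eigenvalue perturbation gives the key identity $T[e_1,e_1,\cdot]=\nabla\lambdamax(\nabla^2\ell(w_z))$, which will convert the third-order contribution into the form appearing in the statement.

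Next, decompose $z_t=\xi+\delta$, where $\xi:=s_t\rho\bigl(1+\tfrac{\eta\lambda_1}{2-\eta\lambda_1}\bigr)e_1$ is the value of $z_t$ for the purely quadratic model $\ell_q$, and $\delta:=\rho(\hat g_t-s_te_1)$ (the non-normalization part of $\xi$ already equals $w_t=s_t(\beta_1/\lambda_1)e_1$). Substituting into the Taylor expansion and expanding the symmetric trilinear form yields
\begin{equation*}
-\eta\nabla\ell(z_t)=-\eta H\xi-\tfrac{\eta}{2}T[\xi,\xi,\cdot]-\eta\bigl(H\delta+T[\xi,\delta,\cdot]+\tfrac12 T[\delta,\delta,\cdot]+R(z_t)\bigr).
\end{equation*}
A short calculation using $\beta_1=\eta\rho\lambda_1^2/(2-\eta\lambda_1)$ converts $-\eta H\xi$ into the first stated term, and the eigenvalue identity converts $-\tfrac{\eta}{2}T[\xi,\xi,\cdot]$ into the second. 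The heart of the argument is then to bound $\|\delta\|$: applying the same Taylor expansion at $x=w_t$ gives $g_t=s_t\beta_1 e_1+O(B\|w_t\|^2)$, and since $\|w_t\|=\beta_1/\lambda_1=O(\eta\rho\lambda_1)$, the hypothesis $B\eta\rho\le 1$ keeps this correction a small fraction of $\beta_1$, so $\|g_t\|$ stays within a constant factor of $\beta_1$ and $\|\hat g_t-s_te_1\|=O(B\eta\rho)$, yielding $\|\delta\|=O(B\eta\rho^2)$.

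With $\|\delta\|$ in hand, the four residual pieces are bounded by the multilinear inequalities $\|H\delta\|\le\lambda_1\|\delta\|$, $\|T[\xi,\delta,\cdot]\|\le B\|\xi\|\|\delta\|$, $\|T[\delta,\delta,\cdot]\|\le B\|\delta\|^2$, and $\|R(z_t)\|\le B\|z_t\|^3/6$. Using $\|\xi\|,\|z_t\|\le\rho(1+\eta\lambda_1)$ together with $\|\delta\|=O(B\eta\rho^2)$ and collecting, these four contributions combine to at most $\eta\rho^2 B\bigl(\tfrac{(1+\eta\lambda_1)^3}{6}\rho+2(2\lambda_1+B\rho)\eta\bigr)$, matching the claimed residual with $\|\zeta\|\le 1$. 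The main obstacle is the normalization bookkeeping that produces the bound on $\|\delta\|$: the $B$-Lipschitz hypothesis and $B\eta\rho\le 1$ must be used carefully to keep $\|g_t\|$ bounded away from zero and to extract the correct order in $B$ for $\|\delta\|$. Every remaining step reduces to routine multilinear algebra.
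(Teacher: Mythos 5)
Your proposal takes essentially the same route as the paper. The paper's ``$\tilde{w}_u = w_t + s_t \rho e_1$'' is exactly your $\xi$ (after translating $w_z$ to the origin), and $w_u - \tilde{w}_u = \rho(\hat{g}_t - s_t e_1)$ is exactly your $\delta$; both proofs Taylor-expand $\nabla\ell$ around $w_z$, identify $D^3\ell(w_z)(e_1,e_1,\cdot)$ with $\nabla\lambdamax(\nabla^2\ell(w_z))$ via first-order perturbation theory for a simple eigenvalue, and bound $\|\delta\|$ by expanding $\nabla\ell(w_t)$ to show that $\hat{g}_t$ stays within $O(B\eta\rho)$ of $s_t e_1$. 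The only bookkeeping difference is that the paper splits the error as $\eta(\nabla\ell(\tilde{w}_u) - \nabla\ell(w_u))$ and bounds it by the Hessian operator norm along the segment, while you expand $\nabla\ell(\xi+\delta)$ in a single Taylor series and bound the $\delta$-dependent pieces term by term; both arrive at the same orders of magnitude and use the same hypothesis $B\eta\rho\le 1$ to control the normalization correction.
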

\begin{proof}
Let 
\[
w_u = w_t + \rho\frac{\nabla\ell(w_t)}{\|\nabla\ell(w_t)\|}
\]
so that 
\[
w_{t+1} - w_t = - \eta \nabla \ell(w_u).
\]
Let 
\[
\tw_u = w_t + s_t \rho e_t = w_z + s_t (\beta_1/\lambda_1+\rho) e_1.
\]
(It may be helpful to think of $\tw_u$ as what $w_u$ would have
been, if SAM used $\ell_q$ instead of $\ell$.)
We have
\begin{align}
\label{e:in.parts}
w_{t+1} - w_t
  = - \eta \nabla \ell(\tw_u) + \eta (\nabla \ell(\tw_u) 
                                    - \nabla \ell(w_u)).
\end{align}

First, we analyze $\nabla \ell(\tw_u)$.

The fundamental theorem of calculus implies
  \begin{align*}
    &\lefteqn{D^2 \ell(w_z+\epsilon e_1)(\cdot,\cdot)} \\
      & = D^2 \ell(w_z)(\cdot,\cdot) + \int_0^1
        D^3 \ell(w_z+x\epsilon e_1)(\epsilon e_1,\cdot,\cdot)\, dx \\
      & = D^2 \ell(w_z)(\cdot,\cdot) + \int_0^1
        \left( D^3 \ell(w_z)
          + \epsilon
          \left( D^3 \ell(w_z+x\epsilon e_1) -
        D^3 \ell(w_z)\right)\right) \, dx\, (e_1,\cdot,\cdot) \\
      & = D^2 \ell(w_z)(\cdot,\cdot) +
        D^3 \ell(w_z)(\epsilon e_1,\cdot,\cdot)
          + \epsilon
          \int_0^1\left( D^3 \ell(w_z+x\epsilon e_1) -
        D^3 \ell(w_z)\right) \, dx \, (e_1,\cdot,\cdot)\\
      & = D^2 \ell(w_z)(\cdot,\cdot) +
        D^3 \ell(w_z)(\epsilon e_1,\cdot,\cdot)
          + \frac{\epsilon^2B}{2}E(\cdot,\cdot),
  \end{align*}
where the linear operator $E$ satisfies $\|E\|\le 1$. Hence (using $E$ to also denote the corresponding matrix),
  \begin{align*}
      \nabla^2\ell(w_z+\epsilon e_1)
        &= \nabla^2\ell(w_z) + \sum_{i,j}D^3\ell(w_z)(\epsilon e_1,e_i,e_j)e_i\otimes e_j
          +\frac{\epsilon^2B}{2}E\\
        &=\sum_i\lambda_i e_i\otimes e_i + D^3\ell(w_z)(\epsilon e_1,e_1,e_1)e_1\otimes e_1 \\*
        & \qquad{}
        + \sum_{i>1}D^3\ell(w_z)(\epsilon e_1,e_1,e_i)(e_1\otimes e_i+ e_i\otimes e_1) \\*
        & \qquad{}
        + \sum_{i>1,j>1}D^3\ell(w_z)(\epsilon e_1,e_i,e_j)e_i\otimes e_j
          +\frac{\epsilon^2B}{2}E.
  \end{align*}
 
Integrating from $x=0$ to $x=\epsilon$, we have
  \begin{align}
  \nonumber
      \nabla \ell(w_z+\epsilon e_1)
        &= \nabla \ell(w_z) + \int_0^\epsilon
        \nabla^2\ell(w_z+xe_1) e_1\, dx \\
  \nonumber
        &= \int_0^\epsilon \left(\sum_i\lambda_i e_i\otimes e_i + D^3\ell(w_z)(xe_1,e_1,e_1)e_1\otimes e_1 \right. \\*
  \nonumber
        & \qquad\qquad \left. {}
        + \sum_{i>1}D^3\ell(w_z)(xe_1,e_1,e_i)(e_1\otimes e_i+ e_i\otimes e_1) \right. \\*
\nonumber
        & \qquad\qquad \left. {}
        + \sum_{i>1,j>1}D^3\ell(w_z)(xe_1,e_i,e_j)e_i\otimes e_j
            + \frac{x^2B}{2}E\right)e_1\, dx \\
  \nonumber
        &= \int_0^\epsilon \left(\lambda_1 e_1 + D^3\ell(w_z)(xe_1,e_1,e_1)e_1 +
        \sum_{i>1}D^3\ell(w_z)(xe_1,e_1,e_i)e_i
          + \frac{x^2B}{2}Ee_1 \right)\, dx \\
 % \label{e:grad_eone}
 \nonumber
        &= \epsilon\lambda_1 e_1 + \frac{\epsilon^2}{2}
          \sum_iD^3\ell(w_z)(e_1,e_1,e_i)e_i + \frac{\epsilon^3B}{6}Ee_1.
  \end{align}
Substituting $\epsilon=s_t (\beta_1/\lambda_1+\rho)$, the first term is
  \begin{align*}
    \epsilon\lambda_1 e_1
      & = s_t \left(\frac{\beta_1}{\lambda_1}+\rho\right) \lambda_1 e_1 \\
      & = s_t \left(\frac{\eta\rho\lambda_1^2}{2-\eta\lambda_1}
        +\rho\lambda_1\right) e_1 \\
      & = \frac{2\rho\lambda_1 s_t }{2-\eta\lambda_1} e_1 \\
      & = \frac{2\beta_1 s_t }{\eta\lambda_1} e_1.
  \end{align*}
Thus,
  \begin{align}
 \nonumber
    &\lefteqn{\eta \nabla \ell(\tw_u)} \\
    \nonumber
        &= \frac{2\beta_1 s_t }{\lambda_1} e_1 + \eta
        \frac{(\beta_1/\lambda_1+\rho)^2}{2}
          \sum_iD^3\ell(w_z)(e_1,e_1,e_i)e_i +
          \eta s_t \frac{(\beta_1/\lambda_1+\rho)^3B}{6}Ee_1 \\
          \label{e:grad_near_sam_update}
        &=  \frac{2\beta_1 s_t }{\lambda_1} e_1 +
          \frac{\eta(\beta_1/\lambda_1+\rho)^2}{2}
          \nabla\lambdamax(\nabla^2 \ell(w_z)) +
          \frac{\eta(\beta_1/\lambda_1+\rho)^3 B}{6} \zeta,
  \end{align}
where $\|\zeta\|\le 1$.

Now, we turn to bounding
$
|| \nabla \ell(\tw_u) - \nabla \ell(w_u) ||.
$
(We will show that $\tw_u$ and $w_u$ are both close to $w_z$, so
that the operator norm of the Hessian is not too big between them,
and then we will show that they are close to one another.)
First, by the triangle inequality,
\[
\max \{ || \tw_u - w_z ||, || w_u - w_z || \} \leq \beta_1/\lambda_1 + \rho.
\]
Since $D^3 \ell$ is $B$-Lipschitz, this implies that,
for every $w$ on the path from $w_u$ to $\tw_u$,
\begin{align}
\label{e:opnorm}
|| \nabla^2 \ell(w) || \leq \lambda_1 + B (\beta_1/\lambda_1 + \rho).
\end{align}
Furthermore, we have
\begin{align}
\label{e:w_tw}
|| w_u - \tw_u || 
 & = \rho\left\|s_t e_1 - \frac{\nabla \ell(w_t)}{|| \nabla \ell(w_t) ||}
   \right\|.
\end{align}

Next,
\begin{align*}
\nonumber
\nabla \ell(w_t)   
& =\nabla \ell\left(w_z + \frac{s_t \beta_1}{\lambda_1} e_1 \right)   \\
& =  \nabla\ell(w_z) + \int_0^1 \nabla^2\ell\left(w_z+x\left(\frac{s_t \beta_1}{\lambda_1} e_1\right)\right)\left(\frac{s_t \beta_1}{\lambda_1}e_1\right)\, dx
            \\
\nonumber
& =  \nabla^2\ell(w_z) \left(\frac{s_t \beta_1}{\lambda_1}e_1\right) \\* 
  \nonumber
 & \qquad {}
  + \int_0^1 \left(\nabla^2\ell\left(w_z+x\left(\frac{s_t \beta_1}{\lambda_1} e_1\right)\right)-\nabla^2\ell(w_z)\right) \left(\frac{s_t \beta_1}{\lambda_1}e_1\right)\, dx \\
  \nonumber
& = s_t \beta_1 e_1+ \frac{B \beta_1^2}{2 \lambda_1^2} \xi
\end{align*}
for $\xi \in \R^d$ with $\|\xi\| \le 1$.

This implies $|| \nabla \ell(w_t) || \geq \beta_1 - \frac{B\beta_1^2}{2\lambda_1^2}$, which
in turn implies

\begin{align*}
\left\|
 \frac{\nabla\ell(w_t)}{\|\nabla\ell(w_t)\|}  
 - s_t e_1
 \right\|
  & = \left\|\left(\frac{\beta_1}{\|\nabla\ell(w_t)\|} - 1\right)s_t e_1
  + \frac{B\beta_1^2}{2\lambda_1^2\|\nabla\ell(w_t)\|}\xi\right\| \\
  & \le \frac{B\beta_1/(2\lambda_1^2)}{1-B\beta_1/(2\lambda_1^2)} +\frac{B\beta_1}{2\lambda_1^2(1-B\beta_1/(2\lambda_1^2))} \\
  & = \frac{2B\beta_1}{2\lambda_1^2-B\beta_1}.
\end{align*}

Recalling \eqref{e:w_tw},
\[
|| w_u - \tw_u || \leq \frac{2B\beta_1\rho}{2\lambda_1^2-B\beta_1},
\]
and by \eqref{e:opnorm}, this implies
\[
|| \nabla \ell(w_u) - \nabla \ell(\tw_u) || 
\le \frac{2B\beta_1\rho\left(\lambda_1 + B\beta_1/\lambda_1 + B\rho\right)}{2\lambda_1^2-B\beta_1}.
%  \leq \frac{2B\beta_1\rho (\beta_1/\lambda_1 + \rho)}{2\lambda_1^2-B\beta_1}.
\]

Putting this together with \eqref{e:grad_near_sam_update} and \eqref{e:in.parts},
there is a $\zeta$ with $|| \zeta || \leq 1$ for which
\begin{align*}
w_{t+1} - w_t
& = -\frac{2\beta_1 s_t }{\lambda_1} e_1 -
          \frac{\eta(\beta_1/\lambda_1+\rho)^2}{2}
          \nabla\lambdamax(\nabla^2 \ell(w_z)) \\*
          & \qquad {} +
   \left(
          \frac{\eta(\beta_1/\lambda_1+\rho)^3 B}{6} 
    + \frac{2B\eta \beta_1\rho 
    \left(\lambda_1 + B\beta_1/\lambda_1 + B\rho\right)
    %(\beta_1/\lambda_1 + \rho)
    }{2\lambda_1^2-B\beta_1}
   \right)
    \zeta
\end{align*}
which, substituting the value of $\beta_1$ and
applying $B \eta \rho \leq 1$ and $\eta\lambda_1<1$, implies
\begin{align*}
w_{t+1} - w_t
& = % \frac{2\beta_1 s_t }{\lambda_1} 
      -2\frac{\eta \rho\lambda_1 s_t}{2-\eta\lambda_1} 
    e_1 -
          \frac{\eta}{2}\left(
            % \beta_1/\lambda_1
            \frac{\eta \rho\lambda_1}{2-\eta\lambda_1}
              +\rho\right)^2
          \nabla\lambdamax(\nabla^2 \ell(w_z)) \\*
          & \qquad {} +
   \eta\rho^2\left(
          \frac{(1+\eta \lambda_1)^3\rho}{6}
    + 2 (2\lambda_1 + B\rho)\eta
   \right) B
    \zeta.
\end{align*}

\end{proof}

\section{Additional Simulations}

Figure~\ref{f:sam_vs_grad_start_close} compares the
trajectories of SAM (in blue) and batch gradient descent
(in green)
applied to $\frac{w_1^2}{1 + w_2^2/2} + w_2^2/2$.
It may be helpful to think of this objective as a perturbation
of the quadratic objective $w_1^2 + w_2^2/2$, that has the same
minimum, but, as $w_2$ moves away from zero, is less sharp, in the sense
that its Hessian has a smaller operator norm.  When SAM and
GD are both started $(0.1, 0.1)$, with 
$\eta = 1/5$ and $\rho = 1$, GD dives toward the minimum of
$0$, where SAM's oscillation drives it toward less sharp
solutions with larger objective values.
\begin{figure}
    \centering
    \includegraphics[width=5in]{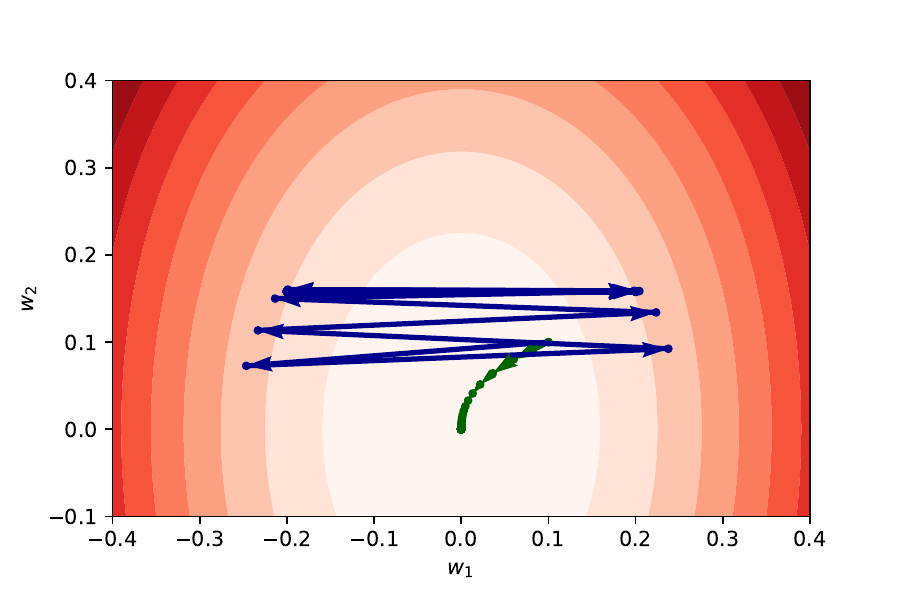}
    \caption{SAM (in blue) and gradient descent (in green) applied to $\frac{w_1^2}{1 + w_2^2/2} + w_2^2/2$ from an initial solution
    of $(0.1, 0.1)$ with $\eta = 1/5$ and $\rho = 1$.}
    \label{f:sam_vs_grad_start_close}
\end{figure}

Figure~\ref{f:sam_vs_grad_far} compares the trajectories of
SAM and GD in the same setting, except from the initial solution
$(1,1)$.  SAM behaves similarly to GD until they get close to the
origin, where SAM's oscillations carry it to a less sharp minimum
with a larger objective value.
\begin{figure}
    \centering
    \includegraphics[width=5in]{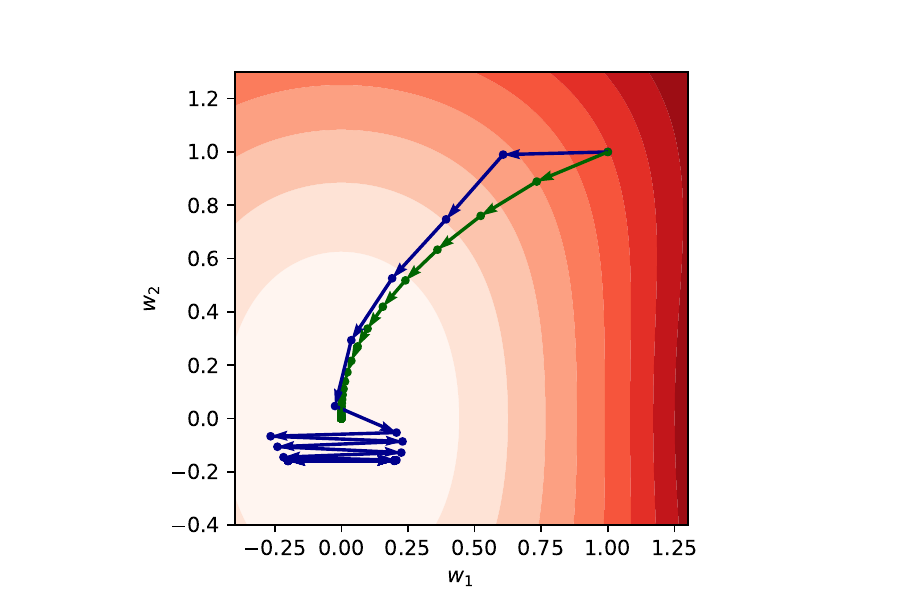}
    \caption{SAM (in blue) and gradient descent (in green) applied to $\frac{w_1^2}{1 + w_2^2/2} + w_2^2/2$ from an initial solution
    of $(1, 1)$ with $\eta = 1/5$ and $\rho = 1$.}
    \label{f:sam_vs_grad_far}
\end{figure}

Figure~\ref{f:sam_vs_sgd} compares the
trajectories of SAM and SGD, where
each stochastic gradient is obtained by
perturbing the gradient by
a sample from
$\cN(0, \sigma^2 I )$,
for $\sigma = \rho/(2 - \eta)$.  
The perturbed gradients make the iterates
of SGD sample a mix of solutions with varying
smoothness, where SAM systematically drifts
toward less sharp solutions.
\begin{figure}
    \centering
    \includegraphics[width=5in]{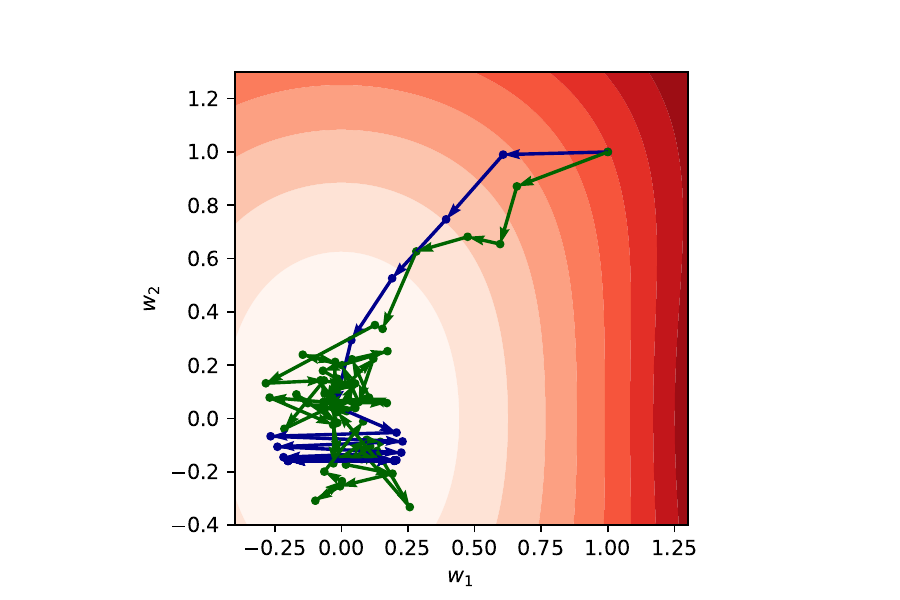}
    \caption{SAM (in blue) and SGD (in green) applied to $\frac{w_1^2}{1 + w_2^2/2} + w_2^2/2$ from an initial solution
    of $(1, 1)$ with $\eta = 1/5$, $\rho = 1$
    and $\sigma = \rho/(2 - \eta)$.}
    \label{f:sam_vs_sgd}
\end{figure}

\section{Conclusions and Open Problems}\label{s:conclusions}

Our main result, Theorem~\ref{t:medium}, shows that SAM with a
convex quadratic objective converges to a cycle that bounces across the minimum in the direction with the largest curvature. Theorem~\ref{t:drift} shows that for a locally quadratic loss, these oscillations allow gradient descent on the spectral norm of the Hessian of the loss. SAM uses one additional gradient measurement per iteration to compute a specific third derivative: the gradient of the second derivative in the leading eigenvector direction.  

% Theorem~\ref{t:medium} includes the assumption that $\lambda_1 > \lambda_2$.  
% Without this assumption
% its statement 
Without the assumption that $\lambda_1 > \lambda_2$, Theorem~\ref{t:medium} 
would necessarily be more complex, since, informally, if $\lambda_1 = \lambda_2$, all solutions
in the span of $e_1$ and $e_2$ are equivalent.  It should not be hard to remove this
assumption while complicating some of the proofs, but without significant changes to the main ideas.

This work raises several natural questions. 
First, how is the generalization behavior affected by drifting towards wide minima? There have been several empirical studies of stochastic gradient methods for deep networks that suggest favorable generalization performance of wide minima~\citep{kmnst-olbtdlggsm-16,ccslbbcsz-esbgdiwv-16}.
There have been some analyses aimed at understanding this phenomenon based on information theoretic arguments~\citep{hvc-knns-93,hs-fm-97,NEURIPS2019_05ae14d7} and PAC-Bayes arguments~\citep{NIPS2001_98c72428,dr-cngbdsnnmmptd-17}.
It is clear that any argument about generalization properties must take account of how an algorithm solves an optimization problem over a parameterized class of functions, since wide minima are a property of a parameterization~\citep{dpbb-smcgfdn-17}.

Second, how does gradient descent on the spectral norm of the Hessian behave, particularly in the highly overparameterized setting of deep networks? When other optimization tools, such as momentum, are incorporated, how does this affect the behavior of SAM? What is the nature of SAM's solutions for losses, like the logistic loss, that are minimized at infinity?

On the technical side, it is straightforward to extend Lemma~\ref{l:u} to a local version, showing that SAM with a locally quadratic loss converges to a neighborhood of the stationary points of a function $J$ defined in terms of the Hessian. It is less straightforward to show that SAM avoids the suboptimal stationary points of $J$. It seems likely that this is true for a stochastic version of the SAM updates, and the techniques developed by~\citet{ge2015escaping,fang2019sharp} should be useful here,
which could lead to a nonasymptotic counterpart
of results of \citet{wen2022does} for a stochastic (batch-size 1) version of SAM.
% The independent work of \citet{wen2022does} mentioned earlier 

Finally, can other higher derivatives be computed in the same parsimonious way as SAM? Are there related minimization methods that target other kinds of minima, for instance, by optimizing other measures of width of a minimum?

% Acknowledgements and Disclosure of Funding should go at the end, before appendices and references

%JMLR \acks{Thanks to Hossein Mobahi for helpful comments.
% PB gratefully acknowledges the support of the NSF through grants
% DMS-2023505 and DMS-2031883 and of Simons Foundation award \#814639.}
\section*{Acknowledgements}

Thanks to Hossein Mobahi for helpful comments.
% PB gratefully acknowledges the support of the NSF through grants
% DMS-2023505 and DMS-2031883 and of Simons Foundation award \#814639.

\appendix

\vskip 0.2in
%arxiv-begin
\bibliographystyle{plainnat}
%arxiv-end
\bibliography{bib}

\end{document}